\newcommand{\one}{\mathds{1}}
\newcommand{\argmin}[1]{\underset{#1}{\operatorname{arg}\!\operatorname{min}}\;}
\newtheorem{theorem}{Theorem}
\newtheorem{proposition}{Proposition}
 \newtheorem{remark}{Remark}
\newtheorem*{assumption*}{Assumption}
\newtheorem{assumption}{Assumption}
\newtheorem{lemma}{Lemma}
\def\argmin{\mathop{\rm argmin}}
\newcommand{\Err}{{\rm Err}}
\DeclareMathOperator{\Var}{Var}
\newcommand{\ie}{{\em i.e.,~}}
\newcommand{\wrt}{{\em w.r.t.~}}
\newcommand{\iid}{{\rm i.i.d.~}}
\begin{document}

\title{Variance function estimation in regression model via aggregation procedures}
 \author{Ahmed Zaoui\footnote{Ahmed.Zaoui@univ-eiffel.fr}\\
\small{LAMA, UMR-CNRS 8050,}\\
\small{Universit\'e Gustave Eiffel}\\}
 \date{}
 
\maketitle

\begin{abstract}
In the regression problem, we consider the problem of estimating the variance function by the means of aggregation methods. 
We focus on two particular aggregation setting: Model Selection aggregation (\texttt{MS}) and Convex aggregation (\texttt{C}) where the goal is to select the best candidate and to build the best convex combination of candidates respectively among a collection of candidates. In both cases, the construction of the estimator relies on a two-step procedure and requires two independent samples. The first step exploits the first sample to build the candidate estimators for the variance function by the residual-based method and then the second dataset is used to perform the aggregation step. We show the consistency of the proposed method with respect to the $L^2$-error both for \texttt{MS} and \texttt{C} aggregations. We evaluate the performance of these two methods in the heteroscedastic model and illustrate their interest in the regression problem with reject option.
\\{\bf Keywords:} Regression, Conditional variance function, Aggregation
\end{abstract}
\section{Introduction}
\label{sec:intro}
Building efficient estimation of the level of noise is highly important for real applications and statistical analysis. 
For instance testing and confidence intervals are two historical statistical problems where a bad calibration of the noise may lead to bad conclusions. The range of use of the variance structure in the data is even wider such as in selection the optimal kernel bandwidth~\cite{Fan92}, estimation correlation structure of the heteroscedastic spatial data~\cite{Opsomer_Ruppert_Wand_Holst_Hossjer99}, estimation of the signal-to-noise ratio~\cite{Verzelen_Gassiat18}, or choosing optimal design~\cite{Muller_Stadtmuller87} and finds important applications for instance is finance with the problems of measuring volatility or risk~\cite{Anderson_Lund97} or long-term stock returns~\cite{Mammen_Nielsen_Scholz_Sperlich19}. In our case we highlight the interest of providing an efficient estimation of the variance function in the problem of regression with regret option where the good calibration of the rejection rule is highly dictated by the efficiency of the estimator of the noise level~\cite{Denis_Hebiri_Zaoui20}.  
We focus on the regression problem: we denote by $(X,Y)\in \mathbb{R}^{d}\times\mathbb{R}$ the couple of random variables where $X$ is the feature vector and $Y$ is the response variable such that 
\begin{equation*}
\label{mod:General}
Y= f^{*}(X)+W\enspace.
\end{equation*}
Here $W$ is the noise and is such that $\mathbb{E}[W|X]=0$ and $\mathbb{E}[W^2]<\infty$. In particular for any $x\in \mathbb{R}^d$ we write  $f^{*}(x)=\mathbb{E}\left[Y|X=x\right]$ to denote the regression function and $\sigma^{2}(x)=\Var(Y|X=x)=\mathbb{E}\left[(Y-f^{*}(X))^2|X=x\right]$ to denote the conditional variance function.

Despite the popularity of the problem of estimating the noise level, there remains much to do. In particular, we study in the present paper this problem from the aggregation perspective and build estimators of the conditional variance function based on Model Selection (\texttt{MS}) and Convex (\texttt{C}) aggregations. We study their consistency properties as well as their numerical performance. Our work is motivated by recent research in regression with reject option~\cite{Denis_Hebiri_Zaoui20}. There it has been observed that the rejection rule is fully determined by the variance structure. We hope that aggregation will improve the accuracy of the method.
\subsection{Related work}
Our literature review consists of three related fields:

\medskip

{\bf Conditional variance estimation}:
The problem of estimating the regression function is classical and widely studied, see for example~\cite{Biau_Devroye15,GKKW02,scornet2015, Stone77, Tsybakov08} and references therein.

Even though the problem of estimation of the conditional variance function is less studied, it has been considered in several works that can be cast into two groups according to the nature of the design (fixed or random).

When the design is fixed, the estimation of $\sigma^2$ has been studied mainly via residual-based methods~\cite{Hall_Caroll89,Hardle_Tsybakov97, Ruppert_Wand_Holst_Hosjer97} and difference-based methods~\cite{BL07,Brown_Levine07, Muller_Stadtmuller87, Wang_Brown_Cai_Levine08}. Difference-based estimators do not require the estimation of the regression function $f^*$. The first difference-based estimators has been developed by the authors in~\cite{Muller_Stadtmuller87}. They considered an initial variance estimates which are squared weighted sums of $m$ observations neighboring the fixed point where the variance function is to be estimated. The authors showed that the proposed initial variance estimates are not consistent. To solve this problem, they smoothed them with a kernel estimate. In~\cite{BL07}, the authors presented a class of non-parametric variance estimators based on different sequences and local polynomial estimation and established asymptotic normality. The authors in~\cite{ Wang_Brown_Cai_Levine08} were interested in the effect of the unknown mean on the estimation of the variance function and proved numerically that the residual-based method performs better than the first-order-difference-based estimator  when the unknown regression function $f^*$ is very smooth.

In this work we rather focus on random design. Less methods have been proposed to estimate the conditional variance function in this case. Most classical methods are the direct and the residual-based:
\begin{enumerate}
\item  \textbf{The direct method}: a simple decomposition the conditional variance function $\sigma^2$ is rewritten as the difference of the first two conditional moments, $\sigma^{2}(x)=\mathbb{E}[Y^2|X=x]-\left(\mathbb{E}[Y|X=x]\right)^2$. The direct method consists in estimating the two terms in the right side separately, see for example~\cite{Fan_Yao98, Hardle_Tsybakov97}. To be more specific, the direct estimator of $\sigma^2$ has the following form
\begin{equation*}
\hat{\sigma}^{2}_{d}(x)=\hat{g}(x)-(\hat{f}(x))^2\enspace ,
\end{equation*}
where $\hat{g}$ and $\hat{f}$ are estimators of $\mathbb{E}[Y^2|X=x]$ and $f^*$ respectively. The main drawback of this approach is that it is not always nonnegative for example if different smoothing parameters are used in estimating those terms and adaptation to the unknown regression function $f^*$ is still not available. The authors in~\cite{Hardle_Tsybakov97} proposed a local polynomial regression estimates of those terms using the same bandwidth and the same kernel function. They established the asymptotic normality of local polynomial estimators of the regression function and the variance function.
\item \textbf{The residual-based method}: this approach consists on two steps. First, one estimates the regression function and  computes the squared residuals $\hat{r}=(Y-\hat{f}(X))^2$ where $\hat{f}$ is the estimator of $f^*$.
Second, we estimate the variance function by  solving the regression problem when the input is the feature $X$ and the output variable is the residuals $\hat{r}$ . For more details, see~\cite{Fan_Yao98, Neumann94,Ruppert_Wand_Holst_Hosjer97}. It exists many ways to study this method. For instance, the authors in~\cite{Fan_Yao98} applied a local linear regression in both steps and showed that their estimator is fully regression-adaptive to the unknown regression function. Using the  local polynomial regression can be negative when the bandwidths are not selected appropriately. As a solution to this, the authors in~\cite{Yu_Jones04} proposed estimators based on a localized normal likelihood, using a standard local linear form for estimating the mean and a local log-linear form for estimating the variance. In~\cite{Ziegelmann02}, the authors introduced an exponential estimator of the conditional variance in the second step to ensure the nonnegativity of the estimator of $\sigma^2$. 
The authors in~\cite{Xu_Phillips11} used a reweighted local constant estimator (kernel estimator) based on maximizing the empirical likelihood subject to a bias-reducing moment restriction. Moreover, such estimators have the form $\hat{\sigma}^{2}(X)=\sum_i \omega_i(X)(Y_i-\hat{f}(X_i))^2$  where $\omega_i(X)$ are weight functions~\cite{Denis_Hebiri_Zaoui20, KulikWichelhaus11}. Recently the authors in~\cite{Denis_Hebiri_Zaoui20} used the previous estimator and focused on estimating the regression function and the variance function respectively by $k$NN. Under mild assumptions, they provided the rate of convergence of the $k$NN estimator of the conditional variance function in supremum norm. The residual-based method can be regarded as a generalized difference-based estimator. For more details, see~\cite{Fan_Yao98}.
\end{enumerate}

In this paper, we focus on the residual-based method to estimate the variance function since they appear more tractable. In particular, we develop an aggregation procedures for this task. 

\medskip

{\bf Aggregation methods}: 
Aggregation is a popular approach in statistics and machine learning. This technique is well known to estimate the regression function in the homoscedastic or heteroscedastic model. We refer the
reader to the baseline articles~\cite{Audibert09,Bunea_Tsybakov_Wegkamp07, Juditsky_Nemirovski00,Tsybakov03,Tsybakov14,Yuhong04}. Given a set of estimators of regression function $f^*$, the aggregation constructs a new estimator, called the aggregate, which mimics in a certain sense the behavior of the best estimator in a class of estimates. There are several popular types of aggregation and we focus on two: the model selection aggregation (\texttt{MS}) which allows to select the best estimator from the set; the convex aggregation (\texttt{C}) where the goal is to select the best convex combination of functions in the set. In general, the aggregation procedures are based on sample splitting, that is, the original data set $\mathcal{D}_N$ is split into two independent data sequences $\mathcal{D}_k$ and $\mathcal{D}_l$ with $N=l+k\geq 1$. The first subsample $\mathcal{D}_k$ is exploited to build $M>1$ competing estimators of the regression function $f^*$ and $\mathcal{D}_l$ is used to aggregate them. Most of the work has focused on fixing the first sample, resulting in fixed estimators (the estimators are then seen as fixed functions). Under mild assumptions, the auhors in~\cite{Tsybakov03} showed that the optimal rates for \texttt{MS} and \texttt{C} aggregation \wrt $L^2$-error in gaussian regression model are of order $\frac{\log(M)}{N}$, and $\frac{M}{N}$ if $M\leq \sqrt{N}$, respectively $\sqrt{\frac{1}{N}\log\left(\frac{M}{\sqrt{N}}+1\right)}$ if $M\geq \sqrt{N}$ in both cases.


In this paper we consider aggregation methods for the conditional variance estimation. Up to our knowledge, such approaches have not been considered yet for this problem.

\medskip
{\bf Reject option}: Reject option is important in nonparametric statistic since it helps avoiding uncertain prediction. 
It has been initially introduced in the classification setting~\cite{Chow57,Chow70, Denis_Hebiri19, Herbei_Wegkamp06, Lei14, Naadeem_Zucker_Hanczar10, Vovk_Gammerman_Shafer05} where it has shown important improvements in the quality of prediction.
It has been recently developed in the case of the regression model in the case where the rejection rate is controlled by the practitioner~\cite{Denis_Hebiri_Zaoui20}. There the authors provided a characterization of the optimal rule (knowing the true distribution of the data) and demonstrated that it relies on thresholding the conditional variance function. More formally, it is defined as follows: given a rejection rate $\varepsilon \in(0,1)$
\begin{equation*}
    \Gamma^{*}_{\varepsilon}(x):=\begin{cases}
  \left\{f^{*}(x)\right\}  & \text{if} \;\;
  {\sigma}^2(x)\leq F_{{\sigma}^2}^{-1}(1-\varepsilon)\\
  \emptyset      & \text{otherwise} \enspace,
  \end{cases}
\end{equation*}
where $F_{{\sigma}^2}^{-1}$ is the generalized inverse of the cumulative distribution function (CDF) $F_{{\sigma}^2}$ of $\sigma^2(X)$.
As can be observed, this optimal solution depends on several parameters: the rejection rate $\varepsilon$ that is known in advance, the CDF $F_{{\sigma}^2}$ that is efficiently estimated the empirical CDF, the regression function $f^{*}(x)$ for which efficient estimators exist in the literature, and the conditional variance $\sigma^2$. This last quantity is less considered in the literature and our goal is to build accurate estimators of the conditional variance that rely on aggregation. The ultimate purpose is then to make a sharper estimation of the optimal rule in the case of the rejection option in the regression setting.
\subsection{Main contribution}
We develop  the notions of model selection aggregation and convex aggregation to estimate the conditional variance function. To our knowledge, this work is the first to deal with this setting. We consider two independent datasets: the first will be used to build the initial estimators of the variance function and the second to aggregate them. We called these estimators the \texttt{MS}-estimator and \texttt{C}-estimator. We consider the residual-based method to build the initial estimators which is based on estimating the regression function in the first step. We focus on estimating the regression function by model selection aggregation and convex aggregation. The major part of the paper is then devoted to show the upper-bounds of $L^2$-error of the \texttt{MS}-estimator and \texttt{C}-estimator when the initial estimators can be arbitrary or verify very weak conditions such that boundeness. We establish that the rate of convergence for \texttt{MS} and \texttt{C} procedures is of order $O(\log(M_1)/N)^{1/8})$ when $Y$ is unbounded and is of order $O(\log(M_1)/N)^{1/4})$ when $Y$ is bounded. Finally, we obtain numerical results which show the performance of our procedures.

\subsection{Outline}
The paper is organized as follow. In the next section,
the aggregation problems, the model selection and convex aggregations, are described in detail.
Section~\ref{sec:Main_Results} is focused on investigating the upper-bounds for the $L^2$-error of our procedures. Finally, Section~\ref{sec:Numerical} presents a numerical comparison of the proposed method w.r.t. the heteroscedastic model as well as a direct application to the regression framework with reject option.\\
\\ \textbf{Notations.} 
 We introduce some notation that is used throughout this paper.  Let $p\geq 2$ be an integer, the set of integers $\{1,\cdots, p\}$ is denoted $[p]$. 
Let $N$ an integer. For any function $f:\mathbb{R}^{d}\rightarrow \mathbb{R}$, we define the empirical norm $\|f\|_{N}^{2}=\frac{1}{N}\sum_{i=1}^{N}|f(X_i)|^{2}$ and the supremum norm $\|f\|_{\infty}=\sup_{x\in \mathbb{R}^d}|f(x)|$. Moreover, we denote by $\Lambda^{p}:=\{\lambda\in \mathbb{R}^{p}: \lambda_j\geq 0, \sum_{j=1}^{p}\lambda_j=1\}$ the simplex.  Let $\|\cdot\|_{1,p}$ denote the $\ell_1$ norm on $\mathbb{R}^{p}$, that is, $\|\lambda\|_{1,p}:=\sum_{j=1}^{p}|\lambda_j|$. For the sake of simplicity, let $Z=(Y-f^{*}(X))^2$.
\section{Aggregation estimators}
\label{sec: Aggreg_Estimators}
In this section, we describe an estimation algorithm of the variance function $\sigma^2$ by aggregation. In particular, we focus on two types of aggregations: the model selection aggregation (\texttt{MS}), and the convex aggregation (\texttt{C}). These aggregation problems, (\texttt{MS}) and (\texttt{C}), have been considered to estimate the unknow regression function in the regression model. The objective is to estimate $f^*$ by a combination of elements of a known set called dictionary made up of deterministic functions or preliminary estimators. The collection of estimators or algorithms is given and can be of parametric, nonparametric or semi-parametric nature. Given a set of estimators, the \texttt{MS}-aggregation consists in constructing a new estimator which is approximately as good as the best estimator in the set, while the objective of \texttt{C}-aggregation is to construct a new estimator which is approximately as good as the best convex combination of the elements in the set, for more details see~\cite{Audibert09,Bunea_Tsybakov_Wegkamp07, Juditsky_Nemirovski00,Tsybakov03,Tsybakov14,Yuhong04}. Besides, to construct an aggregate of $\sigma^2$, we first introduce two independent learning samples: $\mathcal{D}_{n}=\{(X_i,Y_i), i=1,\cdots,n\}$ 
and $\mathcal{D}_{N}=\{(X_{i},Y_{i}), i=n+1,\cdots,n+N\}$ which consist of respectively $n$ and $N$ \iid copies of $(X,Y)$.
\subsection{Model selection aggregation}
\label{sec:AggreMS}
In this first paragraph, we detail how we perform \texttt{MS}-aggregation in order to estimate of the conditional variance function $\sigma^{2}$ by \texttt{MS}. It consists in two steps: one step for the estimation of the regression function $f^*$ and a second one devoted to the estimation of $\sigma^2$. More precisely, in the first one we bluid $M_1$ estimators of the regression function $\hat{f}_1,\cdots,\hat{f}_{M_1}$ based on $\mathcal{D}_{n}$ with $ 2\leq M_1<\infty$. Then, we use the second dataset $\mathcal{D}_{N}$ to estimate  $f^{*}$ by \texttt{MS}: we select the optimal index, denoted $\hat{s}$ as follows
\begin{eqnarray}
\label{est:hats}
\hat{s}\in \argmin_{s\in[M_1]}\hat{\mathcal{R}}_{N}(\hat{f}_{s}), \enspace \text{where} \enspace \hat{\mathcal{R}}_{N}(\hat{f}_{s})=\frac{1}{N}\sum_{i=1}^{N}|Y_i-\hat{f}_{s}(X_i)|^{2}\enspace, 
\end{eqnarray}
and the aggregate of the regression function, denoted by $\hat{f}_{\texttt{MS}}$, is then given by
\begin{equation}
\hat{f}_{\texttt{MS}}:=\hat{f}_{\hat{s}}.
\label{est: hatfMS}
\end{equation}
In the second step, given the estimator $\hat{f}_{\texttt{MS}}$ builded on $\mathcal{D}_N$, we construct using back the sample $\mathcal{D}_{n}$ $M_2$ estimators of the variance function $\sigma^2$, denoted $\hat{\sigma}^{2}_{\hat{s},1},\cdots,\hat{\sigma}^{2}_{\hat{s},M_2}$, by residual-based method with $2\leq M_2<\infty$. Finally, based on $\mathcal{D}_{N}$ again, we select the optimal single, denoted $\hat{m}$, as follows
\begin{eqnarray*}
\hat{m}\in \argmin_{m\in [M_2]}\hat{R}_{N}(\hat{\sigma}^{2}_{\hat{s},m})\enspace \text{where}\enspace \hat{R}_{N}(\hat{\sigma}^{2}_{\hat{s},m})= \frac{1}{N}\sum_{i=1}^{N}|\hat{Z}_{i}-\hat{\sigma}^{2}_{\hat{s},m}(X_i)|^{2}\enspace
\end{eqnarray*}
with $ \hat{Z}_{i}=\left(Y_{i}-\hat{f}_{\texttt{MS}}(X_i)\right)^2$. Therefore,  the aggregate of the variance function, called \texttt{MS}-estimator and denoted $\hat{\sigma}^{2}_{\texttt{MS}}$, is defined as
\begin{equation}
\hat{\sigma}^{2}_{\texttt{MS}}:=\hat{\sigma}^{2}_{\hat{s},\hat{m}}.
\label{est:MS}
\end{equation} 
\subsection{Convex aggregation}
\label{sec:AggreCM}
Convex aggregation procedures for nonparametric regression are discussed in~\cite{ Audibert04,Bunea_Tsybakov_Wegkamp07, Tsybakov03}. We describe here an algorithm for aggregating estimates of the conditional variance function $\sigma^2$ by \texttt{C}-aggregation. As for \texttt{MS}-aggregation, the construction of the aggregate of $\sigma^2$ needs two independent datasets $\mathcal{D}_{n}$ and $\mathcal{D}_{N}$. The aggregation still proceeds in two steps: one for estimating $f^*$ and the second for the estimation of $\sigma^2$. Each step is again decomposed in two parts. Firstly, we consider $M_1$ estimators of the regression function $f^*$, $\{\hat{f}_1,\dots, \hat{f}_{M_1}\}$, based on $\mathcal{D}_{n}$, and for any $\lambda \in \Lambda^{M_{1}}$ we define the linear combinations $\hat{f}_{\lambda}$ 
$$
\hat{f}_{\lambda}=\sum_{j=1}^{M_1}\lambda_{j}\hat{f}_{j}.
$$
Then, aggregates of the regression function based on the sample $\mathcal{D}_{N}$ have the form
\begin{equation}
\hat{f}_{\texttt{C}}:=\hat{f}_{\hat{\lambda}}=\sum_{j=1}^{M_1}\hat{\lambda}_{j}\hat{f}_{j}\enspace,
\label{est:fhatCM}
\end{equation}
where the estimator $\hat{\lambda}$ is defined by
\begin{equation*}
\hat{\lambda}\in \argmin_{\lambda\in \Lambda^{M_1}}\hat{\mathcal{R}}_{N}(\hat{f}_{\lambda}).
\end{equation*}
Once $\hat{f}_{\texttt{C}}$ is obtained, we focus on the estimation of $\sigma^2$. Based on the sample $\mathcal{D}_{n}$, we build $M_2$ estimators for the conditional variance function by the residual-based method, denoted $\hat{\sigma}_{\hat{\lambda},1}^{2},\cdots, \hat{\sigma}_{\hat{\lambda},M_2}^{2} $, and for any $\beta\in \Lambda^{M_2}$ we  define $\hat{\sigma}^{2}_{\hat{\lambda},\beta}$ as follows
$$
\hat{\sigma}^{2}_{\hat{\lambda},\beta}=\sum_{j=1}^{M_2}\beta_{j}\hat{\sigma}^{2}_{\hat{\lambda},j}\enspace .$$
Finally, based on $\mathcal{D}_{N}$, the aggregate estimate for $\sigma^2$ is given by
\begin{equation}
\hat{\sigma}^{2}_{\texttt{C}}:=\hat{\sigma}^{2}_{\hat{\lambda},\hat{\beta}}\enspace,
\label{est:CM}
\end{equation} 
where the estimator $\hat{\beta}$ is defined by
\begin{equation*}
\hat{\beta}\in \argmin_{\beta\in\Lambda^{M_2}}\hat{R}_{N}(\hat{\sigma}^{2}_{\hat{\lambda},\beta}), \enspace \text{where}\enspace \hat{R}_{N}(\hat{\sigma}^{2}_{\hat{\lambda},\beta})= \frac{1}{N}\sum_{i=1}^{N}|\hat{Z}_{i}-\hat{\sigma}^{2}_{\hat{\lambda},\beta}(X_i)|^{2}\enspace
\end{equation*}
with $\hat{Z}_{i}=(Y_i-\hat{f}_{\texttt{C}}(X_i))^{2}$. We called $\hat{\sigma}^{2}_{\texttt{C}}$ the \texttt{C}-estimator.
\section{Main results}
\label{sec:Main_Results}
This section is devoted to studying the $L^2$-error of \texttt{MS}-estimator and \texttt{C}-estimator. Firstly, we introduce general conditions required on the model in Section~\ref{sec:Assumptions}. Secondly, we show the consistency of our methods in Sections~\ref{sec:boundMsEst} and~\ref{sec:boundCEst}.
\subsection{Assumptions}
\label{sec:Assumptions}
The following assumptions are the bedrock of our theoretical analysis:
\begin{assumption}
\label{ass:fstar_bound}
The functions $f^*$ and $\sigma^2$ are bounded.
\end{assumption}
\begin{assumption}
\label{ass:Y_bound}
 $Y$ is bounded or $Y$ satisfies the gaussian model
 \begin{equation}
 \label{modelGaussian}
Y= f^{*}(X)+\sigma(X)\xi,
\end{equation}
where $\xi$ is independent of $X$ and distributed according to a standard normal distribution.
\end{assumption}
These assumptions are relatively weak and play a key role in our approach. They allow to use the Hoeffding's inequality in the case of boundness of $Y$ or $\xi$. In particular, it is important to emphasize that Assumptions~\ref{ass:fstar_bound} and~\ref{ass:Y_bound} guarantee that the variable $Y-f^{*}(X)$ is sub-Gaussian (see Lemma~\ref{lem:HoeffdingLemma} in the case where $Y$ is bounded).
\subsection{Upper bound for $\hat{\sigma}^{2}_{\texttt{MS}}$}
\label{sec:boundMsEst}
We study the $L^2$-error of the \texttt{MS}-estimator $\hat{\sigma}^{2}_{\texttt{MS}}$. Let $\mathcal{R}(\hat{f}_{s})=\mathbb{E}\left[|Y-\hat{f}_{s}(X)|^{2}\right]$ for all $s\in[M_1]$. We define $s^{*}$ as follows
 \begin{equation}
 \label{est:star(s)}
 s^{*}\in \argmin_{s\in[M_1]}\mathcal{R}(\hat{f}_{s}) \enspace .
 \end{equation} 
Besides, we need the following assumptions in the case of \texttt{MS}-aggregation:
\begin{assumption}
\label{ass:hatfs_hatsigma_hats_m_bound}
For all $s\in [M_1]$  and all $m\in [M_2]$ , $\hat{f}_{s}$ and $\hat{\sigma}^{2}_{s,m}$ are bounded a.s $\mathcal{D}_n$. More precisely, there exist two positive constants $K_1$ and $K_2$ such that for all $n\in \mathbb{N}^{*}$ 
\begin{equation*}
\max_{s\in [M_1]}\|\hat{f}_{s}\|_{\infty}\leq K_1, \enspace \text{and} \enspace \max_{(s,m)\in [M_1]\times [M_2]}\|\hat{\sigma}_{s,m}^{2}\|_{\infty}\leq K_2.
\end{equation*}
\end{assumption}
\begin{assumption}[Separability hypothesis]
\label{ass:SeperHyp}
There exists $\delta_0 >0$ such that $$\delta^*=\min_{s\neq s^*}\{|\mathcal{R}(\hat{f}_{s})-\mathcal{R}(\hat{f}_{s^*})|\}>\delta_0\enspace.$$
\end{assumption}
Both assumptions are used to control the $L^2$-error of the \texttt{MS}-estimator $\hat{\sigma}^{2}_{\texttt{MS}}$. Assumption~\ref{ass:hatfs_hatsigma_hats_m_bound} describes the boundedness of the estimators. In our construction, the constants $K_1$ and $K_2$  do need not to be known. Let $\mathbb{E}$ be the expectation which is taken with respect to both $X$ and the samples $\mathcal{D}_{n}$ and $\mathcal{D}_{N}$. We establish the following result
\begin{theorem}
\label{thm:Risk_MS_sigma3}
Let $\hat{f}_{\texttt{MS}}$ and $\hat{\sigma}^{2}_{\texttt{MS}}$ be two \texttt{MS}-estimators of $f^*$ and $\sigma^2$ defined in Eq.~\eqref{est: hatfMS} and~\eqref{est:MS} respectively. Then, under Assumptions~\ref{ass:fstar_bound}-~\ref{ass:SeperHyp}, there exist two absolute constants $C_1>0$ and $C_2>0$ such that 
\begin{multline*}
\label{ineq:BoundMSSigma}
\mathbb{E}\left[|\hat{\sigma}_{\texttt{MS}}^{2}(X)-\sigma^{2}(X)|^2\right]\leq 
\mathbb{E}\left[\min_{m\in[M_2]}\mathbb{E}_{X}\left[|\hat{\sigma}^{2}_{s^*,m}(X)-\sigma^{2}(X)|^{2}\right]\right] +C_1\left\{\min_{s\in[M_1]}\mathbb{E}\left[\|\hat{f}_{s}-f^{*}\|_{N}^{2}\right]\right\}^{1/2p}+\\ C_2\left(\frac{\log(M_{1})}{N}\right)^{1/4p},
\end{multline*}
with $p=1$ if $Y$ is bounded and $p=2$ otherwise.
\end{theorem}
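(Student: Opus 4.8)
The plan is to read $\hat\sigma^{2}_{\texttt{MS}}$ as the output of a model–selection step applied to the regression of the pseudo–responses $\hat Z_i=(Y_i-\hat f_{\texttt{MS}}(X_i))^{2}$ on $X_i$ over the dictionary $\{\hat\sigma^{2}_{\hat s,m}\}_{m\in[M_2]}$, and to pay separately for the two features that make this step non–standard: the dictionary and the pseudo–responses both depend on the random index $\hat s$ selected in the first step, and, even with $\mathcal D_n$ and $\hat s$ frozen, $\mathbb E[\hat Z\mid X,\mathcal D_n]=\sigma^{2}(X)+(\hat f_{\hat s}(X)-f^{*}(X))^{2}\neq\sigma^{2}(X)$. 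I would isolate these two discrepancies so that they are governed only by the $L^{2}$–error of $\hat f_{\texttt{MS}}$, and then control that error through the first aggregation step.

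\textbf{Step 1 (the first selection).} Conditionally on $\mathcal D_n$ the $\hat f_s$ are deterministic, so by Assumptions~\ref{ass:fstar_bound},~\ref{ass:Y_bound} and~\ref{ass:hatfs_hatsigma_hats_m_bound} the variables $|Y_i-\hat f_s(X_i)|^{2}$ are, uniformly in $s$, sub–exponential, and bounded when $Y$ is bounded (see Lemma~\ref{lem:HoeffdingLemma}). A Hoeffding/Bernstein bound and a union bound over $s\in[M_1]$ give $\mathbb E\big[\max_{s\in[M_1]}|\hat{\mathcal R}_N(\hat f_s)-\mathcal R(\hat f_s)|\big]\le C\,(\log M_1/N)^{1/(2p)}$, the extra loss for $p=2$ coming from first truncating $Y$ at a threshold optimised against the Gaussian tail. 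Two consequences follow: on the event where this maximum is $<\delta_0/2$, the Separability hypothesis (Assumption~\ref{ass:SeperHyp}) forces $\hat s=s^{*}$, hence $\hat f_{\texttt{MS}}=\hat f_{s^{*}}$ and the second–step dictionary is $\{\hat\sigma^{2}_{s^{*},m}\}_{m}$, while off this event Assumptions~\ref{ass:fstar_bound} and~\ref{ass:hatfs_hatsigma_hats_m_bound} give the crude bound $|\hat\sigma^{2}_{\texttt{MS}}(X)-\sigma^{2}(X)|\le K_2+\|\sigma^{2}\|_{\infty}$ against a probability of smaller order than the announced rate; and the same comparison, with $\mathbb E[\min_s\,\cdot\,]\le\min_s\mathbb E[\,\cdot\,]$ for the oracle, yields $\mathbb E[\|\hat f_{\texttt{MS}}-f^{*}\|_N^{2}]\le\min_{s\in[M_1]}\mathbb E[\|\hat f_s-f^{*}\|_N^{2}]+C\,(\log M_1/N)^{1/(2p)}$.

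\textbf{Step 2 (second selection as a perturbed oracle inequality, and conclusion).} On $\{\hat s=s^{*}\}$, conditionally on $\mathcal D_n$, set $Z_i=(Y_i-f^{*}(X_i))^{2}$, $\bar\sigma^{2}=\sigma^{2}+(\hat f_{s^{*}}-f^{*})^{2}=\mathbb E[\hat Z\mid X,\mathcal D_n]$, and let $m^{*}\in\argmin_{m\in[M_2]}\mathbb E_X[|\hat\sigma^{2}_{s^{*},m}(X)-\sigma^{2}(X)|^{2}]$. From $\hat R_N(\hat\sigma^{2}_{s^{*},\hat m})\le\hat R_N(\hat\sigma^{2}_{s^{*},m^{*}})$ and the bias–variance identity for the squared loss with conditional mean $\bar\sigma^{2}$,
\[
\mathbb E_X\!\big[|\hat\sigma^{2}_{s^{*},\hat m}(X)-\sigma^{2}(X)|^{2}\big]\ \le\ \mathbb E_X\!\big[|\hat\sigma^{2}_{s^{*},m^{*}}(X)-\sigma^{2}(X)|^{2}\big]\ +\ A_1\ +\ A_2,
\]
where $A_1$ gathers the differences between $\hat R_N$ and its population version (target $\bar\sigma^{2}$) at $\hat m$ and at $m^{*}$, and $A_2$ gathers the terms created by replacing $Z$ by $\hat Z$ and $\sigma^{2}$ by $\bar\sigma^{2}$. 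Averaging over $\mathcal D_n$, the first right–hand term is exactly $\mathbb E[\min_m\mathbb E_X[|\hat\sigma^{2}_{s^{*},m}(X)-\sigma^{2}(X)|^{2}]]$, with constant $1$; a union bound over the $M_2$ candidates gives $\mathbb E[A_1]\le C(\log M_2/N)^{1/p}$, of smaller order than $(\log M_1/N)^{1/(4p)}$ and absorbed into $C_2$. For $A_2$, expand $\hat Z_i-Z_i=(\hat f_{s^{*}}-f^{*})^{2}(X_i)-2(Y_i-f^{*}(X_i))(\hat f_{s^{*}}-f^{*})(X_i)$ and $\bar\sigma^{2}-\sigma^{2}=(\hat f_{s^{*}}-f^{*})^{2}$: $A_2$ becomes a combination of averages $\tfrac1N\sum_i g(X_i)\{(\hat f_{s^{*}}-f^{*})^{2}(X_i)+(Y_i-f^{*}(X_i))(\hat f_{s^{*}}-f^{*})(X_i)\}$ with $\|g\|_\infty\le 2(K_2+\|\sigma^{2}\|_\infty)$, plus a cross term bounded by $C(\mathbb E_X[|(\hat f_{s^{*}}-f^{*})(X)|^{4}])^{1/2}$. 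Since $\|\hat f_{s^{*}}-f^{*}\|_\infty\le K_1+\|f^{*}\|_\infty$, Cauchy–Schwarz bounds all of these by $C\|\hat f_{s^{*}}-f^{*}\|_N\big(1+(\tfrac1N\sum_i|Y_i-f^{*}(X_i)|^{2})^{1/2}\big)$; taking expectations, using $\mathbb E[\tfrac1N\sum_i|Y_i-f^{*}(X_i)|^{2}]=\mathbb E[\sigma^{2}(X)]\le\|\sigma^{2}\|_\infty$, Cauchy–Schwarz again, and then Step~1, gives $\mathbb E[A_2]\le C\big(\min_s\mathbb E[\|\hat f_s-f^{*}\|_N^{2}]+C(\log M_1/N)^{1/(2p)}\big)^{1/2}$, which by $\sqrt{a+b}\le\sqrt a+\sqrt b$ becomes $C_1(\min_s\mathbb E[\|\hat f_s-f^{*}\|_N^{2}])^{1/(2p)}+C_2(\log M_1/N)^{1/(4p)}$; the passage from exponents $1/2,1/4$ to $1/(2p),1/(4p)$ is exactly the price of the Gaussian truncation propagated through Steps~1–2. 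Collecting everything together with the negligible exceptional events yields the statement.

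\textbf{Main difficulty.} The crux is the handling of $A_2$ while keeping the leading constant on the oracle term equal to $1$: one must show that the coupling between the regression and variance problems — the dependence of the second dictionary and of the pseudo–responses on $\hat s$, and the shift of $\mathbb E[\hat Z\mid X]$ away from $\sigma^{2}$ by $(\hat f_{\texttt{MS}}-f^{*})^{2}$ — costs only a term controlled by the $L^{2}$–error of $\hat f_{\texttt{MS}}$, and does not deteriorate $\min_m\mathbb E_X[|\hat\sigma^{2}_{s^{*},m}(X)-\sigma^{2}(X)|^{2}]$. The second delicate point, and the reason the unbounded case is slower, is to carry the truncation of the Gaussian noise consistently through both aggregation steps with a single tunable threshold.
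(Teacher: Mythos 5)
Your proposal reaches the stated bound and shares the paper's overall two--step strategy, but it reorganizes the second step in a genuinely different way. The paper never pins the first--step index: it keeps the random $\hat s$ throughout, introduces $\bar\sigma^{2}_{\texttt{MS}}=\hat\sigma^{2}_{\hat s,\bar m}$ (population--risk minimizer) and $\tilde\sigma^{2}_{\texttt{MS}}=\hat\sigma^{2}_{\hat s,\tilde m}$ (empirical--risk minimizer with the \emph{true} $Z_i$), controls deviations uniformly over all $(s,m)\in[M_1]\times[M_2]$ with a Gaussian truncation at $L=2\sqrt{\log N}$, and uses the separability assumption only through $\mathbb{P}(\hat s\neq s^{*})\leq C\sqrt{\log M_1/N}$ (Bernstein) to replace $\hat s$ by $s^{*}$ in the bias term. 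You instead restrict the entire second aggregation to the event $\{\hat s=s^{*}\}$ and run a perturbed oracle inequality with the intermediate target $\bar\sigma^{2}=\mathbb{E}[\hat Z\mid X,\mathcal{D}_n]=\sigma^{2}+(\hat f_{s^{*}}-f^{*})^{2}$. That route works, with two points you should make explicit: the event $\{\hat s=s^{*}\}$ is $\mathcal{D}_N$--measurable, so you must note that your concentration bounds are maxima over $m$ and the integrands nonnegative, which lets you bound restricted expectations by unrestricted ones; and the conditional--mean identity must be stated for the $\mathcal{D}_n$--measurable $\hat f_{s^{*}}$, coinciding with $\hat f_{\texttt{MS}}$ only on that event. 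A benefit of your route is that the pathwise Cauchy--Schwarz you apply to $A_2$ (decoupling $|\hat f_{s^{*}}-f^{*}|(X_i)$ from the moment factor inside the empirical sum) gives exponent $1/2$ on $\mathbb{E}\bigl[\|\hat f_{s^{*}}-f^{*}\|_{N}^{2}\bigr]$ even in the Gaussian case, whereas the paper, applying Cauchy--Schwarz termwise to $\mathbb{E}[|\hat Z_i-Z_i|(|Z_i|+K_2)]$ and then again to $\mathbb{E}[|\hat Z_i-Z_i|^{2}]$, only gets $1/4$.

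One explanatory claim is inaccurate and worth correcting, though it does not invalidate the result. The $p=1$ versus $p=2$ distinction in the theorem does not come from ``propagating the Gaussian truncation through Steps 1--2'': in the paper the truncation only inflates negligible $\log N$ factors in the uniform--deviation term, and the loss of exponent in the unbounded case comes from the unboundedness of $|Z_i|$ in the cross term $\frac{1}{N}\sum_i|\hat Z_i-Z_i|\,(|Z_i|+K_2)$, which forces the extra Cauchy--Schwarz. Your own estimates never actually produce different exponents for the two cases; the passage from $\bigl(\min_{s}\mathbb{E}\bigl[\|\hat f_{s}-f^{*}\|_{N}^{2}\bigr]\bigr)^{1/2}$ to the claimed $(\cdot)^{1/(2p)}$ is legitimate only because the quantity is bounded under Assumptions~\ref{ass:fstar_bound} and~\ref{ass:hatfs_hatsigma_hats_m_bound}, so the weaker exponent holds trivially. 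Since you are in effect proving a (slightly stronger) bound that implies the statement, this is a misattribution rather than a gap, but the mechanism behind the $p$--dependence should be stated correctly.
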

The proof of this result is postponed to the Appendix. Let's give a sketch of the proof. The $L^2$-error is the exces risk of $\hat{\sigma}^{2}_{\texttt{MS}}$ where $\mathbb{E}\left[|\hat{\sigma}^{2}_{\texttt{MS}}(X)-\sigma^{2}(X)|^2\right]:=\mathbb{E}\left[R(\hat{\sigma}^{2}_{\texttt{MS}})- R(\sigma^2)\right]$ with $R(\sigma^2)=\mathbb{E}\left[|Z-\sigma^{2}(X)|^2\right]$. We introduce the minimizer $\bar{\sigma}^{2}_{\texttt{MS}}:= \hat{\sigma}^{2}_{\hat{s},\bar{m}}$  where $\bar{m}\in \argmin_{m\in[M_2]} R(\hat{\sigma}^{2}_{\hat{s},m})$. We consider the decomposition $\mathbb{E}\left[|\hat{\sigma}^{2}_{\texttt{MS}}(X)-\sigma^{2}(X)|^2\right]= \mathbb{E}\left[R(\hat{\sigma}^{2}_{\texttt{MS}})-R(\bar{\sigma}^{2}_{\texttt{MS}})\right]+\mathbb{E}\left[R(\bar{\sigma}^{2}_{\texttt{MS}})- R(\sigma^2)\right]$. We control the two terms in the right side separately. The first one is the  estimation error (variance term). To control it, we need to introduce  $\tilde{\sigma}^{2}_{\texttt{MS}}:= \hat{\sigma}^{2}_{\hat{s},\tilde{m}}$ where $\tilde{m}\in \argmin_{m\in[M_2]} R_{N}(\hat{\sigma}^{2}_{\hat{s},m})$, 
with $R_N(\hat{\sigma}^{2}_{\hat{s},m})=\frac{1}{N}\sum_{i=1}^{N}|Z_i-\hat{\sigma}^{2}_{\hat{s},m}(X_i)|^2$. The upper bound of the variance depends on the $L^2$-error of the aggregate $\hat{f}_{\texttt{MS}}$ with respect to the empircal norm. The second one is the approximation error. Its upper-bound is linked to $\mathbb{P}(\hat{s}\neq s^*)$. 

\medskip
Theorem~\ref{thm:Risk_MS_sigma3} gives the upper-bound for $L^2$-error of $\hat{\sigma}^{2}_{\texttt{MS}}$. This bound consists of two parts: the first part is the bias of \texttt{MS}-estimator $\hat{\sigma}^{2}_{\texttt{MS}}$ and depends on the deterministic selector $s^*$; the second part is composed of the two remaining terms and corresponds to the estimation error (variance). The first term is the bias term  of $\hat{f}_{\texttt{MS}}$ expressed in terms of the empirical norm $\|\cdot\|_{N}^{2}$, and the second one characterises the price to pay for \texttt{MS}-aggregation and is of order $\left(\log(M_{1})/N\right)^{1/4p}$ where $p=1$ if $Y$ is bounded and $p=2$ otherwise. Note that this rate is slower than in the case of the estimation of the regression function $f^*$. This slow rate is due to the double aggregation that we need to perform for the estimation of the conditional variance function.
\subsection{Upper bound for $\hat{\sigma}^{2}_{\texttt{C}}$ }
\label{sec:boundCEst}
In this part, we focus in studying the $L^2$-error of $\hat{\sigma}^{2}_{\texttt{C}}$. The construction of $\hat{\sigma}^{2}_{\texttt{C}}$ needs the following estimators $\{\hat{f}_i\}_{i=1}^{M_1}$ and  $\{\hat{\sigma}_{\hat{\lambda},i}\}_{i=1}^{M_2}$. We require the following conditions
\begin{assumption}
\label{ass:hatflambda_hatsigma_hatlambda__bound}
For all $i\in [M_1]$, all $\lambda\in \Lambda^{M_1}$ and all $j\in [M_2]$ , $\hat{f}_{i}$ and $\hat{\sigma}^{2}_{\lambda,j}$ are bounded a.s. $\mathcal{D}_n$.
\end{assumption}
\begin{assumption}
\label{ass:KLipchitz}
Suppose that there exists a constant $K\geq 0$ such that for every $j\in[M_2]$
\begin{equation*}
\mathbb{E}\left[|\hat{\sigma}^{2}_{\lambda_1,j}(X)-\hat{\sigma}^{2}_{\lambda_2,j}(X)|\right]\leq K\|\lambda_1-\lambda_2\|_{1,M_1}, \enspace \forall \lambda_1,\lambda_2 \in \Lambda^{M_1}\enspace a.s.
\end{equation*}
\end{assumption}
Assumption~\ref{ass:KLipchitz} is a strong condition. However, it holds, for instance, for estimators of the form $\hat{\sigma}^{2}_{\lambda,j}(X)=\sum_i \omega_i(X)(Y_i-\hat{f}_\lambda(X_i))^2$ 
 where $\omega_i(X)$ are weight functions, that are nonnegative and sum to one. The next theorem is the main result of this section, it display the upper-bound of $L^2$-error for $\hat{\sigma}^{2}_{\texttt{C}}$.
\begin{theorem}
\label{thm:UpperBoundCM}
Let $\hat{f}_{\texttt{C}}$ and $\hat{\sigma}^{2}_{\texttt{C}}$ be two \texttt{C}-estimators of $f^*$ and $\sigma^2$ defined in Eq.~\eqref{est:fhatCM} and~\eqref{est:CM} respectively. Then, under Assumptions~\ref{ass:fstar_bound},~\ref{ass:Y_bound},~\ref{ass:hatflambda_hatsigma_hatlambda__bound}, and~\ref{ass:KLipchitz}, there exist two absolute constants $C_1>0$ and $C_2>0$ such that 
\begin{multline*}
\label{ineq:BoundCMSigma}
\mathbb{E}\left[|\hat{\sigma}_{\texttt{C}}^{2}(X)-\sigma^{2}(X)|^2\right]\leq 
\mathbb{E}\left[\inf_{\beta\in \Lambda^{M_2}}\mathbb{E}_{X}\left[|\hat{\sigma}^{2}_{\hat{\lambda},\beta}(X)-\sigma^{2}(X)|^{2}\right]\right] +C_1\left\{\inf_{\lambda\in \Lambda^{M_1}}\mathbb{E}\left[\|\hat{f}_{\lambda}-f^{*}\|_{N}^{2}\right]\right\}^{1/2p}+\\ C_2\left(\frac{\log(M_{1})}{N}\right)^{1/4p},
\end{multline*}
with $p=1$ if $Y$ is bounded and $p=2$ otherwise.
\end{theorem}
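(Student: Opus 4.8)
The plan is to mirror the structure of the proof of Theorem~\ref{thm:Risk_MS_sigma3}, replacing the discrete selector $\hat s$ by the simplex weights $\hat\lambda$ and the discrete selector $\hat m$ by the simplex weights $\hat\beta$, and correspondingly replacing the separability argument by a uniform (over the simplex) control of the empirical process. Write $R(g)=\mathbb{E}[|Z-g(X)|^2]$ and $R_N(g)=\frac1N\sum_{i=1}^N|Z_i-g(X_i)|^2$ (true $Z_i=(Y_i-f^*(X_i))^2$), and $\hat R_N(g)=\frac1N\sum_{i=1}^N|\hat Z_i-g(X_i)|^2$ with $\hat Z_i=(Y_i-\hat f_{\texttt{C}}(X_i))^2$. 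The $L^2$-error equals the excess risk $\mathbb{E}[R(\hat\sigma^2_{\texttt{C}})-R(\sigma^2)]$. First I would split it as
\begin{equation*}
\mathbb{E}[R(\hat\sigma^2_{\texttt{C}})-R(\sigma^2)]=\mathbb{E}[R(\hat\sigma^2_{\texttt{C}})-R(\bar\sigma^2_{\texttt{C}})]+\mathbb{E}[R(\bar\sigma^2_{\texttt{C}})-R(\sigma^2)],
\end{equation*}
where $\bar\sigma^2_{\texttt{C}}:=\hat\sigma^2_{\hat\lambda,\bar\beta}$ with $\bar\beta\in\argmin_{\beta\in\Lambda^{M_2}}R(\hat\sigma^2_{\hat\lambda,\beta})$. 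The second term is the approximation term and is bounded (conditionally on $\mathcal D_n$ and on $\hat\lambda$) by $\inf_{\beta\in\Lambda^{M_2}}\mathbb{E}_X[|\hat\sigma^2_{\hat\lambda,\beta}(X)-\sigma^2(X)|^2]$ by the usual bias decomposition $R(g)-R(\sigma^2)=\mathbb{E}_X[|g(X)-\sigma^2(X)|^2]$; taking expectation gives exactly the first term on the right-hand side of the bound.

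For the estimation term I would introduce the intermediate estimator $\tilde\sigma^2_{\texttt{C}}:=\hat\sigma^2_{\hat\lambda,\tilde\beta}$ with $\tilde\beta\in\argmin_{\beta\in\Lambda^{M_2}}R_N(\hat\sigma^2_{\hat\lambda,\beta})$ (i.e. the ERM with the \emph{true} residuals), and write $R(\hat\sigma^2_{\texttt{C}})-R(\bar\sigma^2_{\texttt{C}})$ as a telescoping sum through $\tilde\sigma^2_{\texttt{C}}$, $R_N$ and $\hat R_N$. Three kinds of errors appear. (i) The gap between $R$ and $R_N$ evaluated along the data-dependent family $\{\hat\sigma^2_{\hat\lambda,\beta}\}_{\beta\in\Lambda^{M_2}}$: here I cannot use a union bound over $M_2$ points as in the \texttt{MS} case, so instead I would use Assumption~\ref{ass:hatflambda_hatsigma_hatlambda__bound} (boundedness, hence the class $\{(z,x)\mapsto|z-\hat\sigma^2_{\hat\lambda,\beta}(x)|^2\}$ has bounded envelope) together with the fact that a convex combination in $\beta$ of bounded functions is Lipschitz in $\beta$, so a standard $\varepsilon$-net / chaining argument over the simplex $\Lambda^{M_2}$ (combined with Hoeffding/sub-Gaussian tails from Assumption~\ref{ass:Y_bound} and Lemma~\ref{lem:HoeffdingLemma}) yields $\sup_{\beta}|R_N(\hat\sigma^2_{\hat\lambda,\beta})-R(\hat\sigma^2_{\hat\lambda,\beta})|=O_{\mathbb{P}}((M_2/N)^{1/2})$ — which is of smaller order than the claimed rate and can be absorbed. (ii) The gap between $\hat R_N$ and $R_N$: this is controlled by $\|\hat Z-Z\|$-type terms, i.e. by the empirical quadratic error of $\hat f_{\texttt{C}}$ relative to $f^*$, since $\hat Z_i-Z_i=(\hat f_{\texttt{C}}(X_i)-f^*(X_i))(2(Y_i-f^*(X_i))-(\hat f_{\texttt{C}}(X_i)-f^*(X_i)))$; after Cauchy–Schwarz and using the sub-Gaussianity of $Y-f^*(X)$, the relevant quantity is $\mathbb{E}[\|\hat f_{\texttt{C}}-f^*\|_N^2]$. (iii) The control of $\mathbb{E}[\|\hat f_{\texttt{C}}-f^*\|_N^2]$ itself: here I would invoke the standard oracle inequality for \texttt{C}-aggregation of the regression function applied on $\mathcal D_N$ (exactly as in the \texttt{MS} part but with the simplex), giving $\mathbb{E}[\|\hat f_{\texttt{C}}-f^*\|_N^2]\le \inf_{\lambda\in\Lambda^{M_1}}\mathbb{E}[\|\hat f_\lambda-f^*\|_N^2]+O((\log M_1/N)^{1/2})$ — this is where $\log M_1$ and the exponent $1/(4p)$ enter, after one takes square roots to move from a squared-$L^2$ control of $\hat f_{\texttt{C}}$ to an $L^1$-type control of $\hat Z-Z$, and then a second square root from converting a risk gap into an $L^2$-error gap. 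Assumption~\ref{ass:KLipchitz} is used precisely at the point where one must compare $\hat\sigma^2_{\hat\lambda,j}$ with $\hat\sigma^2_{\lambda^*,j}$ for the oracle $\lambda^*$, so that the bias term $\inf_{\lambda}\mathbb{E}[\|\hat f_\lambda-f^*\|_N^2]^{1/(2p)}$ can be stated in terms of $\lambda$ rather than $\hat\lambda$; it converts the fluctuation of $\hat\lambda$ around $\lambda^*$ (bounded in $\ell_1$ via the aggregation oracle inequality) into a fluctuation of the variance estimators.

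Collecting the three contributions: the approximation term gives the first summand, contributions (ii)–(iii) give $C_1\{\inf_{\lambda\in\Lambda^{M_1}}\mathbb{E}[\|\hat f_\lambda-f^*\|_N^2]\}^{1/(2p)}$ (from the bias of the regression aggregate) plus $C_2(\log M_1/N)^{1/(4p)}$ (from the variance of the regression aggregate, square-rooted twice), and contribution (i) is dominated by the latter. The exponent split $p=1$ versus $p=2$ tracks whether one uses Hoeffding directly (bounded $Y$) or the sub-Gaussian/Gaussian chaining bound, exactly as in Theorem~\ref{thm:Risk_MS_sigma3}. The main obstacle, compared with the \texttt{MS} proof, is step (i): replacing the finite union bound over $\{1,\dots,M_2\}$ by a uniform empirical-process bound over the simplex $\Lambda^{M_2}$ for a data-dependent (through $\hat\lambda$) function class; the Lipschitz-in-$\beta$ structure of convex combinations of uniformly bounded functions, plus Assumption~\ref{ass:hatflambda_hatsigma_hatlambda__bound}, makes the covering-number estimate routine, but one must be careful that all constants are absolute and that the $\hat\lambda$-dependence is handled by conditioning on $\mathcal D_n$ before applying the concentration bounds.
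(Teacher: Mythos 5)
Your overall architecture coincides with the paper's: the same three-term decomposition through $\bar{\sigma}^{2}_{\texttt{C}}=\hat{\sigma}^{2}_{\hat{\lambda},\bar{\beta}}$ and $\tilde{\sigma}^{2}_{\texttt{C}}=\hat{\sigma}^{2}_{\hat{\lambda},\tilde{\beta}}$, the identification of the approximation term with $\mathbb{E}\bigl[\inf_{\beta}\mathbb{E}_{X}[|\hat{\sigma}^{2}_{\hat{\lambda},\beta}(X)-\sigma^{2}(X)|^{2}]\bigr]$, the control of $\hat{R}_{N}-R_{N}$ via $\hat{Z}_i-Z_i$ and Cauchy--Schwarz, and the empirical-norm oracle inequality for $\hat{f}_{\texttt{C}}$ (Proposition~\ref{prop:EmpiricalNormfCM}) producing the exponents $1/2p$ and $1/4p$.

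There is, however, a genuine gap in your step (i). You propose to control $\sup_{\beta\in\Lambda^{M_2}}|R_{N}(\hat{\sigma}^{2}_{\hat{\lambda},\beta})-R(\hat{\sigma}^{2}_{\hat{\lambda},\beta})|$ by an $\epsilon$-net over $\Lambda^{M_2}$ only, asserting that ``the $\hat{\lambda}$-dependence is handled by conditioning on $\mathcal{D}_n$.'' This does not work: $\hat{\lambda}$ is the minimizer of $\hat{\mathcal{R}}_{N}$ and is therefore a function of $\mathcal{D}_{N}$, the very sample over which $R_{N}$ averages; conditioning on $\mathcal{D}_n$ does not make the class $\{\hat{\sigma}^{2}_{\hat{\lambda},\beta}\}_{\beta}$ deterministic, so Hoeffding-type concentration cannot be applied to it directly. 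The paper resolves this by taking the supremum over the full product $\Lambda^{M_1}\times\Lambda^{M_2}$ and discretizing \emph{both} simplices with $\epsilon$-nets of cardinality $(3/\epsilon)^{M_1}$ and $(3/\epsilon)^{M_2}$, which yields the (absorbable) rate $\bigl((M_1+M_2)\log^{5}(N)/N\bigr)^{1/2}$ in the Gaussian case. This is also where Assumption~\ref{ass:KLipchitz} is actually used: it controls the discretization error in the $\lambda$-direction, i.e.\ terms such as $|R(\hat{\sigma}^{2}_{\lambda^{\epsilon_1},\beta^{\epsilon_2}})-R(\hat{\sigma}^{2}_{\lambda,\beta^{\epsilon_2}})|$ and its empirical analogue. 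Your stated use of Assumption~\ref{ass:KLipchitz} --- comparing $\hat{\sigma}^{2}_{\hat{\lambda},j}$ with $\hat{\sigma}^{2}_{\lambda^{*},j}$ for an oracle $\lambda^{*}$ so as to restate the bias in terms of $\lambda$ --- is not needed and not what happens: the first term of the bound in Theorem~\ref{thm:UpperBoundCM} remains indexed by the random $\hat{\lambda}$, so no such oracle comparison is required. Once you replace your conditioning argument by the uniform-in-$(\lambda,\beta)$ empirical-process bound (for which you already have the right ingredients: boundedness, Lipschitz-in-$\beta$ convexity, and the Lipschitz-in-$\lambda$ hypothesis), your plan matches the paper's proof.
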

As for Theorem~\ref{thm:Risk_MS_sigma3}, the upper-bound for the $L^2$-error of \texttt{C}-estimator $\hat{\sigma}_{\texttt{C}}^2$ is composed of three terms. The first one is the bias term of $\hat{\sigma}_{\texttt{C}}^2$ which depends on the random selector $\hat{\lambda}$, the second and third ones is a bound of the variance term that rely on the bias term of $\hat{f}_{\texttt{C}}$ with respect to the empirical norm $\|\cdot\|_{N}^{2}$ and on the price to pay for convex aggregation which is of order $\left(\log(M_{1})/N\right)^{1/4p}$ where $p=1$ if $Y$ is bounded and $p=2$ otherwise.  

\medskip

We notice that both procedures $\texttt{MS}$ and $\texttt{C}$ have the same rate. Indeed, the variance term of 
$\hat{\sigma}_{\texttt{MS}}^{2}$ and $\hat{\sigma}_{\texttt{C}}^{2}$ is based on the upper bound for $\hat{f}_{\texttt{MS}}$ and $\hat{f}_{\texttt{C}}$. Moreover, the aggregates  $\hat{f}_{\texttt{MS}}$ and $\hat{f}_{\texttt{C}}$ have the same rate which is of order $(\log(M_1)/N)^{1/2}$ with respect to the empirical norm $\|\cdot\|_{N}^{2}$, see Proposition~\ref{prob:Emp_Nor_fhatMS} and Proposition~\ref{prop:EmpiricalNormfCM} in the Appendix. Let's now compare with the rates of $\hat{f}_{\texttt{MS}}$ and $\hat{f}_{\texttt{C}}$ with respect to $L^2$-error and $L^2$-risk. For the Gaussian and bounded regression model, the rate of the variance term of $\hat{f}_{\texttt{MS}}$ and  $\hat{f}_{\texttt{C}}$ is of order $\frac{\log(M)}{N}$ and $\frac{M}{N}$ if $M\leq \sqrt{N}$ respectively, $\sqrt{\frac{1}{N}\log\left(\frac{M}{\sqrt{N}}+1\right)}$ if $M\geq \sqrt{N}$ in both cases, see~\cite{Bunea_Tsybakov_Wegkamp07,Lecue13,Lecue_Mendelson09, Tsybakov03}. We can deduce that our rates are very slow because our procedures need to estimate at the same time the unknown regression function $f^*$  and the variance function $\sigma^2$ by aggregation procedures.  
\section{Numerical results}
This section is devoted to the numerical analysis of our procedures. In Section~\ref{subsec:Data}, we describe four heteroscedastic models in the gaussian case and two models when $Y$ is bounded. 
Second, we evaluate the performances of \texttt{MS}-estimator and \texttt{C}-estimator for different examples in Section~\ref{subsec:BenefitAggre}. Once we have calibrated our estimate of the variance function $\sigma^2$, we exploit it to consider the problem of regression with reject option in Section~\ref{subsec:App}. 
\label{sec:Numerical}
\subsection{Data}
\label{subsec:Data}
Our numerical study relies on synthetic data:
\\\textbf{Heteroscedastic models:} we propose four examples of heteroscedastic models~\eqref{modelGaussian}:
\begin{itemize}
\item Model $1$: let $a\in\{1/4,1\}$ and $X=(X_1,X_2,X_3)$ have a uniform distribution on $[0,1]^3$. Let
\begin{enumerate}
\item $f^{*}(X)=0.1\cos(X_1)+\exp(-X_{3}^{2})$;
\item $\sigma^{2}(X)=a\left(0.1+\exp(-7(X_1-0.2)^2)+\exp(-10(X_2-0.5)^2+\exp(-50(X_3-0.9)^2\right).$
\end{enumerate}
\item Model $2$: let $X=(X_1,\cdots,X_{10})$ have a uniform distribution on $[0,1]^{10}$. We define 
\begin{enumerate}
\item $f^{*}(X)= 0.1+\exp(-X_{1}^2)+0.2\sin(X_2+X_3+X_4+0.1X_{5}^2)$;
\item $\sigma^{2}(X)= \frac{1}{2}(0.5+\sqrt{X_{1}(1-X_2)}+0.8X_{3}X_{4}+X_{5}X_{6}X_{7}^2+0.9\exp(-500(X_8+X_9+X_{10}-0.5)^2))^2$. 
\end{enumerate}

\item Model $3$: sparse model 
\begin{enumerate}
\item $f^{*}(X)= X\beta$, \enspace $\beta \in \mathbb{R}^p$;
\item $\sigma^{2}(X)=\frac{1}{2}\left(0.3+\sqrt{X_1(1-X_1)}\sin\left(\frac{2.1\pi}{X_2+0.05}\right)+0.5X_{3}+X_4\right)^2$.
\end{enumerate}
\end{itemize}
\textbf{Bounded $Y$:} we consider the following regression model when $Y$ is bounded
\begin{equation}
Y=f^{*}(x)+\sigma(X)\varrho
\label{mod: bound}
\end{equation}
where $\varrho$ have a uniform distribution on $[-\sqrt{3},\sqrt{3}]$. We give the following examples of models~\label{mod: bound}:
\begin{itemize}
\item Model $4$: let $X$ have a uniform distribution on $[0,1]^2$ and
\begin{enumerate}
\item $f^{*}(X)= X_{1}+\exp(-X_{2}^{2})$;
\item $\sigma^{2}(X)= 0.01+X_{1}\exp\left(-(X_{2}-0.9)^2\right)$.
\end{enumerate}
\item Model $5$: let $X=(X_1,X_2,X_{3})$ have a uniform distribution on $[0,1]^{3}$ and
\begin{enumerate}
\item $f^{*}(X)= X_1+X_2+0.5 \cos(X_3)$;
\item $\sigma^{2}(X)= \left(0.3+\sqrt{X_{1}(1-X_{1})}\sin\left(\frac{(2.1)\pi}{X_2+0.05}\right)+X_3\right)^2$.
\end{enumerate}

\end{itemize}

We describe the previous models. We display in Figures~\ref{hist:HistSIgma} and~\ref{hist:HistSIgmaRegB} the histograms of the variance function for every model. 
\begin{figure}[ht]%
 \centering
 \subfloat{\includegraphics[scale=0.24]{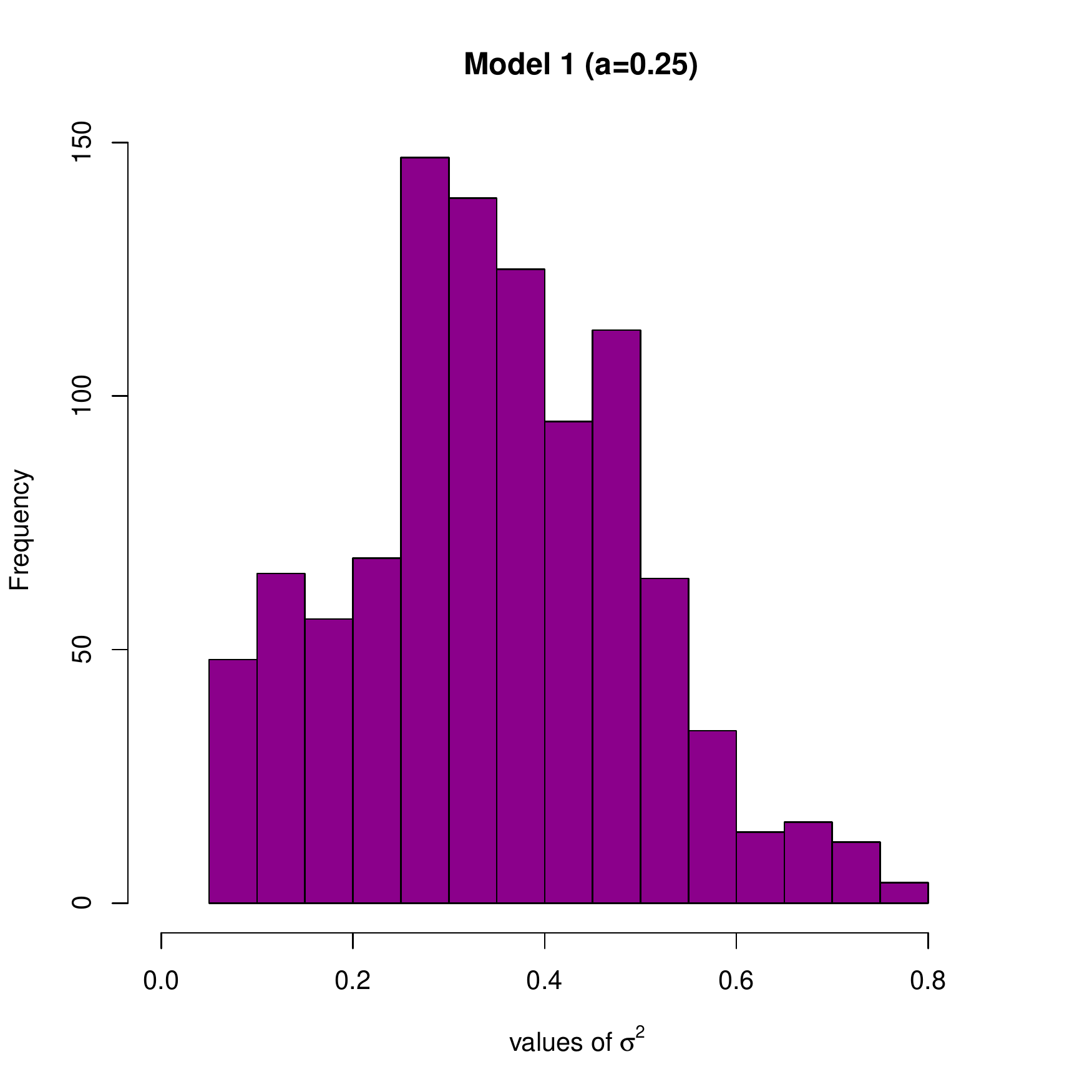}}%
 \subfloat{\includegraphics[scale=0.24]{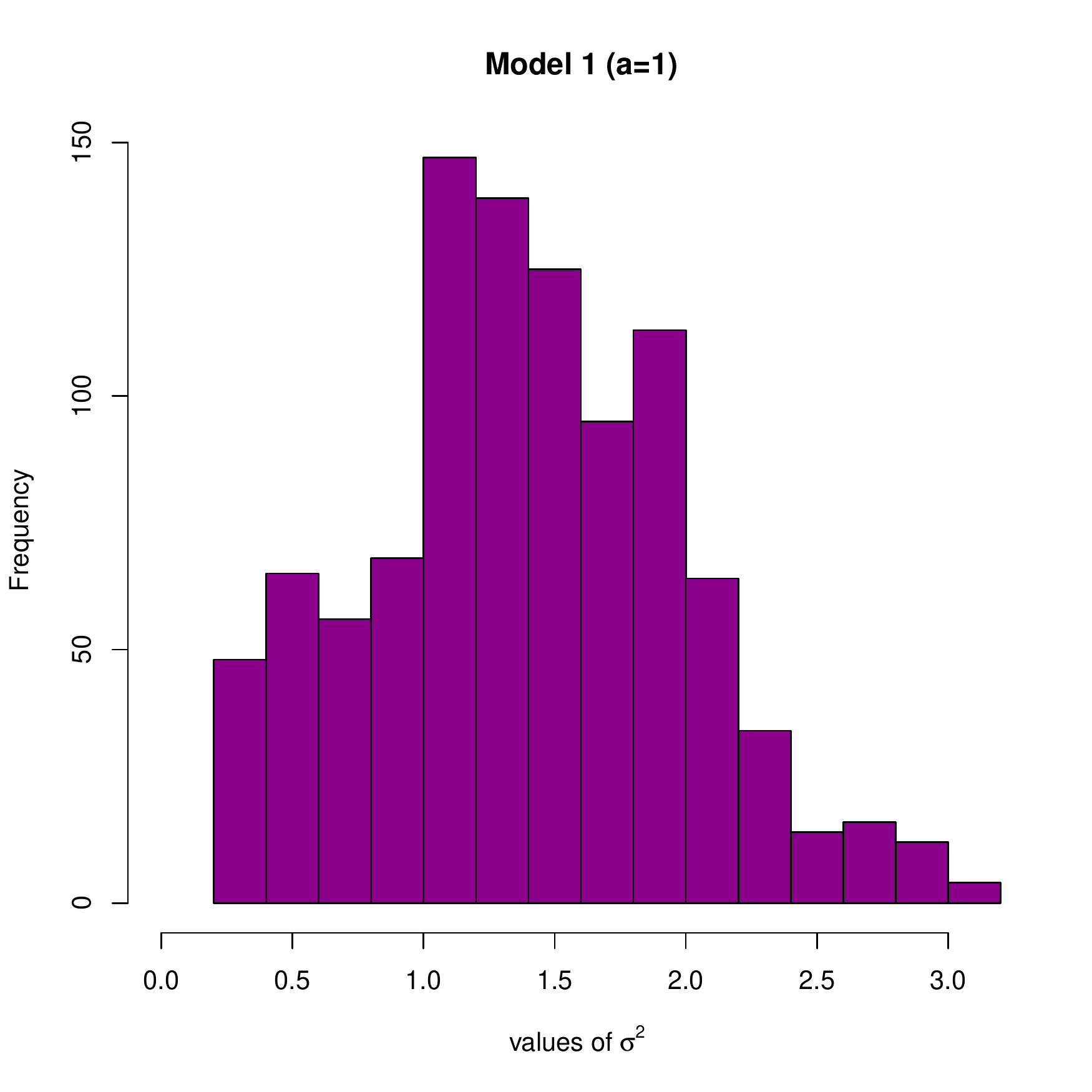}}
  \subfloat{\includegraphics[scale=0.24]{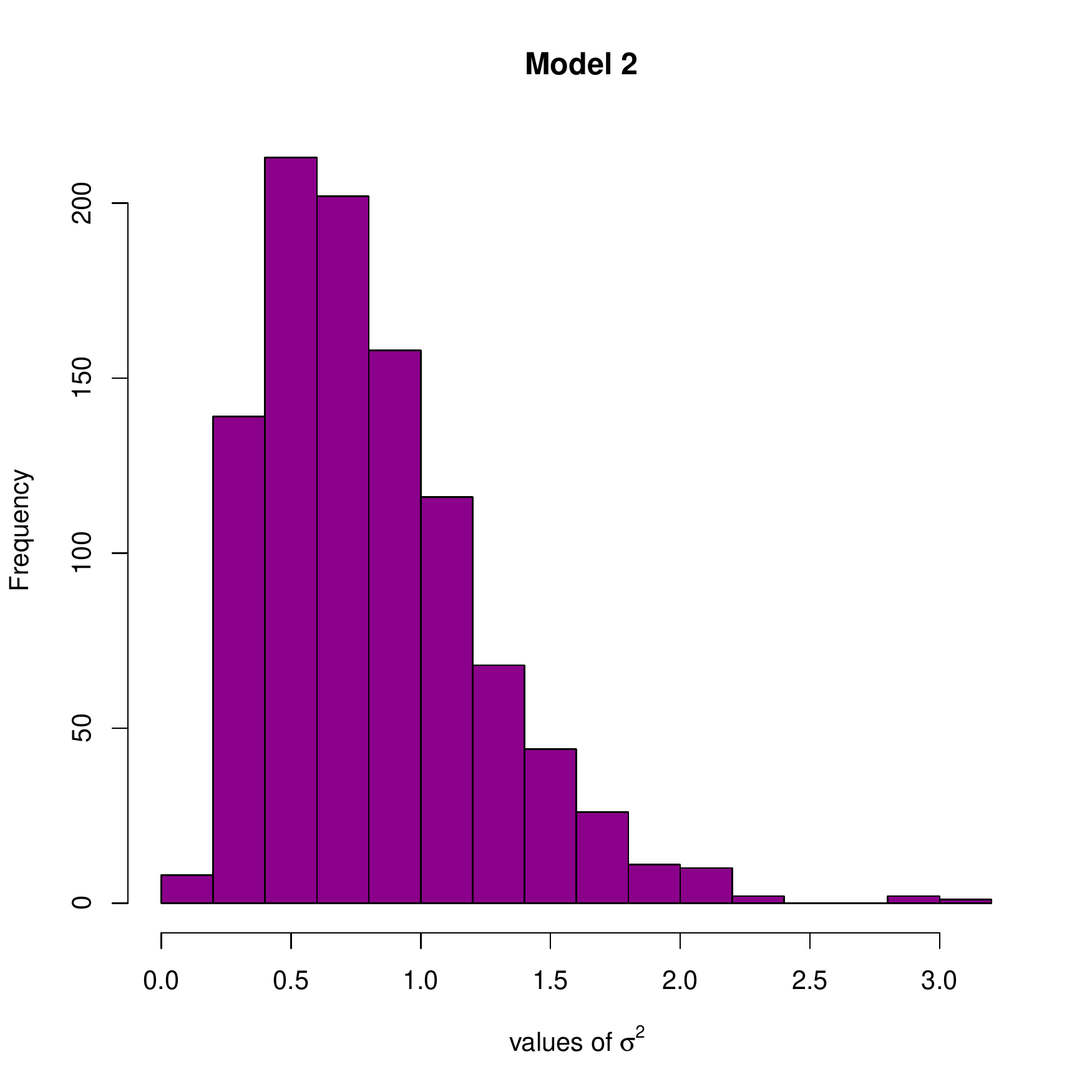}}
   \subfloat{\includegraphics[scale=0.24]{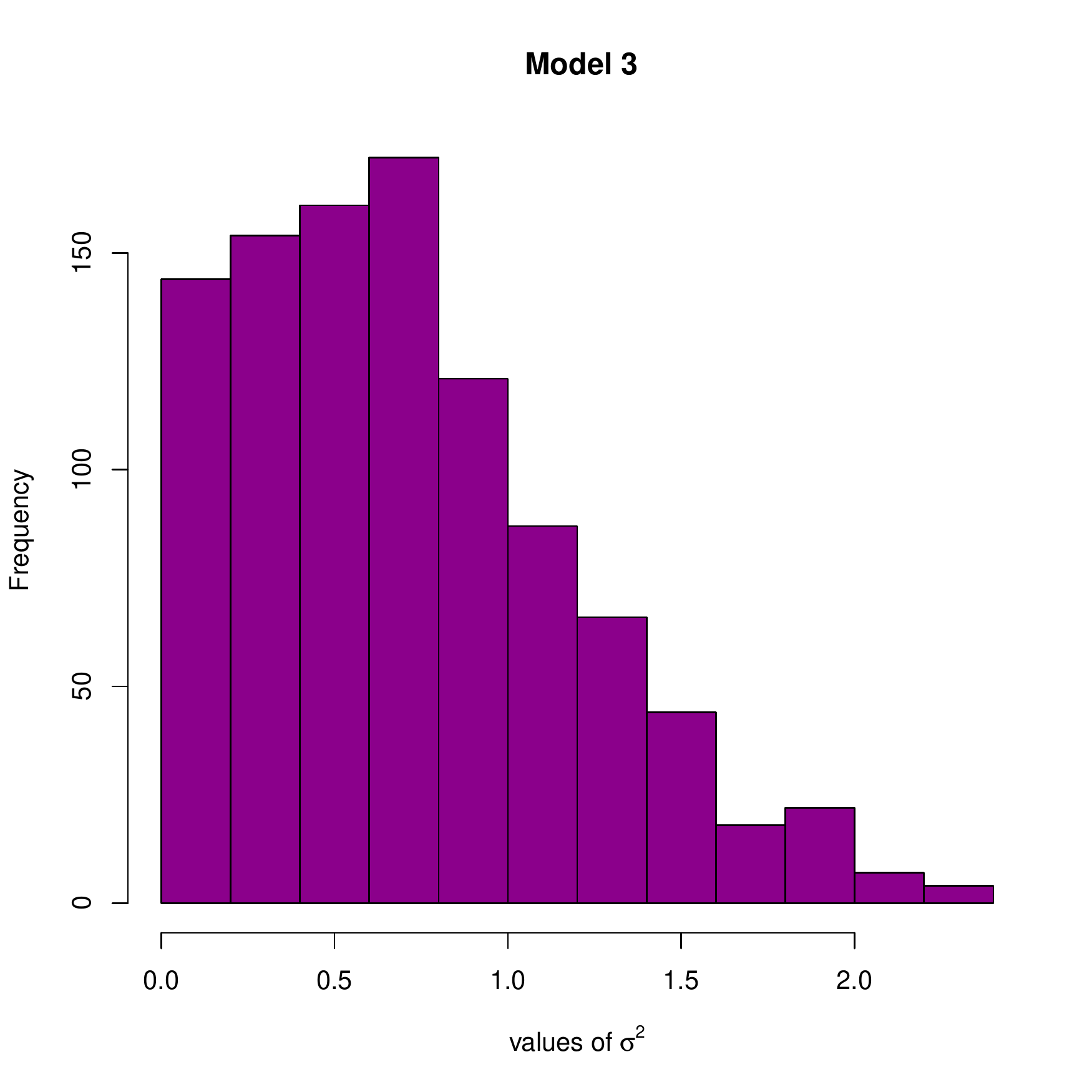}}
 \caption{Histogram of values of $\sigma^2$ in Gaussian models.}%
 \label{hist:HistSIgma}%
\end{figure}
\begin{figure}[ht]%
 \centering
 \subfloat{\includegraphics[scale=0.24]{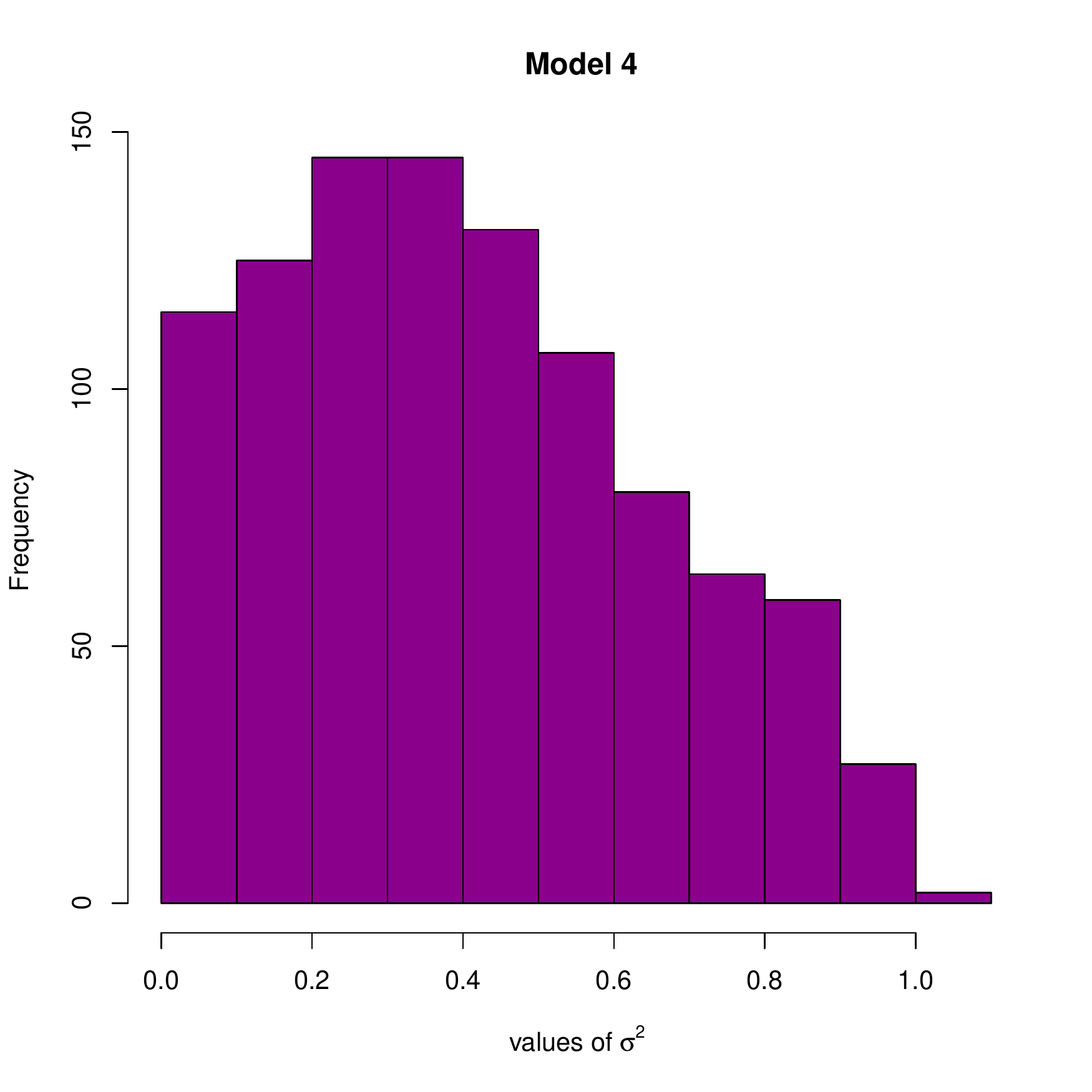}}%
 \subfloat{\includegraphics[scale=0.24]{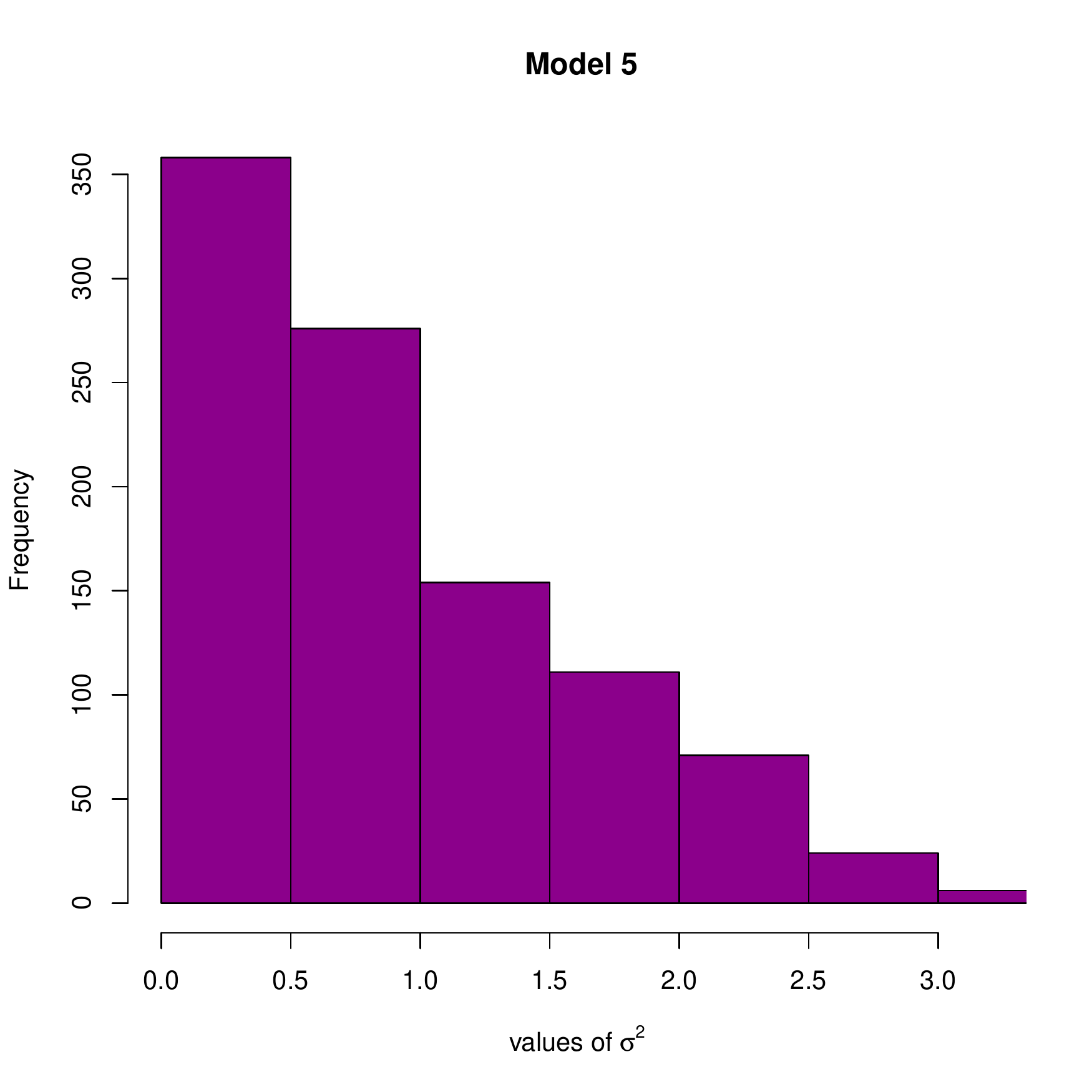}}
 \caption{Histogram of values of $\sigma^2$ in regression models for bounded responses.}%
 \label{hist:HistSIgmaRegB}%
\end{figure}
Model $1$ is a multivariate model in which the regression and variance functions are regular functions. In the case $a=1/4$, the problem of estimation of $\sigma^2$ is hard since it takes a large proportion of values smaller than $1$, while the case $a=1$ is simpler because about $76.3\%$ of the values of $\sigma^2$ are larger than $1$ and $0.04\%$ larger than $3$. Moreover, Model $2$ is also a multivariate model where we introduce higher order terms in the variance function. In this sense the estimation of the variance function is hard since in addition, there are only $28\%$ of values of $\sigma^2$ greater than $1$. In Model $3$ we consider a sparse model for the regression function where $X$ is an $N\times p$ matrix ($p$ is the number of predictors) with independent uniform entries, $\beta \in \mathbb{R}^p$ is a vector of weights, and $\xi\in \mathbb{R}^N$ is a standard Gaussian noise vector and is independent of the feature $X$. We fix $p=50$. The vector $\beta$ is chosen to be $s$-sparse where $ s< p$, that means $\beta$ has only first $s$ coordinates different from $0$; $\beta_i= \one_{\{i\leq s\}}$. Here, we choose $s=14$. In addition, the variance function in this model is less difficult. Indeed, $\sigma^2$ takes only $24.8\%$ values greater than $1$. Finally, the last two examples are two models when $Y$ is bounded. Model $4$ is bivariate regression model where the estimation of $\sigma^2$ is difficult (about $99.8\%$ of the values are less than
$1$). Lastly, considering Model $5$, the values of $\sigma^2$ are between $0$ and $3.12$. There are $36.6\%$ of values that are larger than $1$. From this perspective the estimation of the variance function is less complicated. However, the presence of higher order terms makes the problem harder.

\subsection{Benefit of aggregation}
\label{subsec:BenefitAggre}
In this section we improve the classical methods based on residual-based approach by considering aggregation. In the same time we compare \texttt{MS} and \texttt{C} aggregation.
\subsubsection{Machines and simulation scheme}
\label{sec:machines}
The construction of the aggregates $\hat{\sigma}^{2}_{\texttt{MS}}$ and $\hat{\sigma}^{2}_{\texttt{C}}$ is described in Sections~\ref{sec:AggreMS} and~\ref{sec:AggreCM}. We recall that we focus on the residual-based method to compute the candidates of the variance function $\sigma^2$. One of the advantages of using the aggregation approach is that the collection of candidates is chosen by the practitioner and can be arbitrary. We build three dictionaries $\mathcal{F}=\{\hat{f}_{s}\}_{s=1}^{12}$, $\mathcal{G}_{1}=\left\{\hat{\sigma}^{2}_{\hat{s},12}\right\}_{m=1}^{12}$ and $\mathcal{G}_{2}=\{\hat{\sigma}^{2}_{\hat{\lambda},j}\}_{j=1}^{12}$ that contain $12$ machines each: the random forest with different number of trees (ntree=$50$, $150$, $500$), the $k$NN  with different values of $k$ ($k=7, 13, 22$), the Lasso with different values of tuning parameter ($\lambda= 0.5, 2$), the Ridge  with different values of  tuning parameter ($\lambda= 0.9, 3$), regression tree and the Elastic Net regression with a penalty term $\lambda=1$ and a parameter $\alpha=0.6$ that compromises between the $\ell_1$ and the $\ell_2$ terms in the penalty. The first dictionary is exploited to compute the aggregates $\hat{f}_{\texttt{MS}}$ and $\hat{f}_{\texttt{C}}$ while the last two are used to calculate respectively $\hat{\sigma}^{2}_{\texttt{MS}}$ and $\hat{\sigma}^{2}_{\texttt{C}}$ with those $12$ machines. 
For the $12$ algorithms, we use the following R packages:
\begin{itemize}
\item  Regression tree (R package \texttt{tree}, \cite{Ripley19});
\item  $k$-nearest neighbors regression (R package \texttt{FNN}, \cite{Li19});
\item  RandomForest regression (R package \texttt{randomForest}, \cite{LiawWiener02});
\item  Lasso regression (R package \texttt{glmnet}, \cite{FriedmanHastieTibshirani10});
\item  Ridge regression (R package \texttt{glmnet});
\item Elastic Net regression (R package \texttt{glmnet}).
\end{itemize}
Other parameters are set by default. In addition to that, we use \texttt{Optim} function in \texttt{R}  which is based on method \texttt{BFGS} to compute $\hat{\lambda}$ and $\hat{\beta}$.
Now, we evaluate the performances of $\hat{\sigma}^{2}_{\texttt{MS}}$ and $\hat{\sigma}^{2}_{\texttt{C}}$ on previous models. Besides, we provide estimation of  the  $L^2$-error for $\hat{\sigma}^{2}_{\texttt{MS}}$ and $\hat{\sigma}^{2}_{\texttt{C}}$ and repeat independently $L=100$ times the following steps 
\begin{enumerate}
\item simulate three datasets $\mathcal{D}_n$, $\mathcal{D}_{N}$ and $\mathcal{D}_{T}$ with $n \in \{100,1000\}$, $N\in \{100,1000\}$  and $T=1000$;
\item based on $\mathcal{D}_{n}$, we compute the dictionary $\mathcal{F}$, and then based on $\mathcal{D}_N$, we compute the aggregates $\hat{f}_{\texttt{MS}}$ (that is $\hat{s}$) and $\hat{f}_{\texttt{C}}$ (that is $\hat{\lambda}$) of the  regression function $f^{*}$ provided in Eqs~\eqref{est: hatfMS} and~\eqref{est:fhatCM};
\item  based on $\mathcal{D}_{n}$ and $\hat{f}_{\texttt{MS}}$ (resp. $\hat{f}_{\texttt{C}}$), we compute the collection $\mathcal{G}_1$ (resp. $\mathcal{G}_2$) and we calculate $\hat{\sigma}^{2}_{\texttt{MS}}$ and $\hat{\sigma}^{2}_{\texttt{C}}$ on $\mathcal{D}_{N}$;
\item based on $D_n\cup D_N$: firstly, we compute the collection $\mathcal{F}$\footnote{Note that this set of estimators differ from the dictionary computed in step 2. since it is computed in the whole data $D_n\cup D_N$. We abuse in the notation to avoid extra notation that are irrelevant for the understanding.}; secondly, for each estimate $\hat{f}_s$ in $\mathcal{F}$ we calculate the estimators $\{\hat{\sigma}^{2}_{s,m}\}_{1 \leq  m\leq 12}$ of $\sigma^2$ corresponding to the $12$ procedures in $\mathcal{F}$ and we pick the best estimator among them;
\item finally, over $\mathcal{D}_{T}$, we compute the empirical $L^2$-error of the aggregates $\hat{\sigma}^{2}_{\texttt{MS}}$ and $\hat{\sigma}^{2}_{\texttt{C}}$ and the best estimator computed in step $4$. More precisely, we compute the following quantity $\widehat{\Err}(\hat{\sigma}^{2})=\frac{1}{T}\sum_{i=1}^{T}\left(\hat{\sigma}^{2}(X_i)-\sigma^{2}(X_i)\right)^{2}$.
\end{enumerate}
From these experiments, we compute the means and standard deviations of both empirical risks $\widehat{\Err}$ for $\hat{\sigma}^{2}_{\texttt{MS}}$, $\hat{\sigma}^{2}_{\texttt{C}}$ and the best estimator in step $4$  and we display the boxplot of the empirical $L^2$-error. 

\subsubsection{Results}
We present our results in Figures~\ref{fig:boxplot_Mod1}-\ref{fig:boxplot_Mod5} and Tables~\ref{Tab:1} and~\ref{Tab:2}. We make several observations. First, the convex aggregation method is better than the model selection aggregation method in all models. 
Second, we notice that when $n$ and $N$ are enough, the \texttt{MS}-estimator $\hat{\sigma}^{2}_{\texttt{MS}}$  and the \texttt{C}-estimator $\hat{\sigma}^{2}_{\texttt{C}}$ have similar performance, that is close to the performance of the best estimator. 
These results reflect our theory: the consistency of \texttt{MS}-estimator and the \texttt{C}-estimator. 
Third, we observe that the empirical $L^2$-error of $\hat{\sigma}^{2}_{\texttt{MS}}$ and $\hat{\sigma}^{2}_{\texttt{C}}$ decreases faster in the simpler models (with respect to the estimation of the variance function) when $n$ and $N$ increase (see the evolution of the boxplots in Figures~\ref{fig:boxplot_Mod1_a} and~\ref{fig:boxplot_Mod5} as compared to Figures~\ref{fig:boxplot_Mod1} and~\ref{fig:boxplot_Mod3}. 
In addition, our numerical results highlight an interesting fact: when we split data, it is advantageous to put more data in the second dataset $\mathcal{D}_N$ used in the aggregation step. Indeed, it seems as illustrated in Table~\ref{Tab:2} that the methods have better performance for large samples $\mathcal{D}_N$ is all cases. As an example, the mean error in Model~1 with $a=1$ for C-aggregation is $0.33$ when $n=1000$ and $N=100$ and $0.26$ when $n=100$ and $N=1000$.
 \begin{table}[!ht]
\centering
\footnotesize{
{\setlength{\tabcolsep}{3pt}
\vspace*{0.25cm}
\begin{tabular}{||l || c | c | c || c | c | c|| }
\multicolumn{1}{c}{} & \multicolumn{3}{c}{{$n=N=100$ }} &  \multicolumn{3}{c}{{$n=N=1000$}}\\ \hline
\multicolumn{1}{c}{}  &  \multicolumn{1}{c}{{\texttt{C}}} & \multicolumn{1}{c}{{\texttt{MS}}} & \multicolumn{1}{c}{{\texttt{Best}}} & \multicolumn{1}{c}{{\texttt{C}}} & \multicolumn{1}{c}{{\texttt{MS}}} & \multicolumn{1}{c}{{\texttt{Best}}}
\\ \hline \noalign{\smallskip}
Model & $\widehat{\Err}$ &  $\widehat{\Err}$  &  $\widehat{\Err}$  & $\widehat{\Err}$ & $\widehat{\Err}$  &  $\widehat{\Err}$  \\ \hline \noalign{\smallskip}
Model $1$ ($a=0.25$) & 0.028 (0.018)  & 0.031 (0.023)  & 0.013 (0.003)   &  0.011 (0.004) &  0.014 (0.003)  & 0.011 (0.001)  \\
Model $1$ ($a=1$) & 0.407 (0.214)  & 0.428 (0.279) & 0.200 (0.44)  & 0.155 (0.044) & 0.200 (0.040)  & 0.164 (0.013)\\
Model $2$ & 0.247 (0.133) & 0.272 (0.180) & 0.110 (0.025)  & 0.106 (0.046)  & 0.100 (0.093)  & 0.070 (0.010)\\
Model $3$ & 0.287 (0.092) & 0.302(0.125)  & 0.218 (0.019) & 0.194 (0.021) & 0.198 (0.044) & 0.164 (0.011)  \\
Model $4$ & 0.032 (0.027) & 0.034 (0.036) & 0.010 (0.005)  & 0.010 (0.005) & 0.011 (0.003)  & 0.009 (0.001)  \\
Model $5$ & 0.382 (0.116) & 0.405 (0.168) & 0.264 (0.032)  & 0.209 (0.040) & 0.223 (0.024)  & 0.178 (0.016)  \\
\hline
\end{tabular}}}
\caption{Average and standard deviation of the empirical $L^2$-error of the three estimators with $n=N$.}
\label{Tab:1}
\end{table}
\begin{table}[!ht]
\centering
\footnotesize{
{\setlength{\tabcolsep}{3pt}
\vspace*{0.25cm}
\begin{tabular}{||l || c | c | c || c | c | c|| c |}
\multicolumn{1}{c}{}  &  \multicolumn{3}{c}{{$n=1000$ $N=100$}}& \multicolumn{3}{c}{{$n=100$, $N=1000$}}\\ \hline
\multicolumn{1}{c}{}  &  \multicolumn{1}{c}{{\texttt{C}}} & \multicolumn{1}{c}{{\texttt{MS}}} & \multicolumn{1}{c}{{\texttt{Best}}} & \multicolumn{1}{c}{{\texttt{C}}} & \multicolumn{1}{c}{{\texttt{MS}}} & \multicolumn{1}{c}{{\texttt{Best}}} \\ \hline \noalign{\smallskip}
Model & $\widehat{\Err}$ &  $\widehat{\Err}$  &  $\widehat{\Err}$  & $\widehat{\Err}$ & $\widehat{\Err}$  &  $\widehat{\Err}$ & \\ \hline \noalign{\smallskip}
Model $1$ ($a=0.25$) & 0.023 (0.015)  & 0.028 (0.023)   & 0.012 (0.002) & 0.018 (0.008)  & 0.019 (0.006) &  0.012 (0.002)  \\
Model $1$ ($a=1$) & 0.335 (0.265)  & 0.381 (0.343) & 0.170 (0.014)  & 0.262 (0.090) & 0.278 (0.081)  & 0.169 (0.018) \\
Model $2$ & 0.193 (0.132) & 0.227 (0.189)  & 0.074 (0.013) & 0.159 (0.055)  & 0.148 (0.054) & 0.073 (0.010)  \\
Model $3$ & 0.252 (0.082) & 0.278 (0.149)  & 0.180 (0.015)  & 0.259 (0.029)  & 0.270 (0.035)  & 0.179 (0.015)  \\
Model $4$ & 0.021 (0.014) & 0.026 (0.027)  & 0.009 (0.002) & 0.019 (0.012) & 0.017 (0.015)  & 0.009 (0.002)  \\
Model $5$ & 0.295 (0.144)  & 0.336 (0.209)  & 0.195 (0.019)  & 0.313 (0.079) & 0.317 (0.095)  & 0.194 (0.015)  \\
\hline
\end{tabular} }}
\caption{Average and standard deviation of the empirical $L^2$-error of the three estimators with $n\neq N$.}
\label{Tab:2}
\end{table}
\begin{figure}[ht]%
 \centering
 \subfloat[$n=N=100$]{\includegraphics[scale=0.24]{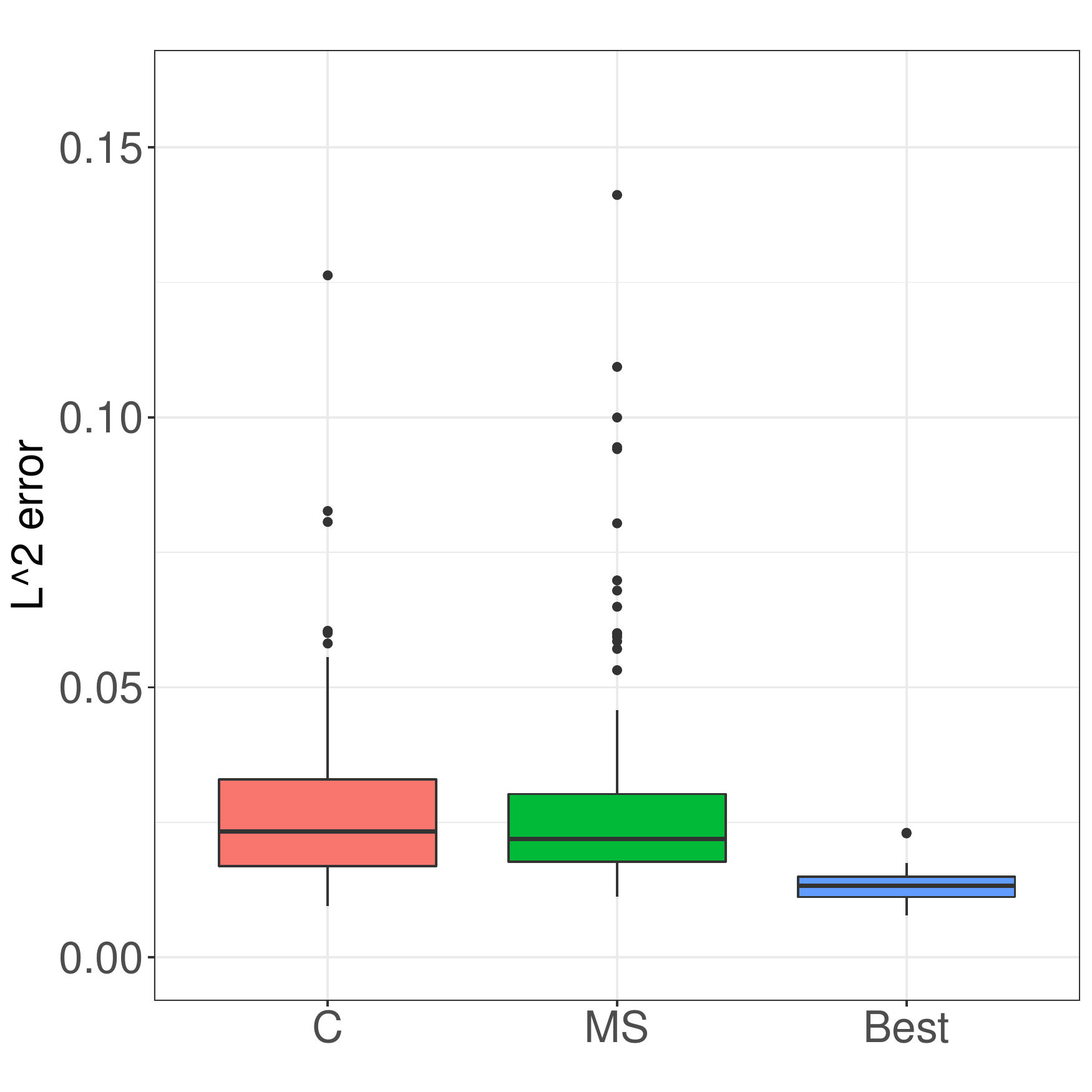}}%
 \subfloat[$n=1000, N=100$]{\includegraphics[scale=0.24]{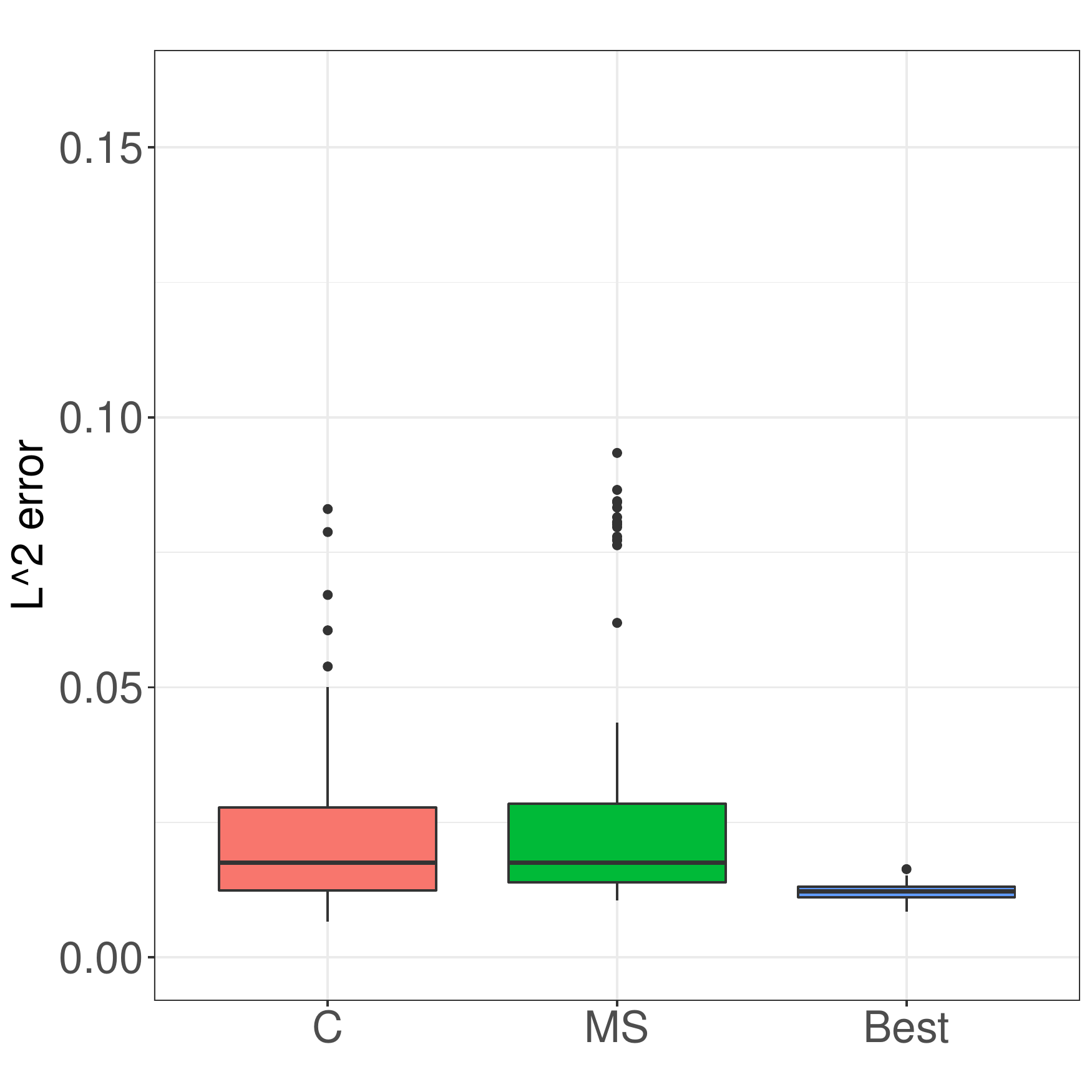}}
 \subfloat[$n=100, N=1000$]{\includegraphics[scale=0.24]{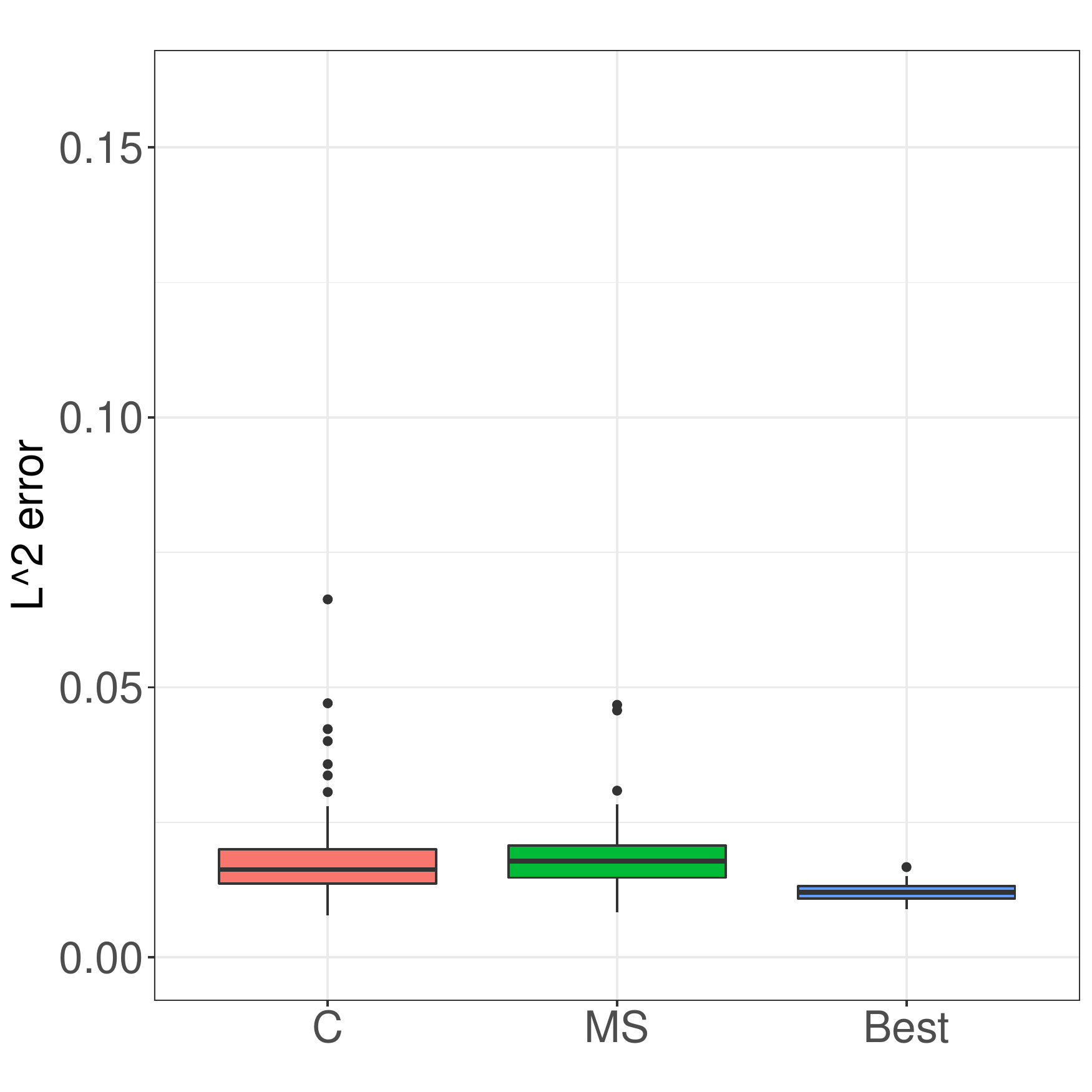}}%
  \subfloat[$n=N=1000$]{\includegraphics[scale=0.24]{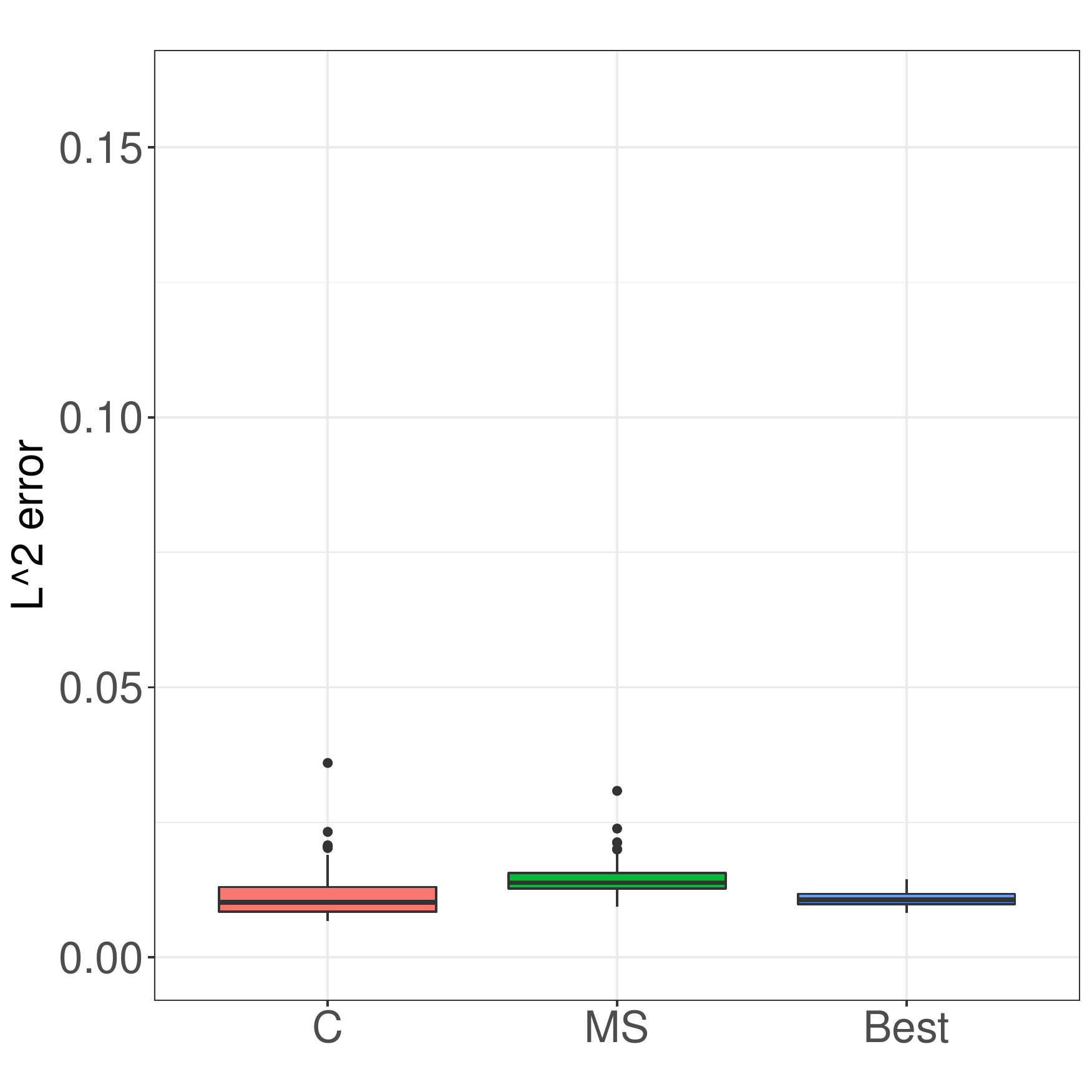}}
 \caption{Boxplot of the  empirical $L^2$-error  of the estimators in Model $1$ ($a=0.25$)}%
 \label{fig:boxplot_Mod1}%
\end{figure}

\begin{figure}[ht]%
 \centering
 \subfloat[$n=N=100$]{\includegraphics[scale=0.24]{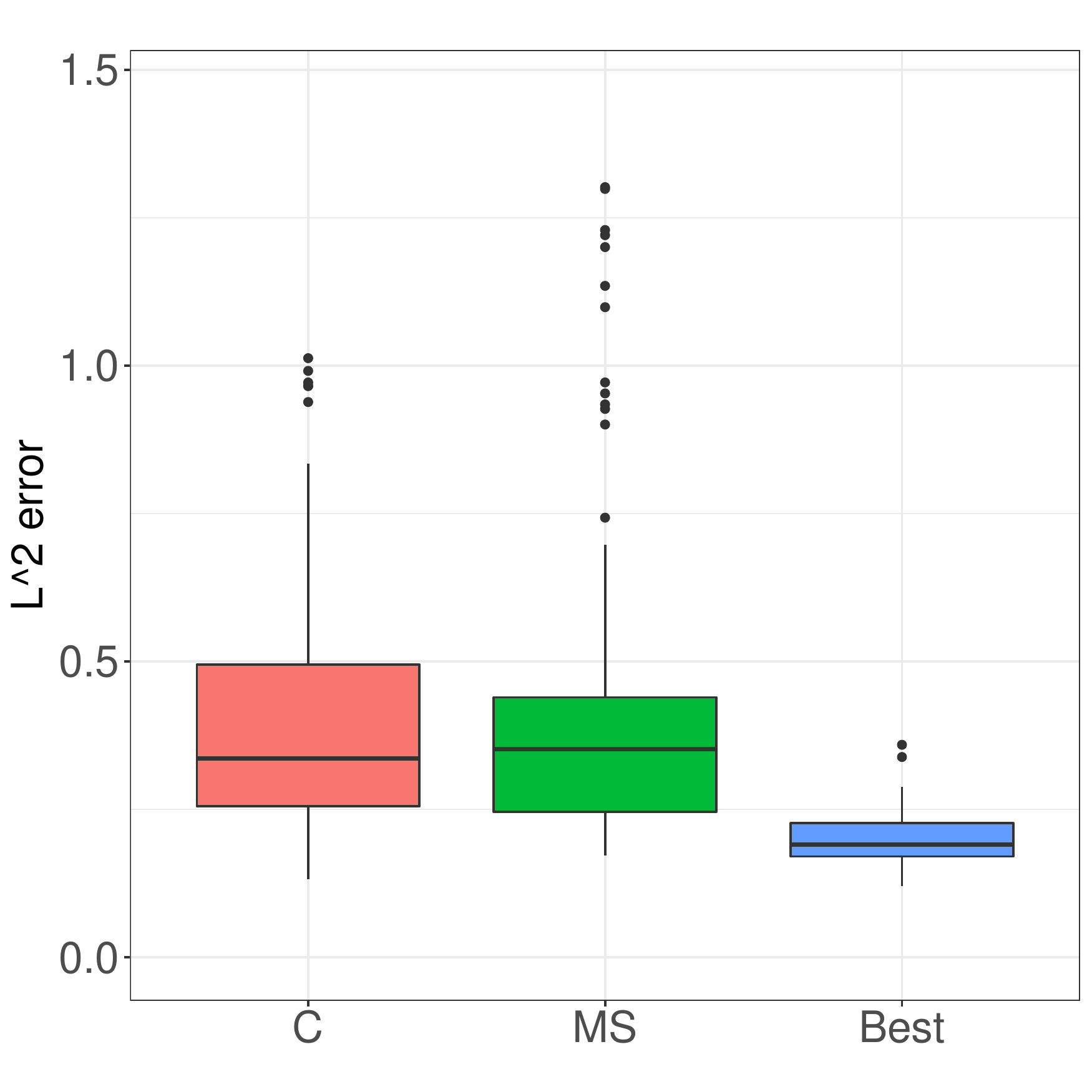}}%
 \subfloat[$n=1000, N=100$]{\includegraphics[scale=0.24]{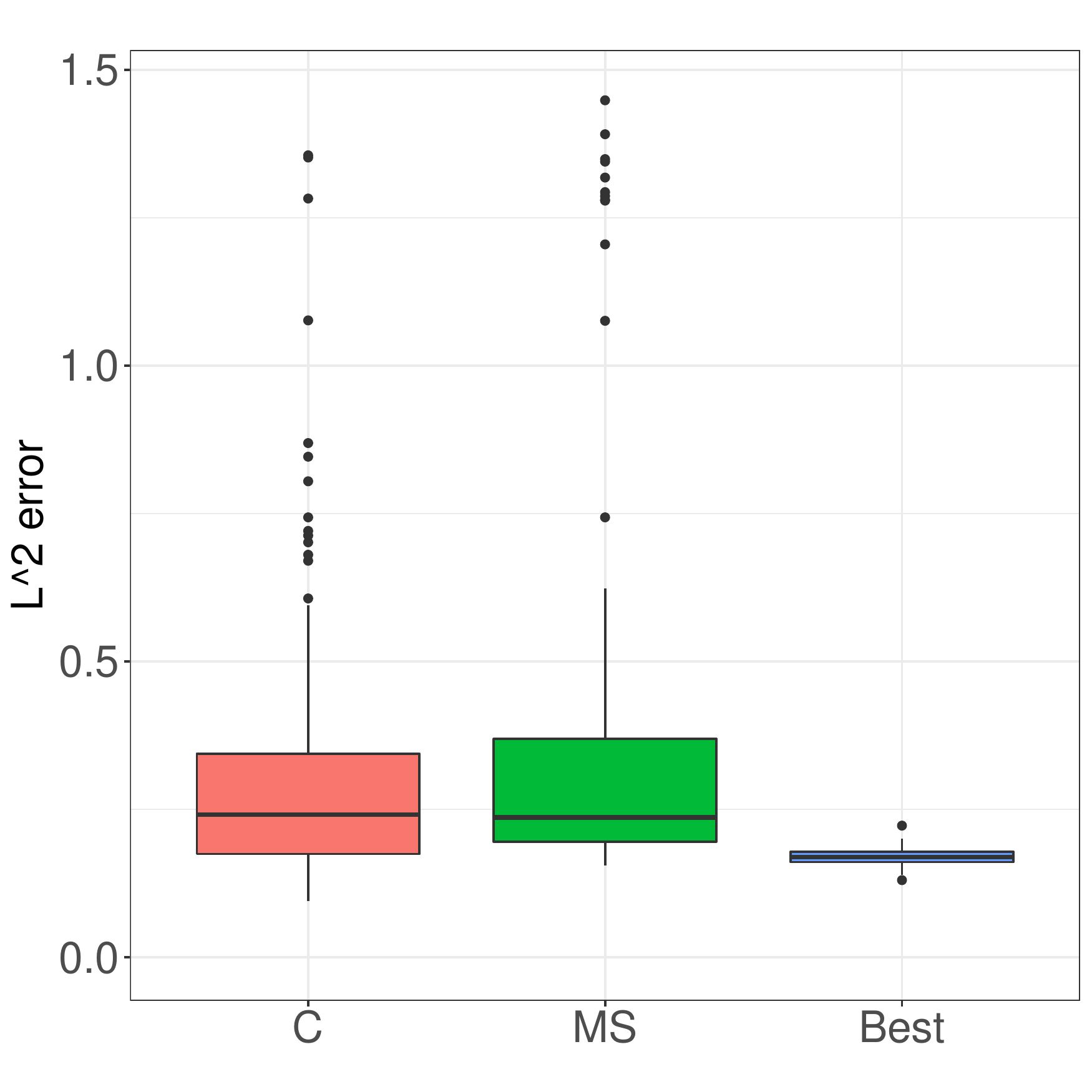}}
 \subfloat[$n=100, N=1000$]{\includegraphics[scale=0.24]{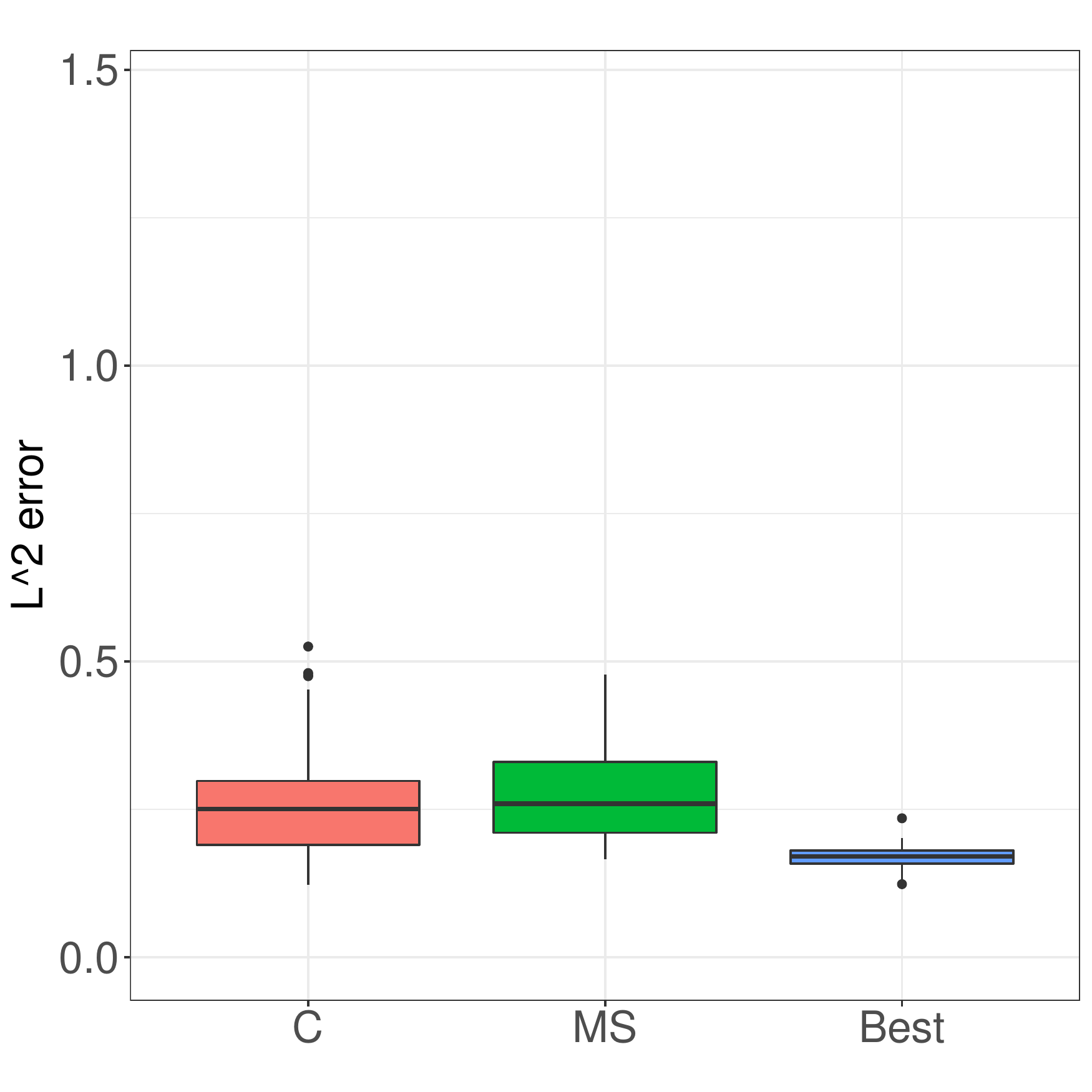}}%
  \subfloat[$n=N=1000$]{\includegraphics[scale=0.24]{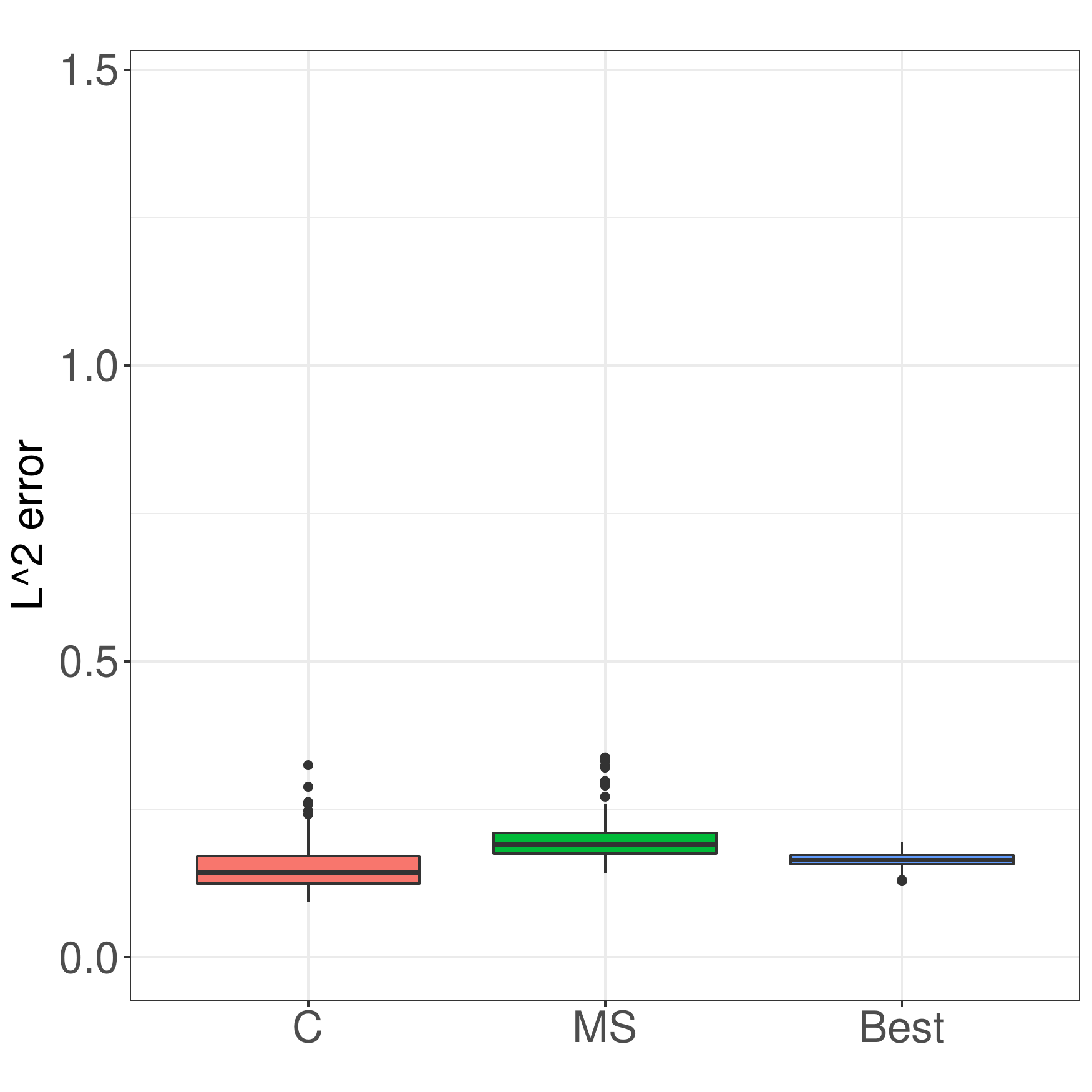}}
 \caption{Boxplot of the  empirical $L^2$-error  of the estimators in Model $1$ ($a=1$)}%
 \label{fig:boxplot_Mod1_a}%
\end{figure}
\begin{figure}[ht]%
 \centering 
 \subfloat[$n=N=100$]{\includegraphics[scale=0.24]{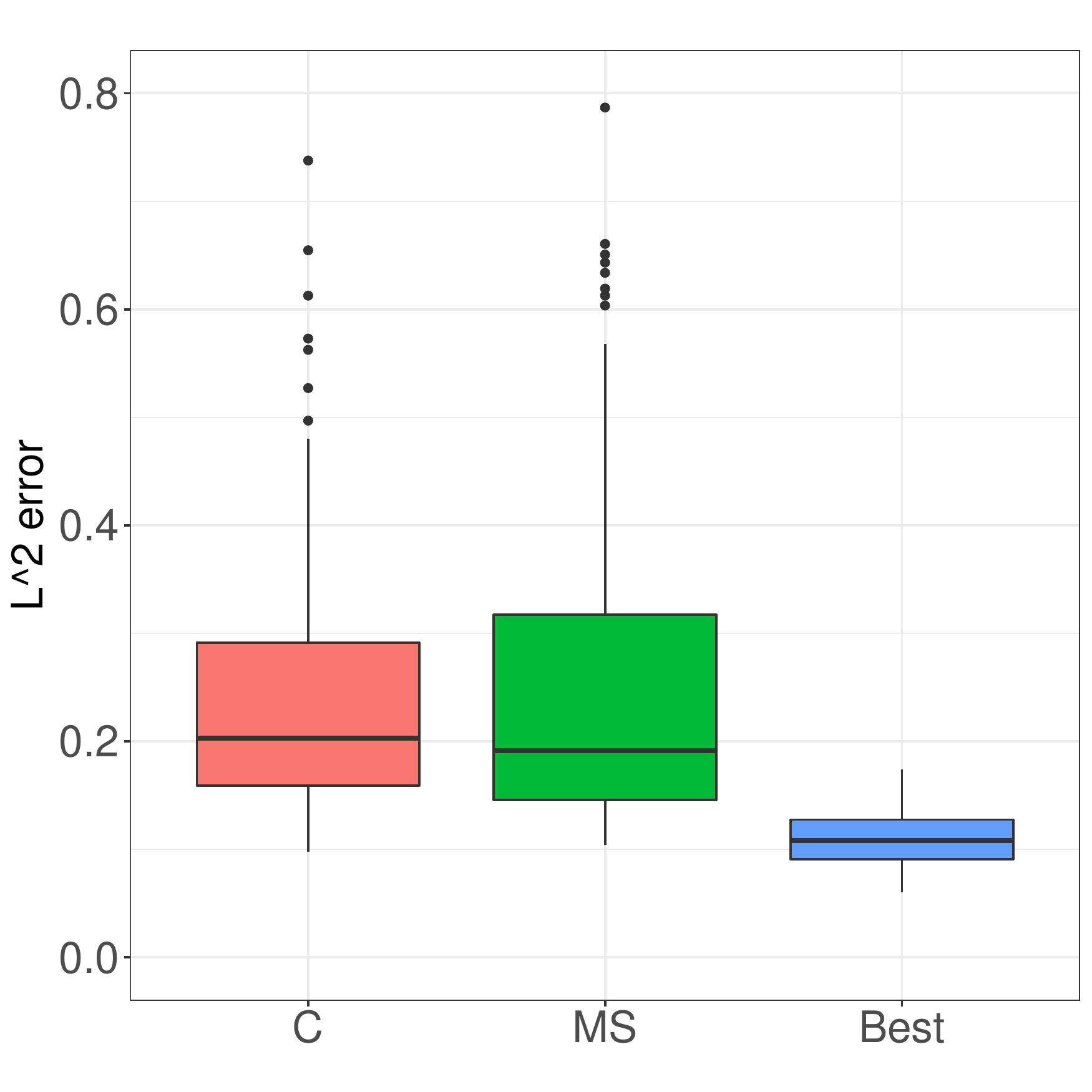}}%
 \subfloat[$n=1000, N=100$]{\includegraphics[scale=0.24]{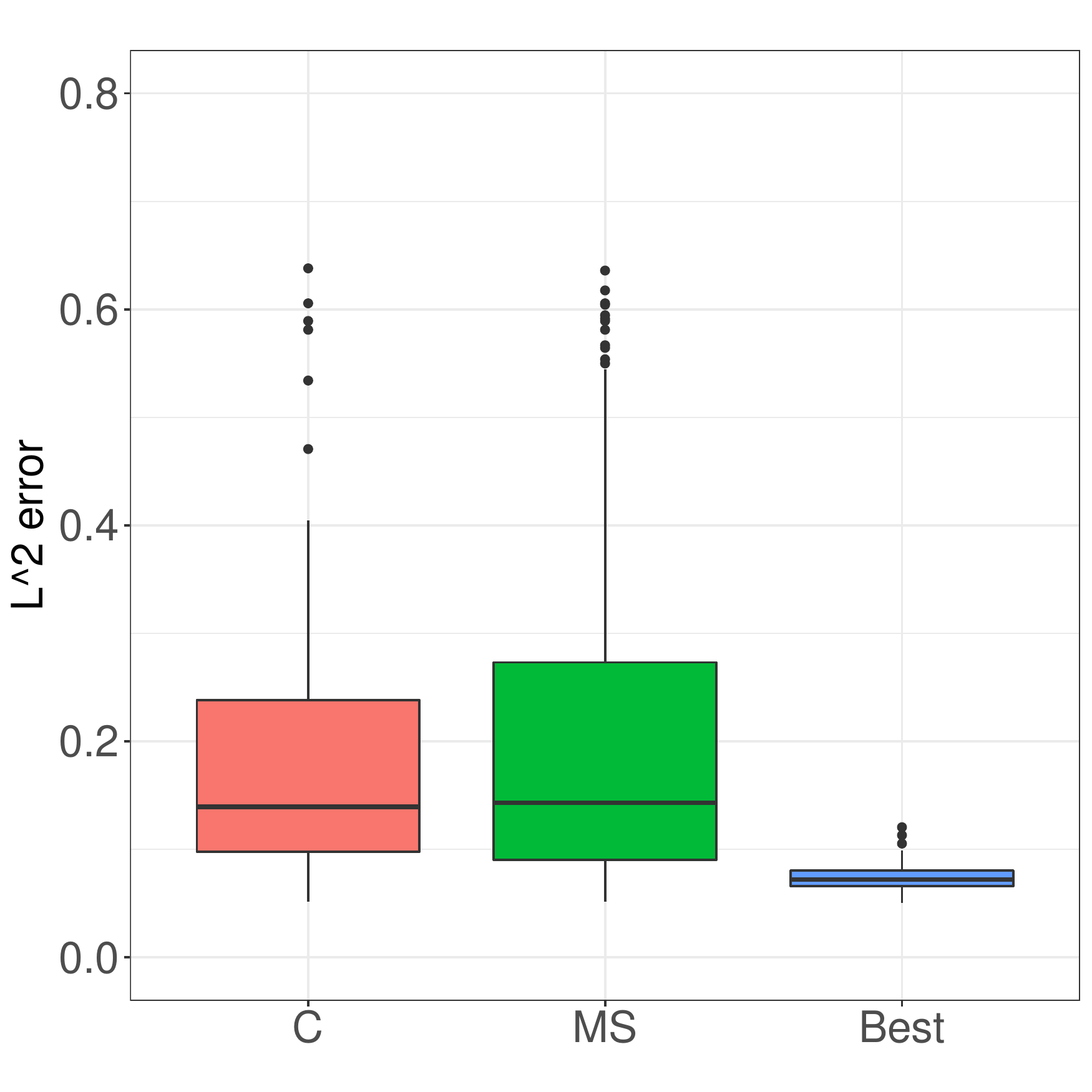}}
 \subfloat[$n=100, N=1000$]{\includegraphics[scale=0.24]{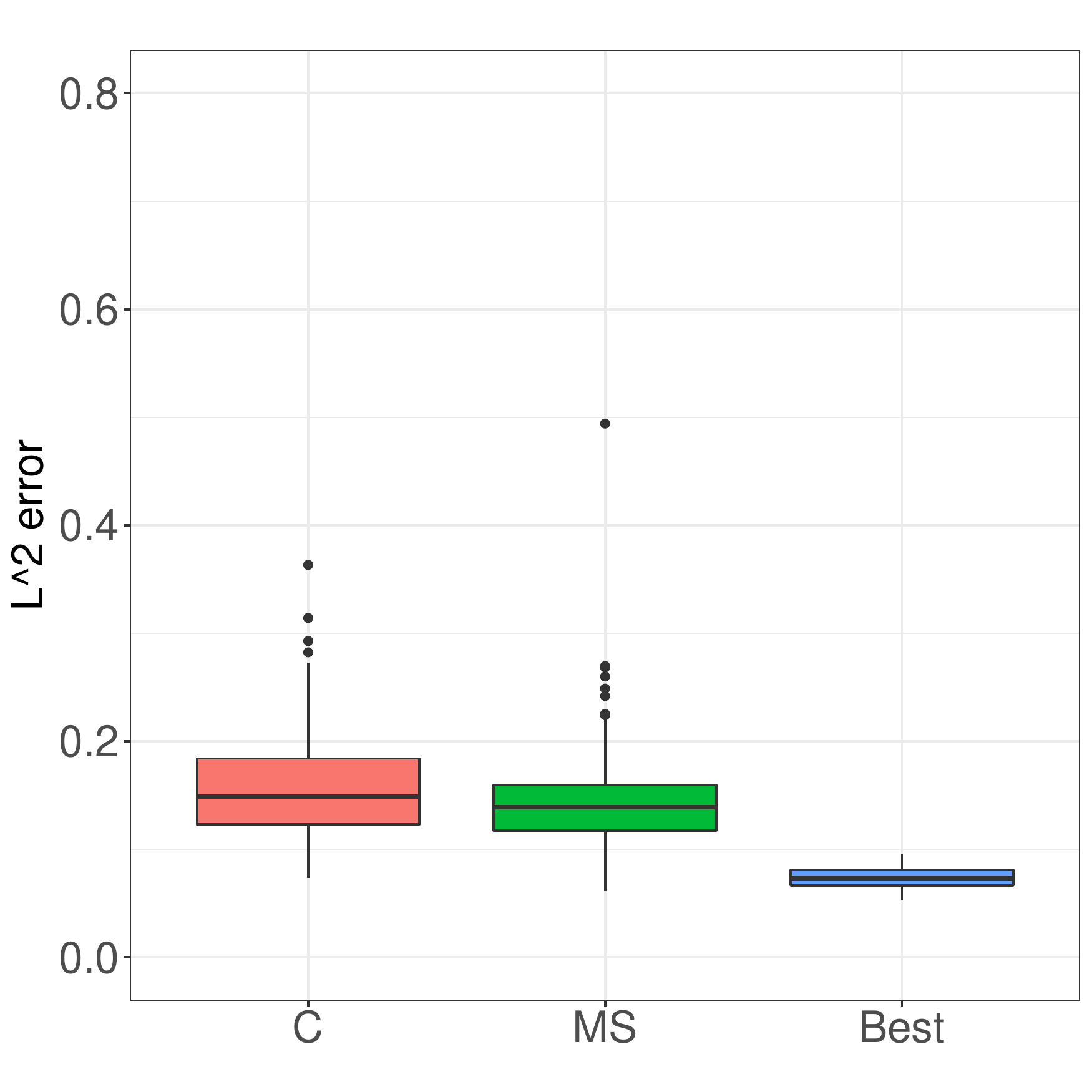}}%
  \subfloat[$n=N=1000$]{\includegraphics[scale=0.24]{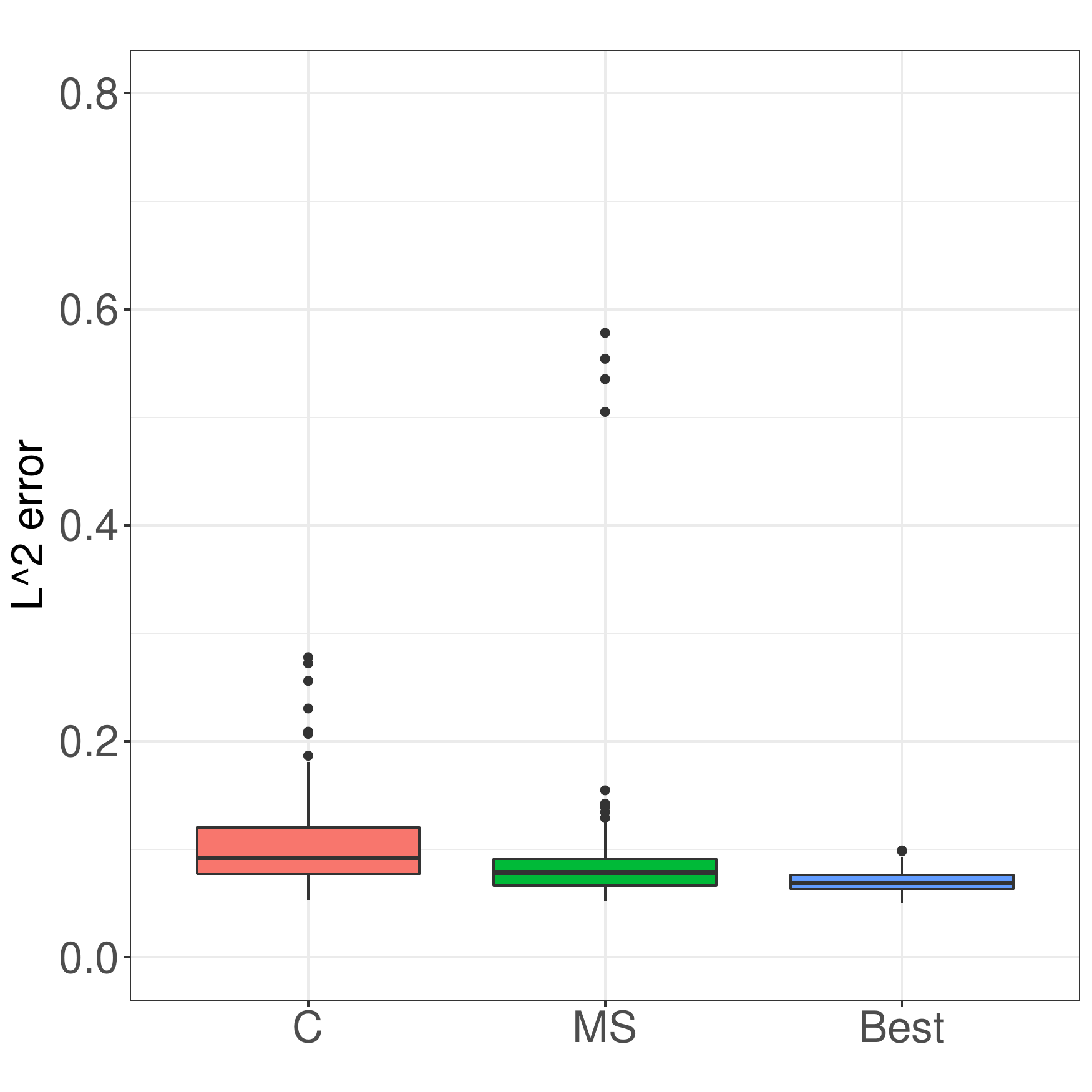}}
 \caption{Boxplot of the empirical $L^2$-error  of the estimators in Model $2$}%
 \label{fig:boxplot_Mod2}%
\end{figure}

\begin{figure}[ht]%
 \centering
 \subfloat[$n=N=100$]{\includegraphics[scale=0.24]{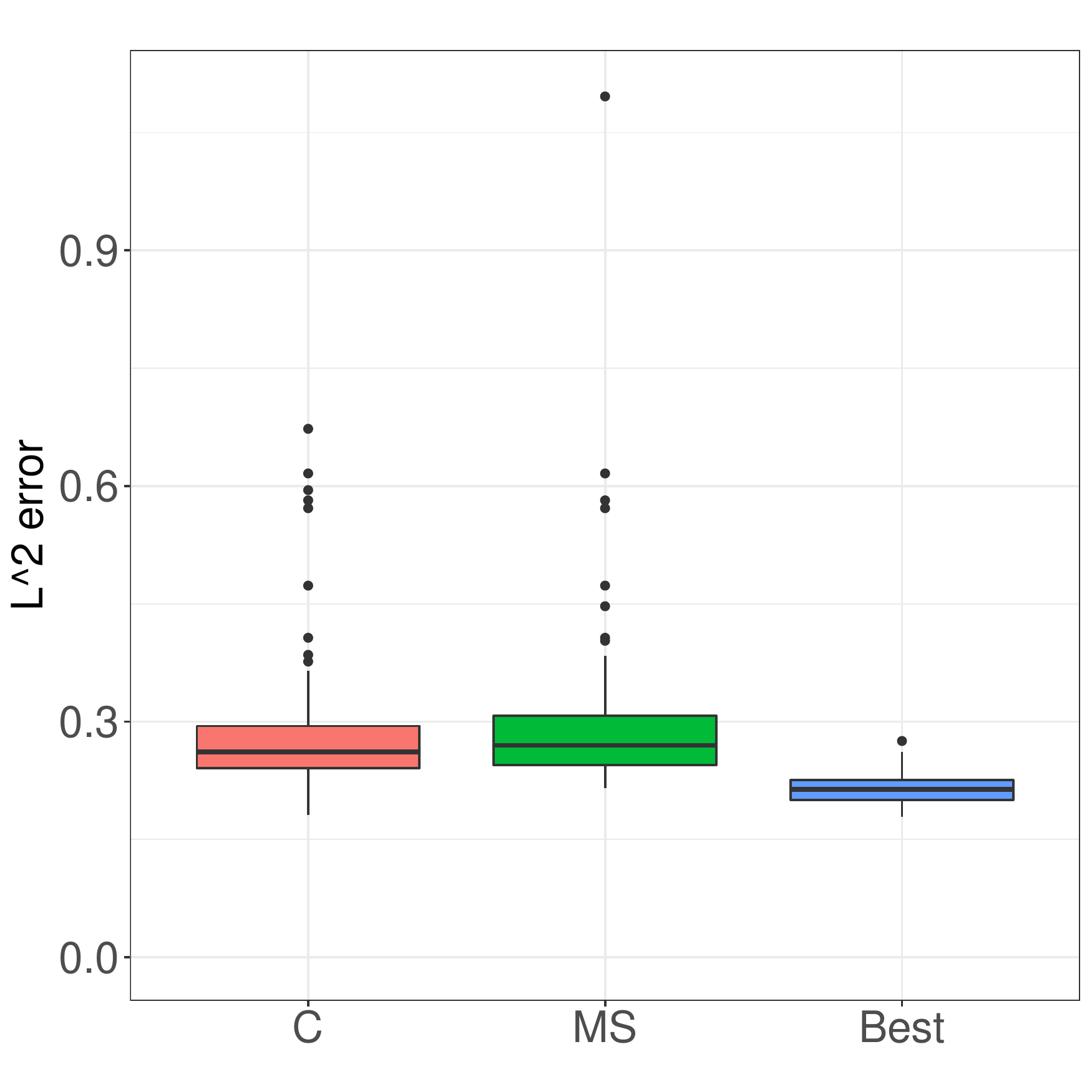}}%
 \subfloat[$n=1000, N=100$]{\includegraphics[scale=0.24]{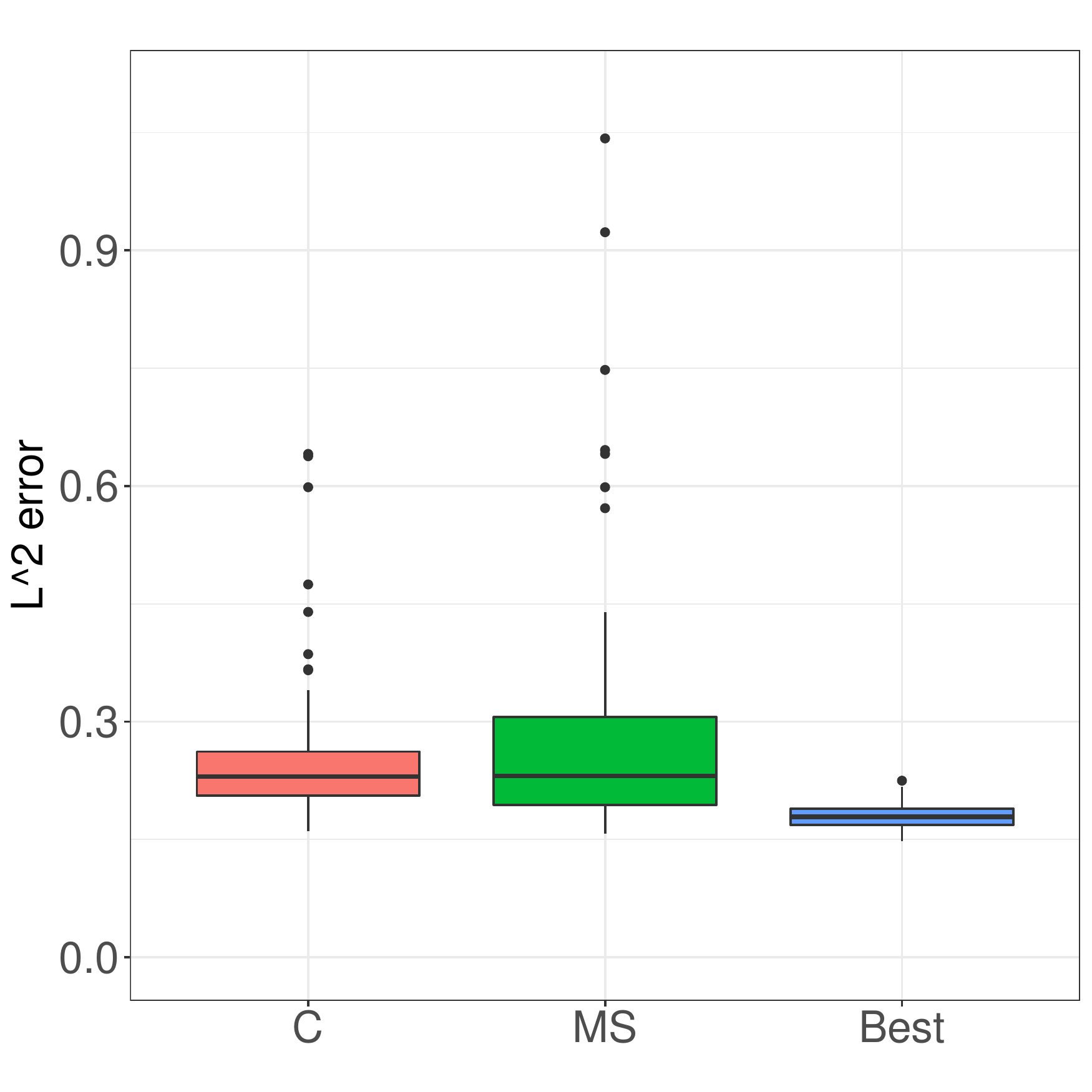}}
  \subfloat[$n=100, N=1000$]{\includegraphics[scale=0.24]{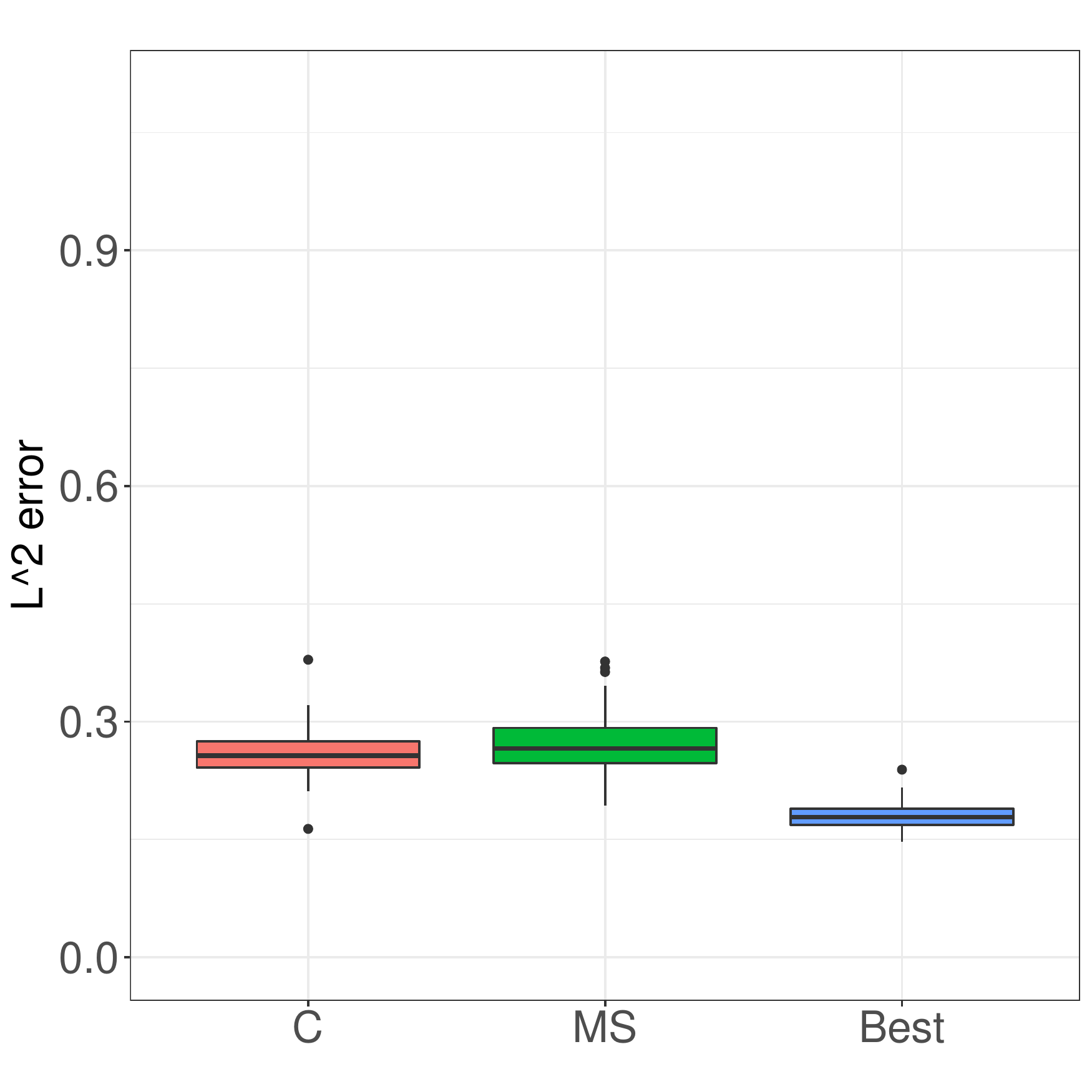}}
   \subfloat[$n=N=1000$]{\includegraphics[scale=0.24]{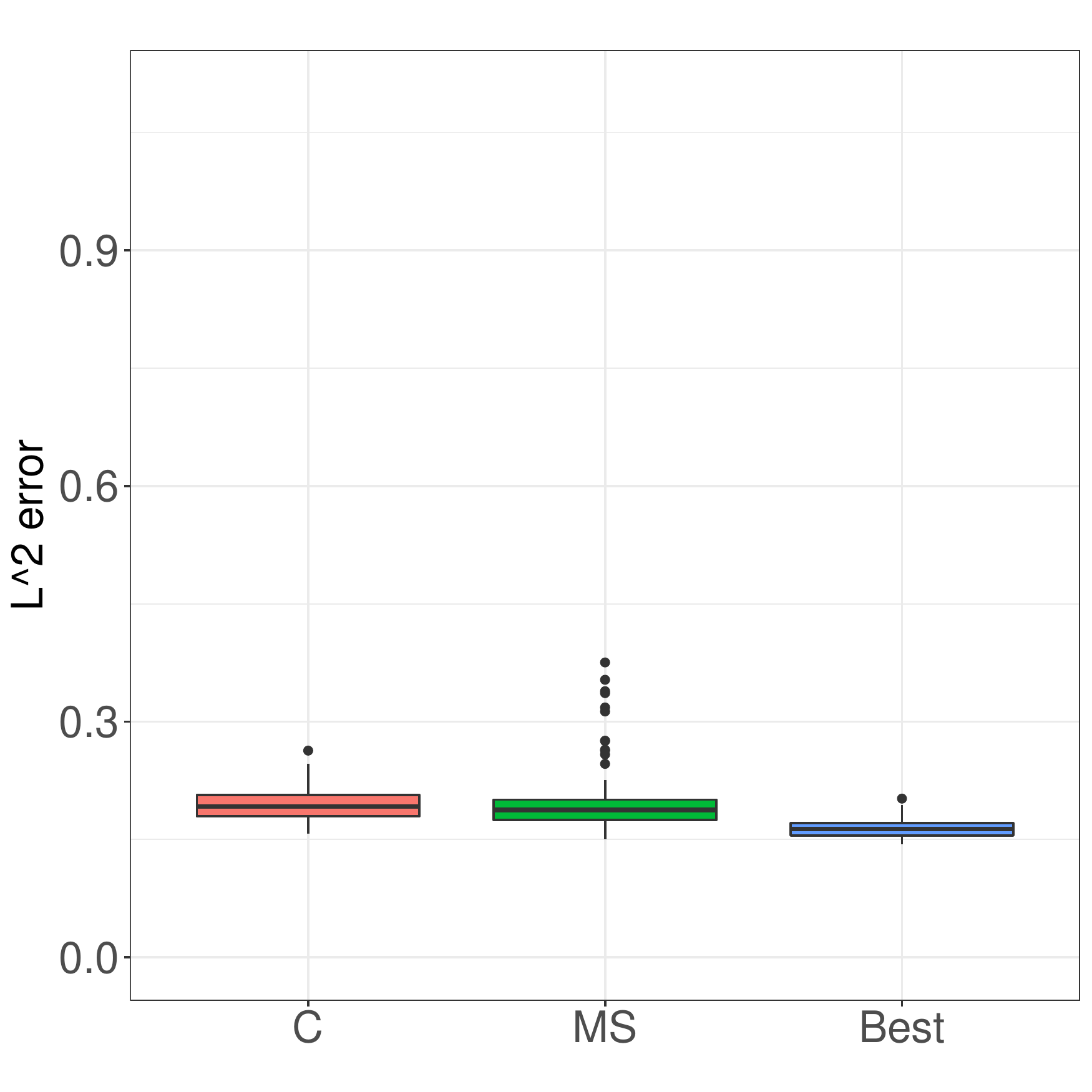}}
 \caption{Boxplot of the  $L^2$-error  of the estimators in sparse model when $p=50$, and $s=14$.}%
 \label{fig:boxplot_Mod3}%
\end{figure}
\begin{figure}[ht]%
 \centering
 \subfloat[$n=N=100$]{\includegraphics[scale=0.24]{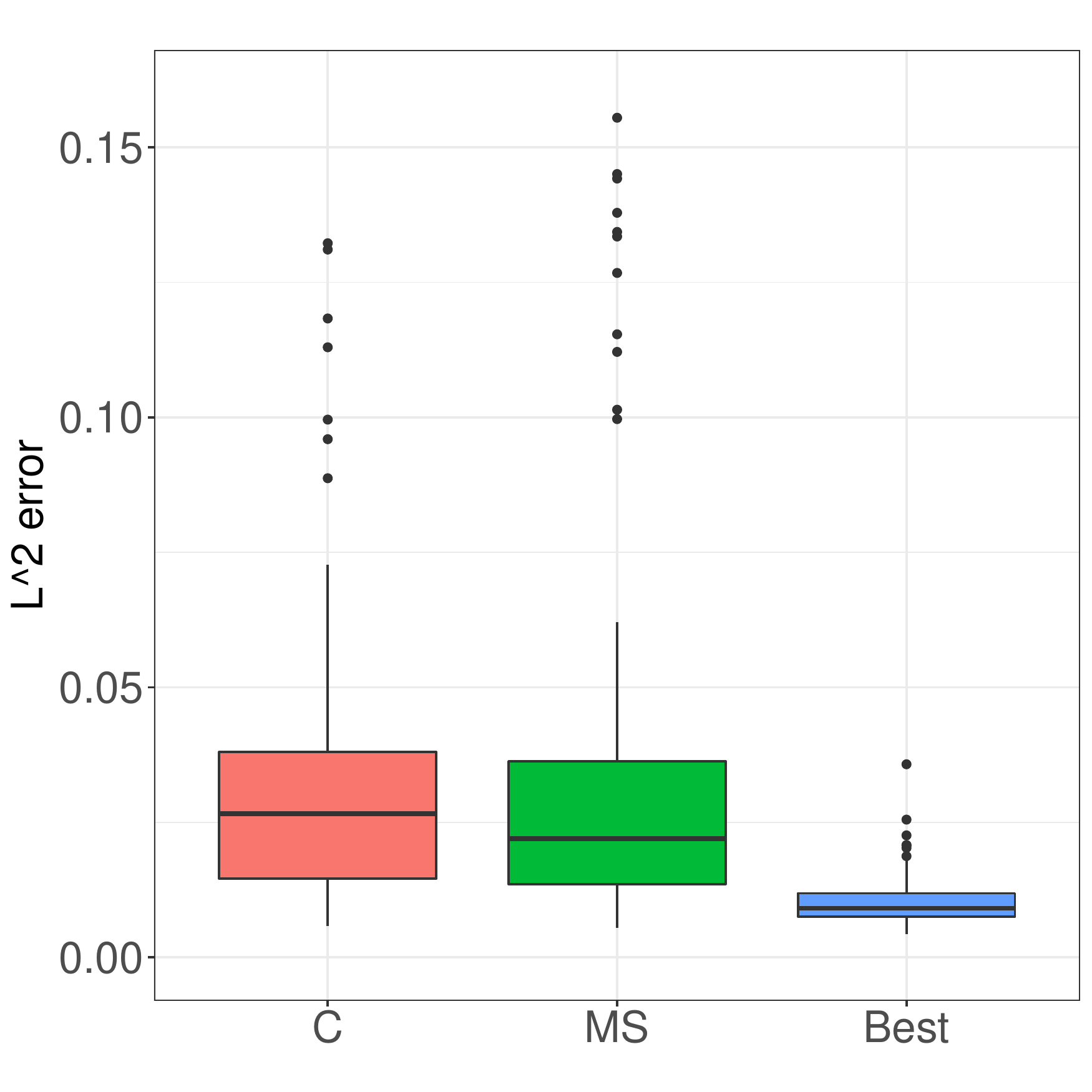}}%
 \subfloat[$n=1000, N=100$]{\includegraphics[scale=0.24]{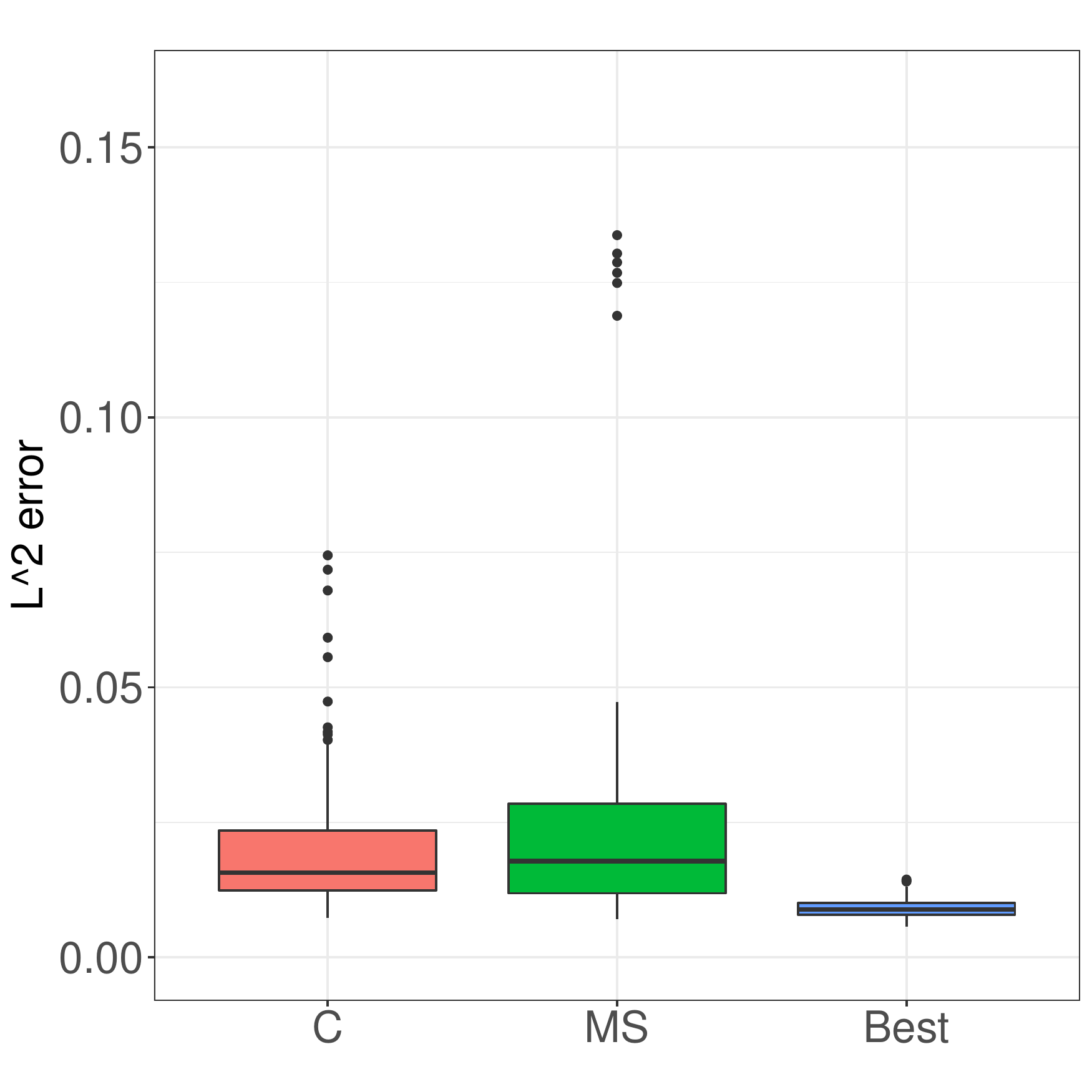}}
  \subfloat[$n=100, N=1000$]{\includegraphics[scale=0.24]{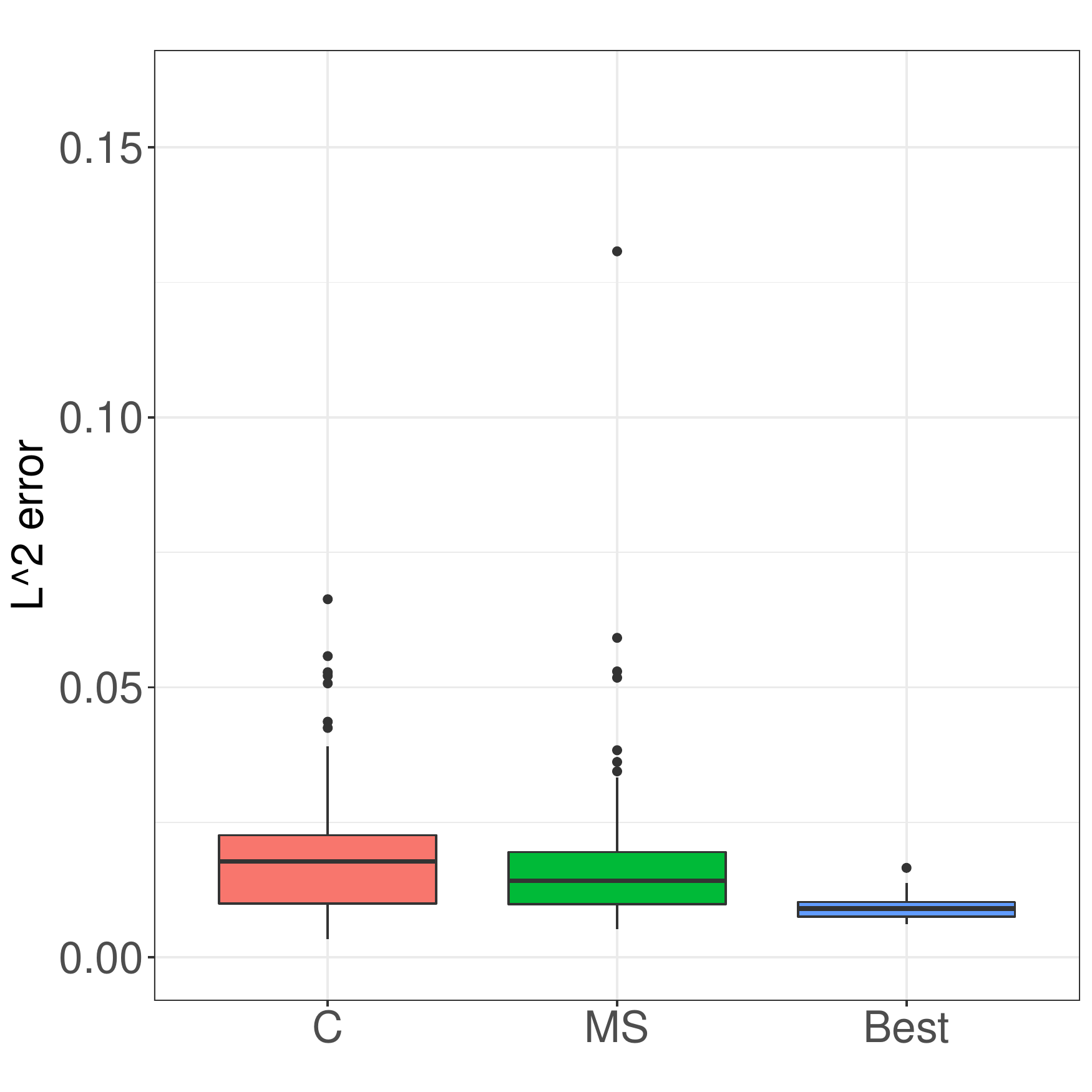}}
   \subfloat[$n=N=1000$]{\includegraphics[scale=0.24]{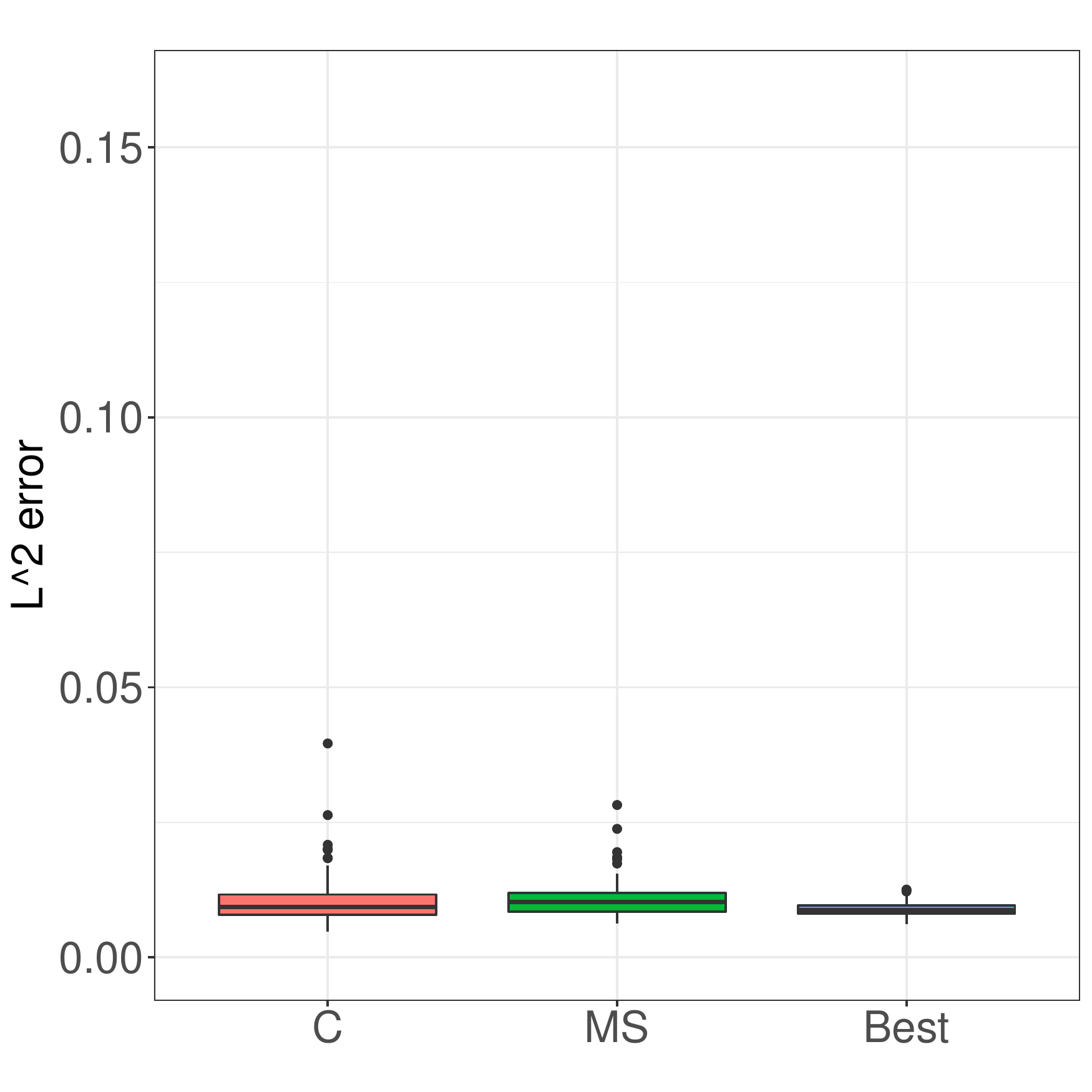}}
 \caption{Boxplot of the  $L^2$-error  of the estimators in Model $4$.}%
 \label{fig:boxplot_Mod4}%
\end{figure}
\begin{figure}[ht]%
 \centering
 \subfloat[$n=N=100$]{\includegraphics[scale=0.24]{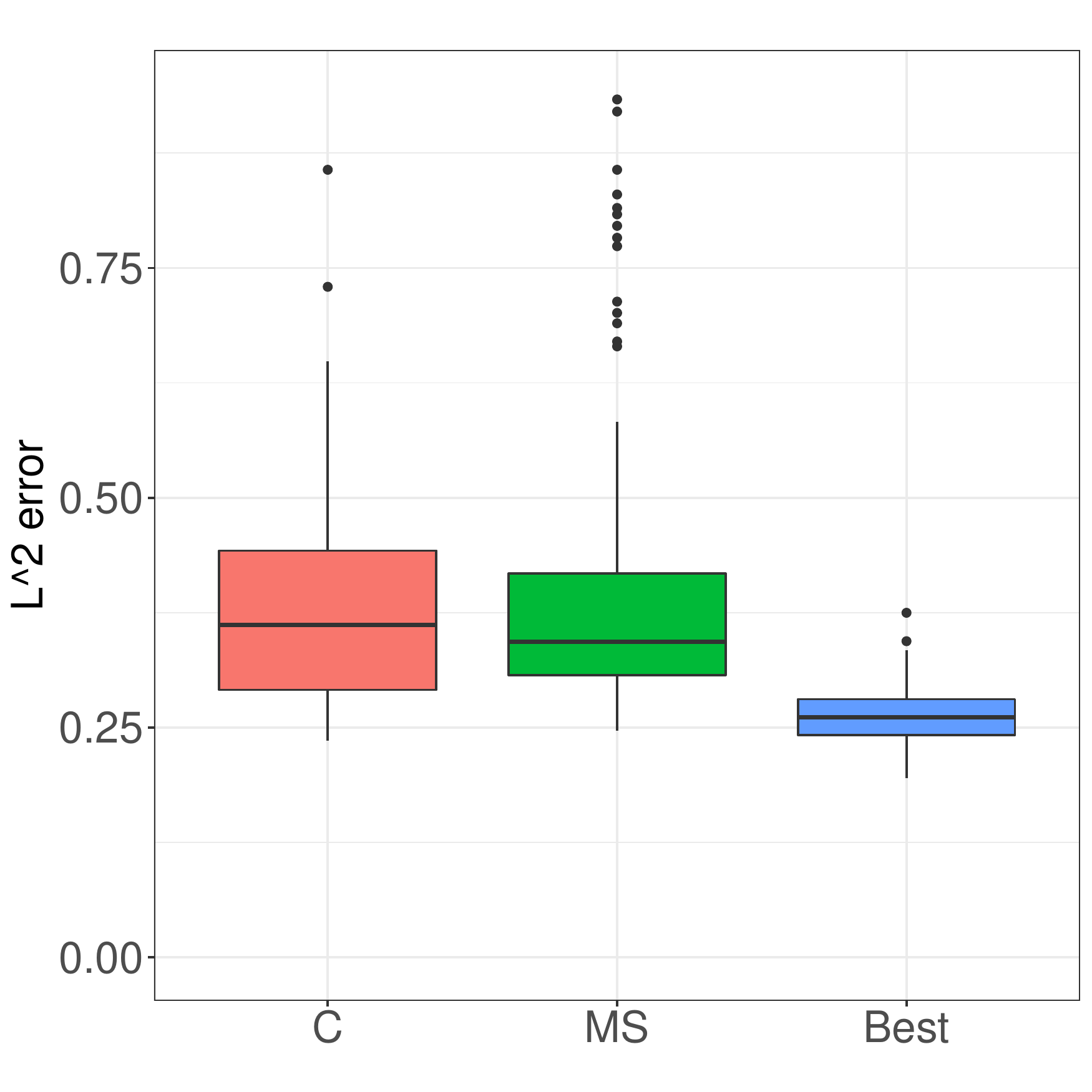}}%
 \subfloat[$n=1000, N=100$]{\includegraphics[scale=0.24]{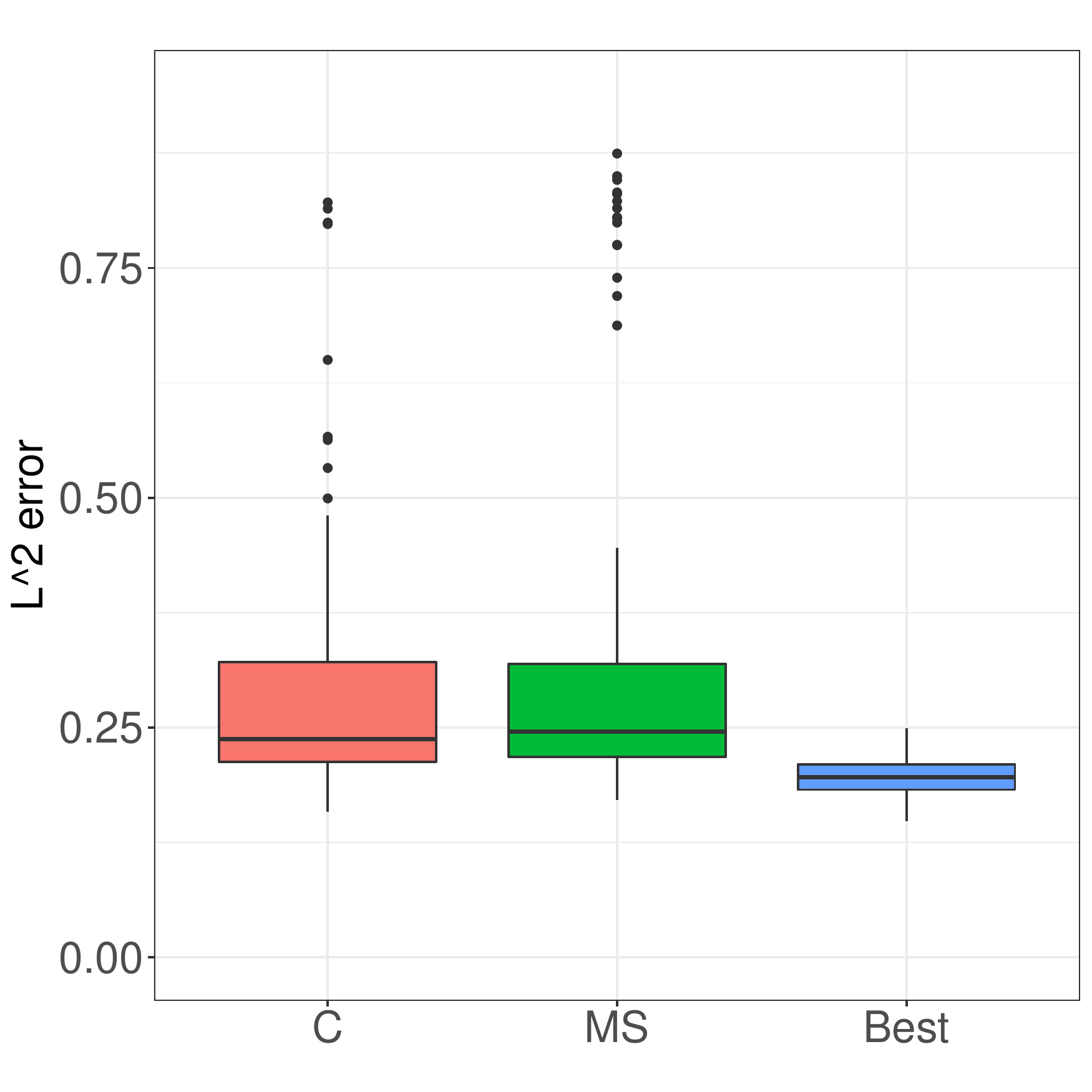}}
  \subfloat[$n=100, N=1000$]{\includegraphics[scale=0.24]{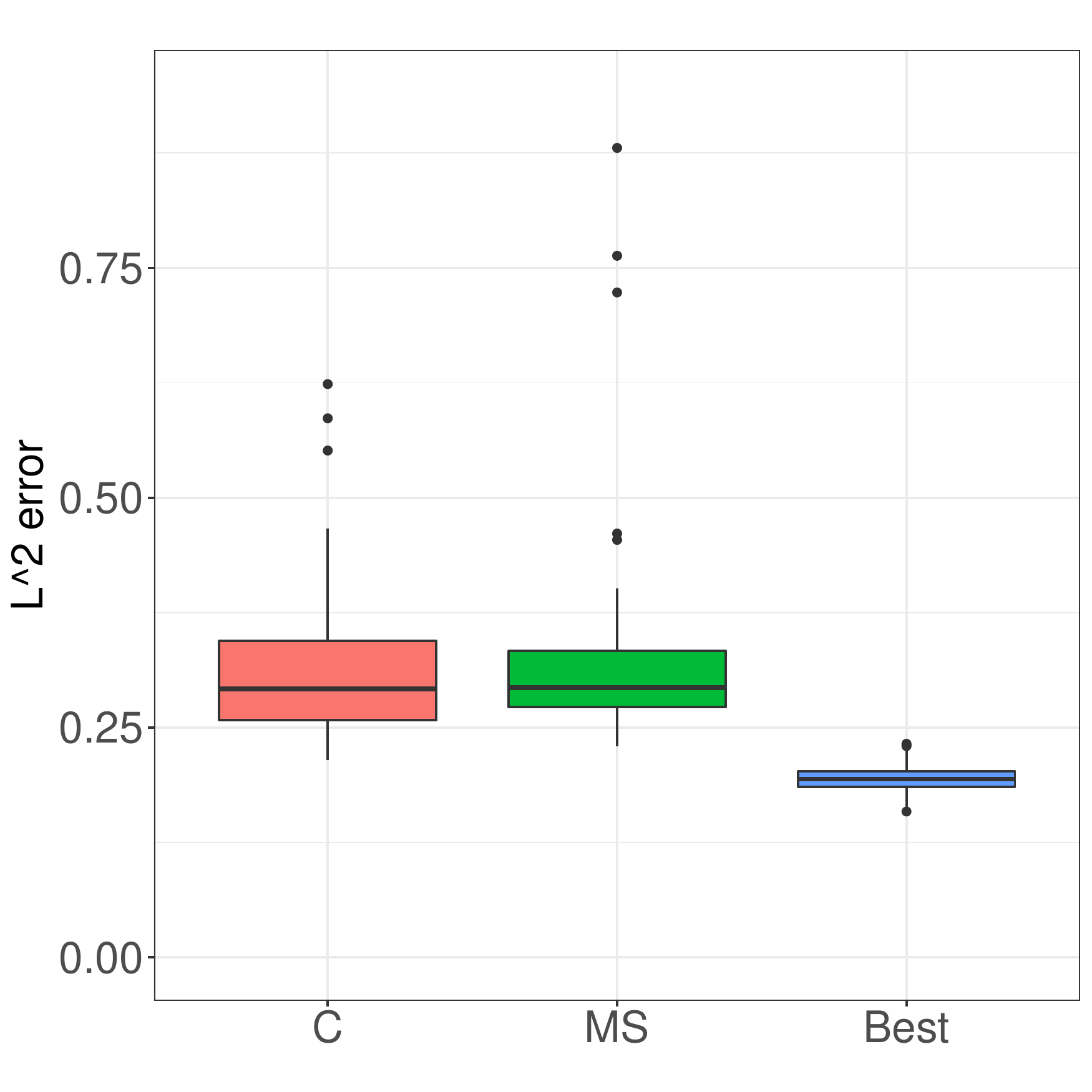}}
   \subfloat[$n=N=1000$]{\includegraphics[scale=0.24]{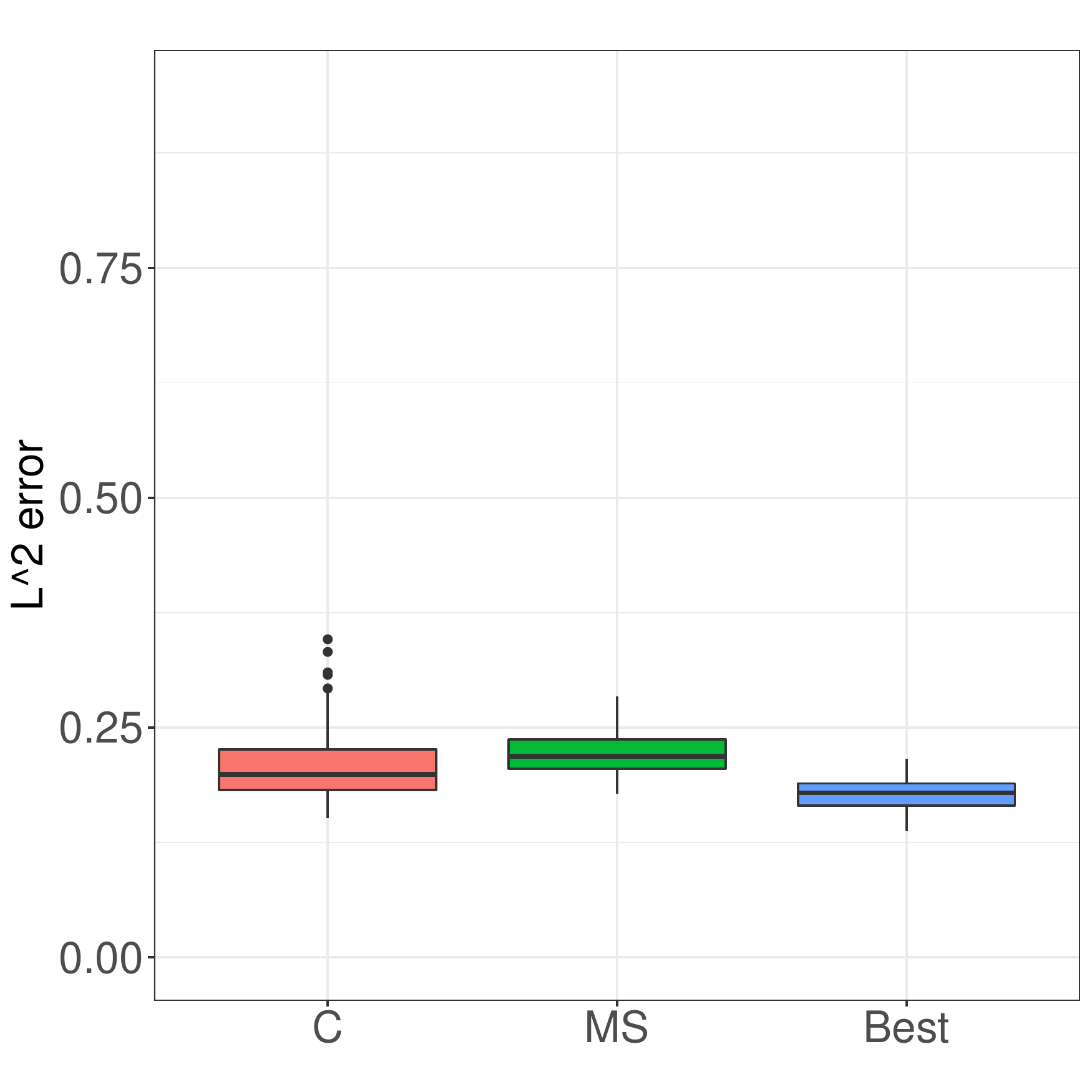}}
 \caption{Boxplot of the  $L^2$-error  of the estimators in Model $5$.}%
 \label{fig:boxplot_Mod5}%
\end{figure}
\subsection{Application to regression with reject option}
\label{subsec:App}
The variance function plays an important role in regression with reject option~\cite{Denis_Hebiri_Zaoui20}. The aim is to abstain from predicting at some hard instances and we consider the framework where rejection (abstention) rate is controlled. Let $\varepsilon \in (0,1)$, the $\varepsilon$-predictor (the optimal rule) relies on thresholding the variance function
\begin{equation*}
\label{eq:eqEpsilonPred}
    \Gamma^{*}_{\varepsilon}(x):=\begin{cases}
  \left\{f^{*}(x)\right\}  & \text{if} \;\;
  F_{{\sigma}^2}({\sigma}^2(x))\leq 1-\varepsilon\\
  \emptyset      & \text{otherwise} \enspace,
  \end{cases}
\end{equation*}
where $F_{\sigma^2}$ is the cumulative distribution function of $\sigma^{2}(X)$. The $\varepsilon$-predictor has rejection rate exactly $\varepsilon$ 
$$r\left(\Gamma^*_{\varepsilon}\right):= \mathbb{P}\left(|\Gamma_{\varepsilon}^*(X) |=  0\right) =
\mathbb{P}\left(F_{\sigma^2}(\sigma^{2}(X)) \geq 1-\varepsilon \right) = \varepsilon.
$$
Moroever, the performance of $\Gamma^{*}_{\varepsilon}$ is measured by the  $L_2$-error when prediction is performed
\begin{equation*}
\Err\left(\Gamma^{*}_{\varepsilon} \right)  :=  \mathbb{E}\left[(Y-f^{*}(X))^2 \; | \; |\Gamma^{*}_{\varepsilon} (X)| = 1\right] \enspace .
\end{equation*}
The $L_2$ error and the rejection rate of $\Gamma^{*}_{\varepsilon}$ are working in two opposite directions \wrt $\varepsilon$.
\begin{proposition}[Proposition~1 in~\cite{Denis_Hebiri_Zaoui20}]
 For any $\varepsilon < \varepsilon'$, the following holds
\begin{equation*}
\Err\left(\Gamma^{*}_{\varepsilon'}\right) \leq \Err\left(\Gamma^{*}_{\varepsilon }\right) \;\; {\rm and} \;\;
r\left(\Gamma^{*}_{\varepsilon'}\right) \geq  r\left(\Gamma^{*}_{\varepsilon }\right) \enspace .
\end{equation*}
\end{proposition}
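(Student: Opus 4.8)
I would dispatch the rejection-rate claim first, as it is immediate: it has already been recorded above the statement that $r(\Gamma^*_\varepsilon)=\varepsilon$, so for $\varepsilon<\varepsilon'$ one has $r(\Gamma^*_{\varepsilon'})=\varepsilon'>\varepsilon=r(\Gamma^*_\varepsilon)$, hence in particular $r(\Gamma^*_{\varepsilon'})\ge r(\Gamma^*_\varepsilon)$. All the real content lies in the monotonicity of the conditional $L^2$-error.

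The plan for that part is to rewrite $\Err(\Gamma^*_\varepsilon)$ as a conditional mean of the variance function and then show that mean is nonincreasing in $\varepsilon$. The non-rejection event $\{|\Gamma^*_\varepsilon(X)|=1\}=\{F_{\sigma^2}(\sigma^2(X))\le 1-\varepsilon\}$ is $\sigma(X)$-measurable, so the tower property combined with $\mathbb{E}[(Y-f^*(X))^2\mid X]=\sigma^2(X)$ gives
\[
\Err(\Gamma^*_\varepsilon)=\mathbb{E}\left[\sigma^2(X)\mid F_{\sigma^2}(\sigma^2(X))\le 1-\varepsilon\right],
\]
the conditioning event having probability at least $1-\varepsilon'>0$ for both thresholds involved. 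Writing $V=\sigma^2(X)$, $F=F_{\sigma^2}$ and $g(u)=\mathbb{E}[V\mid F(V)\le u]$, it then suffices to prove that $u\mapsto g(u)$ is nondecreasing on the set of $u$ with $\mathbb{P}(F(V)\le u)>0$; taking $u=1-\varepsilon'<1-\varepsilon$ yields $\Err(\Gamma^*_{\varepsilon'})\le \Err(\Gamma^*_\varepsilon)$.

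To see that $g$ is nondecreasing, I would fix $t'<t$ and split $\{F(V)\le t\}$ into the disjoint union of $\{F(V)\le t'\}$ and the annulus $\{t'<F(V)\le t\}$. The key elementary point is a pointwise comparison coming from the monotonicity of $F$: if $F(v_1)\le t'$ and $F(v_2)>t'$ then $v_1\le v_2$, since $v_1>v_2$ would force $F(v_1)\ge F(v_2)>t'$. Hence every value of $V$ on $\{F(V)\le t'\}$ is no larger than any value of $V$ on the annulus, so $\mathbb{E}[V\mid F(V)\le t']\le\mathbb{E}[V\mid t'<F(V)\le t]$ whenever the annulus has positive probability. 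Applying the mediant inequality (if $a/b\le c/d$ with $b,d>0$, then $a/b\le (a+c)/(b+d)\le c/d$) with $a=\mathbb{E}[V\one_{\{F(V)\le t'\}}]$, $b=\mathbb{P}(F(V)\le t')$, $c=\mathbb{E}[V\one_{\{t'<F(V)\le t\}}]$, $d=\mathbb{P}(t'<F(V)\le t)$ then yields $g(t')\le g(t)$. In the degenerate case $\mathbb{P}(t'<F(V)\le t)=0$, the events $\{F(V)\le t'\}$ and $\{F(V)\le t\}$ coincide up to a null set, so $g(t')=g(t)$ and the inequality is trivial.

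The computations are routine throughout; the only places that need a little care are the measurability reduction that turns $\Err$ into a genuine conditional mean of $\sigma^2$, the handling of the degenerate (null-annulus) and atom cases, and keeping the direction of the inequalities straight when passing between $\varepsilon$ and the threshold $1-\varepsilon$. I do not expect a genuine obstacle here: the ``hard part'' is really just phrasing the monotonicity of $g$ cleanly and reducing to it.
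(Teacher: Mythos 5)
Your argument is correct. Note, however, that the paper does not prove this statement at all: it is imported verbatim as Proposition~1 of the cited reference \cite{Denis_Hebiri_Zaoui20}, so there is no in-text proof to compare against. Your route is the natural one and, in spirit, the same as in that reference: the rejection-rate claim follows from $r(\Gamma^*_\varepsilon)=\varepsilon$ (or simply from the nestedness of the rejection events), and for the error you use that the acceptance event is $\sigma(X)$-measurable, so the tower property replaces $(Y-f^*(X))^2$ by $\sigma^2(X)$, reducing everything to the monotonicity of $u\mapsto \mathbb{E}\left[\sigma^2(X)\,\middle|\,F_{\sigma^2}(\sigma^2(X))\le u\right]$, which your pointwise comparison on the annulus plus the mediant inequality delivers cleanly, with the null-annulus case correctly set aside. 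The only hypothesis you use implicitly is the one the paper itself relies on when asserting $r(\Gamma^*_\varepsilon)=\varepsilon$ (continuity of $F_{\sigma^2}$, or the randomization device of the reference), which guarantees that the conditioning events $\{F_{\sigma^2}(\sigma^2(X))\le 1-\varepsilon\}$ have positive probability for $\varepsilon,\varepsilon'\in(0,1)$; it is worth stating that explicitly, but it is not a gap relative to the paper's framework.
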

The estimate of $\Gamma^{*}_{\varepsilon}$ needs two independent samples $\mathcal{D}_N$ and $\mathcal{D}_{\mathcal{N}}$ where $\mathcal{D}_{\mathcal{N}}$ is composed of $\mathcal{N}$ independent copies of the feature $X$. The sample $\mathcal{D}_N$ will be used to construct estimators $\hat{f}$ and $\hat{\sigma}^2$ of $f^*$ and $\sigma^2$. Besides, we consider the randomized prediction $\hat{\hat{\sigma}}^{2}(X)=\hat{\sigma}^{2}(X)+\zeta$ where $\zeta\sim \mathcal{U}([0,u])$ is independent of every other random variable with $u>0$ is small fixed real number. Thus, we use $\mathcal{D}_{\mathcal{N}}$ to estimate $F_{{\sigma}^2}$ which is given by the empirical distribution function of $\hat{\hat{\sigma}}^2$
\begin{equation*}
\hat{F}_{\hat{\hat{\sigma}}^2}(\cdot) = \frac{1}{\mathcal{N}} \sum_{i = 1}^{\mathcal{N}}    \one_{\{ \hat{\sigma}^2(X_{N+i}) + \zeta_i \leq \cdot\}} \enspace.
\end{equation*}
Finally, the {\it plug-in $\varepsilon$-predictor} is the predictor with reject option defined for each $x \in \mathbb{R}^d$ as
 \begin{equation}
    \hat{\Gamma}_{\varepsilon}(x)=\begin{cases}
  \left\{\hat{f}(x)\right\}  & \text{if} \;\;
  \hat{F}_{\hat{\hat{\sigma}}^2}(\hat{\hat{\sigma}}^2(x))\leq 1-\varepsilon\\
  \emptyset      & \text{otherwise} \enspace.
  \end{cases}
  \label{eq:eqPlugin}
\end{equation}
This plug-in approach is shown to be consistent, see~\cite{Denis_Hebiri_Zaoui20}. In particular, the plug-in $\varepsilon$-predictor bahaves asymptotically as well as the the best predictor $\Gamma_{\varepsilon}^{* }$ both in terms of risk and rejection rate. According to the Model~1 and Model~5 we have two different situations to use of reject option. In the case $a=0.25$, the use of the reject option may seem less significant because of the values of the variance function is smaller than $1$. On the contrary, we have 71.3\% of values of $\sigma^2$ are larger than $1$ in the case when $a=1$ and 36.6\% of values of $\sigma^2$ are larger than $1$ in Model~5. Then we may have some doubts in the associated prediction. In the sequel, we provide the estimation of the error of the $\varepsilon$-predictor. For each $\varepsilon\in \{0,0.1, 0.3,0.5,0.9\}$, we repeat $100$ times the following steps
\begin{enumerate}
\item[(i)] simulate two datasets $\mathcal{D}_{\mathcal{N}}$ and $\mathcal{D}_{M}$ with $N=100$ and $M=1000$;
\item[(ii)] based on $\mathcal{D}_{\mathcal{N}}$ which contains only unlabeled features, we compute the empirical cumulative distribution of $\sigma^{2}(X)$; 
\item[(iii)] finally, we compute the empirical rejection rate $r$ and the empirical error $\Err$ of $\Gamma_{\varepsilon}^{* }$.
\end{enumerate}
From these experiments, we compute the average and standard deviation of $\Err$ and $r$. We report the results in Table~\ref{Tab:3}
\begin{table}[!ht]
\centering
\footnotesize{
\vspace*{0.25cm}
\begin{tabular}{||l || c | c  ||  c | c|| c | c || c|}
\multicolumn{1}{c}{}  &  \multicolumn{3}{c}{{}}& \multicolumn{2}{c}{}\\ \hline
\multicolumn{1}{c}{}  &  \multicolumn{2}{c}{{Model $1$} ($a=0.25$)} & \multicolumn{2}{c}{{Model $1$ ($a=1$)}} &\multicolumn{2}{c}{{Model $5$}} 
\\ \hline \noalign{\smallskip}
$\varepsilon$ & $\Err$ &  $r$  &  $\Err$  & $r$ & $\Err$  &  $r$ & \\ \hline \noalign{\smallskip}
0 & 0.34 (0.02)  & 0.00 (0.00)  & 1.38 (0.08) & 0.00 (0.00) & 0.90 (0.04)  &  0.00 (0.00)  \\
0.1 & 0.31 (0.01) & 0.10 (0.02) & 1.26 (0.07) & 0.10 (0.02) & 0.73 (0.04) & 0.10 (0.02) \\
0.3 & 0.27 (0.01)  & 0.30 (0.03)& 1.06 (0.07) & 0.30 (0.02) & 0.51 (0.03) & 0.30 (0.02)  \\
0.5 & 0.22 (0.02) & 0.49 (0.03) & 0.89 (0.06) & 0.50 (0.03) &  0.34 (0.03) &  0.50 (0.02) \\
0.7 & 0.17 (0.02) & 0.70 (0.03) & 0.69 (0.06) & 0.70 (0.02) & 0.18 (0.02)  & 0.70 (0.03) \\
0.9 & 0.10 (0.02) & 0.90 (0.02) & 0.41 (0.07) & 0.90 (0.02) &  0.03 (0.01) & 0.90 (0.02) \\
\hline
\end{tabular}}
\caption{Average and standard deviation of the empirical $L^2$-error of the three estimators with $n\neq N$.}
\label{Tab:3}
\end{table}
Now, we evaluate the performance of {\it plug-in $\varepsilon$-predictor} considering the same algorithm for both estimation tasks and build five plug-in $\varepsilon$-predictors based respectively on support vector machines (\texttt{svm}), random forests (\texttt{rf}), and regression tree (\texttt{tree}), \texttt{C} and \texttt{MS} algorithms (constructed in section~\ref{sec:machines}). For each $\varepsilon\in \{0,0.1, 0.3,0.5,0.9\}$, and each plug-in $\varepsilon$-predictor, we compute the empirical rejection rate $\hat{r}$ and the empirical error $\widehat{\Err}$. So, we repeat independently $100$ times the following steps:
\begin{enumerate}
\item simulate four datasets $\mathcal{D}_{n}$, $\mathcal{D}_{N}$, $\mathcal{D}_{\mathcal{N}}$ and $\mathcal{D}_{T}$ with $n=N=100$ or $1000$, $\mathcal{N}=100$ and $T=1000$
\item based on $\mathcal{D}_{n}$, we compute the estimators in $\mathcal{F}$, and then based on $\mathcal{D}_N$,  we compute  an aggregates $\hat{f}_{\texttt{MS}}$,  $\hat{f}_{\texttt{CM}}$, and the \texttt{knn}, \texttt{rf} and \texttt{svm} estimators  of the  regression function $f^{*}$;
\item based on $\mathcal{D}_{n}$ and $\hat{f}_{\texttt{MS}}$ (resp. $\hat{f}_{\texttt{CM}}$), we compute the estimators in $\mathcal{G}_1$ (resp. $\mathcal{G}_2$). Then, based $\mathcal{D}_{N}$ we calculate $\hat{\sigma}^{2}_{\texttt{MS}}$, $\hat{\sigma}^{2}_{\texttt{CM}}$, \texttt{tree}, \texttt{rf} and \texttt{svm} estimators of $\sigma^2$; 
\item based on $\mathcal{D}_{\mathcal{N}}$, we compute the empirical cumulative distribution function of the randomized estimators $\hat{\hat{\sigma}}^{2}(X)$ taking $\zeta\sim \mathcal{U}([0,10^{-9}])$;
\item finally, over $\mathcal{D}_{T}$, we compute the empirical rejection rate $\hat{r}$ and the empirical error $\widehat{\Err}$ for $\hat{\Gamma}_{\varepsilon}$.
\end{enumerate}
From these estimations, we compute the average and standard deviation of $\hat{r}$ and $\widehat{\Err}$. The results are reported in Tables~\ref{Tab:ErrorErr}-~\ref{Tab:Rejectr} and Figure~\ref{fig:ErrorReject}. Firstly, we can see that the performance of $\varepsilon$-predictor in Model~1 when $a=1$ and Model~5 reflects that the regression problem is quite difficult because the decrease of the error is slower. Indeed, if $\varepsilon=0$, the empirical error of $\Gamma_{\varepsilon}^{*}$ equals $1.38$ and if $\varepsilon=0.9$ equals $0.41$. Secondly, the plug-in $\varepsilon$-predictors are decreasing \wrt $\varepsilon$ for all models (see Table~\ref{Tab:ErrorErr} and Figure~\ref{fig:ErrorReject}) and their empirical rejection rates are very close to their expected values (see Table~\ref{Tab:Rejectr}). Moroever, the performance of the methods gets better when $n$ and $N$ increase. Finally, we observe that the plug-in $\varepsilon$-predictors based on $\texttt{MS}$ and $\texttt{C}$ (\texttt{C}-plug-in $\varepsilon$-predictor is a little better than \texttt{MS}-plug-in $\varepsilon$-predictor) are close to optimal rule. Therefore, we deduce that a good estimation of regression and variance functions leads to a better plug-in $\varepsilon$-predictors. 

\begin{table}
\begin{center}
\footnotesize{
\begin{tabular}{l || ccccc||}
\multicolumn{1}{c}{} & \multicolumn{5}{c}{{Model~1 (a=0.25)}}\\
\hline
\multicolumn{1}{c}{} &  \multicolumn{5}{c}{$N=n= 100$}  \\
\hline\noalign{\smallskip}
  $\varepsilon$ \;\;\;  & \;\;   \texttt{Tree} \;\;\; & \texttt{rf}  \;\;\;& \texttt{svm} & \;\;   \texttt{C} \;\;\; &  \texttt{MS}  \\
\noalign{\smallskip}
\hline
\noalign{\smallskip}
 0   & 0.50 (0.05) & 0.39 (0.02) & 0.39 (0.02) & 0.36 (0.02) &  0.37 (0.02) \\
 0.1 & 0.49 (0.04) & 0.38 (0.03) & 0.38 (0.03) & 0.35 (0.02) &  0.35 (0.02) \\
 0.3 & 0.46 (0.05) & 0.35 (0.03) & 0.36 (0.03) & 0.32 (0.03) &   0.32 (0.03) \\
 0.5 & 0.44 (0.06) & 0.31 (0.03) & 0.34 (0.04) & 0.30 (0.04) &  0.30 (0.04) \\
 0.7 & 0.44 (0.09) & 0.29 (0.04) & 0.32 (0.05) & 0.27 (0.05) &  0.28 (0.05) \\
 0.9 & 0.45 (0.14) & 0.25 (0.08) & 0.29 (0.10) & 0.23 (0.08) & 0.23 (0.07)  \\

\hline
\end{tabular}
}
\vspace*{0.25cm}

\footnotesize{
\begin{tabular}{l || ccccc||}
\multicolumn{1}{c}{} & \multicolumn{5}{c}{{Model~1 (a=0.25)}}\\
\hline
\multicolumn{1}{c}{} &  \multicolumn{5}{c}{$N=n= 1000$}  \\
\hline\noalign{\smallskip}
  $\varepsilon$ \;\;\;  & \;\;   \texttt{Tree} \;\;\; & \texttt{rf}  \;\;\;& \texttt{svm} & \;\;   \texttt{C} \;\;\; &  \texttt{MS}  \\
\noalign{\smallskip}
\hline
\noalign{\smallskip}
 0   & 0.35 (0.02) & 0.37 (0.02)& 0.35 (0.02)  & 0.35 (0.02)&  0.35 (0.02) \\
 0.1 & 0.33 (0.02) & 0.35 (0.02) & 0.33 (0.01) & 0.33 (0.02) & 0.33 (0.02) \\
 0.3 & 0.31 (0.02) & 0.31 (0.02) & 0.30 (0.02) & 0.29 (0.02) &  0.30 (0.02) \\
 0.5 & 0.27 (0.02) & 0.28 (0.02) & 0.26 (0.02) & 0.25 (0.02)& 0.26 (0.02) \\
 0.7 & 0.25 (0.04) & 0.23 (0.03) & 0.22 (0.03) & 0.20 (0.02) & 0.21 (0.02) \\
 0.9 & 0.26 (0.06) & 0.16 (0.03) & 0.16 (0.04) & 0.14 (0.04) & 0.17 (0.04) \\

\hline
\end{tabular}
}
\vspace*{0.25cm}

\footnotesize{
\begin{tabular}{l || ccccc||}
\multicolumn{1}{c}{} & \multicolumn{5}{c}{{Model~1 (a=1)}}\\
\hline
\multicolumn{1}{c}{} &  \multicolumn{5}{c}{$N=n= 100$}  \\
\hline\noalign{\smallskip}
  $\varepsilon$ \;\;\;  & \;\;   \texttt{tree} \;\;\; & \texttt{rf}  \;\;\;& \texttt{svm} & \;\;   \texttt{C} \;\;\; &  \texttt{MS}  \\
\noalign{\smallskip}
\hline
\noalign{\smallskip}
 0   & 1.99 (0.15) & 1.54 (0.08) & 1.55 (0.10) & 1.42 (0.07) &   1.43 (0.08) \\
 0.1 & 1.96 (0.20) & 1.48 (0.09) & 1.50 (0.12) & 1.37 (0.09) & 1.38 (0.10) \\
 0.3 & 1.87 (0.21) & 1.35 (0.12) & 1.41 (0.14) & 1.22 (0.09) & 1.24 (0.11)   \\
 0.5 & 1.83 (0.28) & 1.27 (0.14) & 1.35 (0.15) & 1.16 (0.12) & 1.18 (0.15) \\
 0.7 & 1.76 (0.35) & 1.14 (0.20) & 1.28 (0.22) & 1.01 (0.19) &  1.07 (0.22) \\
 0.9 & 1.88 (0.52) & 1.01 (0.27) & 1.18 (0.36) & 0.85 (0.26) & 0.87 (0.28) \\

\hline
\end{tabular}
}

\vspace*{0.25cm}
\footnotesize{
\begin{tabular}{l || ccccc||}
\multicolumn{1}{c}{} & \multicolumn{5}{c}{{Model~1 (a=1)}}\\
\hline
\multicolumn{1}{c}{} &  \multicolumn{5}{c}{$N=n= 1000$}  \\
\hline\noalign{\smallskip}
  $\varepsilon$ \;\;\;  & \;\;   \texttt{tree} \;\;\; & \texttt{rf}  \;\;\;& \texttt{svm} & \;\;   \texttt{C} \;\;\; &  \texttt{MS}  \\
\noalign{\smallskip}
\hline
\noalign{\smallskip}
 0   & 1.41 (0.07) & 1.50 (0.08) & 1.43 (0.08) & 1.39 (0.08) & 1.39 (0.08)   \\
 0.1 & 1.33 (0.08) & 1.41 (0.09) & 1.34 (0.08) & 1.30 (0.07) & 1.30 (0.07) \\
 0.3 & 1.21 (0.08) & 1.25 (0.09) & 1.19 (0.08) & 1.15 (0.07) &  1.16 (0.07) \\
 0.5 & 1.07 (0.10) & 1.11 (0.09) & 1.05 (0.09) & 0.99 (0.09) &  1.02 (0.08) \\
 0.7 & 0.96 (0.13) & 0.92 (0.09) & 0.87 (0.10) & 0.79 (0.08) & 0.84 (0.09) \\
 0.9 & 0.98 (0.22) & 0.66 (0.11) & 0.67 (0.15) & 0.56 (0.12) &  0.67 (0.15) \\

\hline
\end{tabular}
}

\vspace*{0.25cm}
\footnotesize{
\begin{tabular}{l || ccccc||}
\multicolumn{1}{c}{} & \multicolumn{5}{c}{{Model~5}}\\
\hline
\multicolumn{1}{c}{} &  \multicolumn{5}{c}{$N=n= 100$}  \\
\hline\noalign{\smallskip}
  $\varepsilon$ \;\;\;  & \;\;   \texttt{tree} \;\;\; & \texttt{rf}  \;\;\;& \texttt{svm} & \;\;   \texttt{C} \;\;\; &  \texttt{MS}  \\
\noalign{\smallskip}
\hline
\noalign{\smallskip}
 0   & 1.30 (0.10) & 1.02 (0.06) & 1.04 (0.06) & 0.97 (0.05) &  0.97 (0.06) \\
 0.1 & 1.27 (0.13) & 0.93 (0.07) & 0.99 (0.09) & 0.89 (0.07) &  0.89 (0.07) \\
 0.3 & 1.10 (0.14) & 0.76 (0.07) & 0.84 (0.11) & 0.72 (0.07) & 0.72 (0.08)   \\
 0.5 & 1.04 (0.21) & 0.64 (0.07) & 0.76 (0.12) & 0.58 (0.07) &  0.59 (0.07) \\
 0.7 & 0.96 (0.25) & 0.50 (0.08) & 0.66 (0.15) & 0.46 (0.09) &  0.47 (0.09) \\
 0.9 & 1.03 (0.47) & 0.35 (0.10) & 0.61 (0.22) & 0.33 (0.12) &  0.34 (0.11) \\

\hline
\end{tabular}
}

\vspace*{0.25cm}

\footnotesize{
\begin{tabular}{l || ccccc||}
\multicolumn{1}{c}{} & \multicolumn{5}{c}{{Model~5}}\\
\hline
\multicolumn{1}{c}{} &  \multicolumn{5}{c}{$N=n= 1000$}  \\
\hline\noalign{\smallskip}
  $\varepsilon$ \;\;\;  & \;\;   \texttt{tree} \;\;\; & \texttt{rf}  \;\;\;& \texttt{svm} & \;\;   \texttt{C} \;\;\; &  \texttt{MS}  \\
\noalign{\smallskip}
\hline
\noalign{\smallskip}
 0   & 0.97 (0.05) & 0.98 (0.05) & 0.95 (0.05) & 0.93 (0.05) &   0.94 (0.05) \\
 0.1 & 0.85 (0.05) & 0.85 (0.04) & 0.83 (0.05) & 0.80 (0.05) &  0.82 (0.05) \\
 0.3 & 0.68 (0.05) & 0.66 (0.04) & 0.66 (0.04) & 0.61 (0.04) &   0.64 (0.04) \\
 0.5 & 0.55 (0.05) & 0.48 (0.04) & 0.50 (0.05) & 0.45 (0.03) &  0.47 (0.04) \\
 0.7 & 0.47 (0.07) & 0.33 (0.04) & 0.37 (0.04) & 0.31 (0.03) & 0.33 (0.04)  \\
 0.9 & 0.48 (0.11) & 0.15 (0.04) & 0.30 (0.06) & 0.16 (0.05) &  0.18 (0.06) \\

\hline
\end{tabular}
}
\end{center}
\caption{Empirical error $\widehat{\Err}$ of five plug-in $\varepsilon$-predictors in Model~1 when $a=0.25$ or $1$ and Model~5. The standard deviation is
provided between parenthesis.
}
\label{Tab:ErrorErr}
\end{table}
\begin{table}

\begin{center}
\footnotesize{
\begin{tabular}{l || ccccc||}
\multicolumn{1}{c}{} & \multicolumn{5}{c}{{Model~1 (a=0.25)}}\\
\hline
\multicolumn{1}{c}{} &  \multicolumn{5}{c}{$N=n= 100$}  \\
\hline\noalign{\smallskip}
  $\varepsilon$ \;\;\;  & \;\;   \texttt{tree} \;\;\; & \texttt{rf}  \;\;\;& \texttt{svm} & \;\;   \texttt{C} \;\;\; &  \texttt{MS} \\
\noalign{\smallskip}
\hline
\noalign{\smallskip}
 0   &  0.00 (0.00) & 0.00 (0.00) &0.00 (0.00) & 0.00 (0.00) &0.00 (0.00)  \\
 0.1 & 0.10 (0.02) & 0.10 (0.02) & 0.10 (0.01) & 0.10 (0.02)  & 0.10 (0.02)\\
 0.3 & 0.30 (0.02) & 0.30 (0.03) & 0.30 (0.02) & 0.30 (0.03) & 0.30 (0.02)\\
 0.5 & 0.50 (0.03) & 0.50 (0.03) & 0.50 (0.03) & 0.50 (0.03) &  0.50 (0.03)\\
 0.7 & 0.70 (0.02) & 0.70 (0.03) & 0.70 (0.03) & 0.70 (0.03) & 0.70 (0.03) \\
 0.9 & 0.90 (0.02) & 0.90 (0.02) & 0.90 (0.02) & 0.90 (0.02) & 0.90 (0.02) \\

\hline
\end{tabular}
}
\vspace*{0.25cm}
\footnotesize{
\begin{tabular}{l || ccccc||}
\multicolumn{1}{c}{} & \multicolumn{5}{c}{{Model~1 (a=0.25)}}\\
\hline
\multicolumn{1}{c}{} &  \multicolumn{5}{c}{$N=n= 1000$}  \\
\hline\noalign{\smallskip}
  $\varepsilon$ \;\;\;  & \;\;   \texttt{tree} \;\;\; & \texttt{rf}  \;\;\;& \texttt{svm} & \;\;   \texttt{C} \;\;\; &  \texttt{MS} \\
\noalign{\smallskip}
\hline
\noalign{\smallskip}
 0   &  0.00 (0.00) & 0.00 (0.00) &0.00 (0.00) &0.00 (0.00) &0.00 (0.00)  \\
 0.1 & 0.10 (0.01) & 0.10 (0.02) & 0.10 (0.02) & 0.10 (0.01) & 0.10 (0.02) \\
 0.3 & 0.30 (0.03) & 0.29 (0.03) & 0.30 (0.03) & 0.30 (0.03) & 0.29 (0.03) \\
 0.5 & 0.50 (0.03) & 0.49 (0.03) & 0.50 (0.03) & 0.50 (0.03)  &  0.49 (0.03) \\
 0.7 & 0.70 (0.02) & 0.70 (0.02) & 0.70 (0.02) & 0.70 (0.03) & 0.70 (0.02) \\
 0.9 & 0.90 (0.02) & 0.90 (0.02) & 0.90 (0.02) & 0.90 (0.02) & 0.90 (0.02) \\

\hline
\end{tabular}
}
\vspace*{0.25cm}

\footnotesize{
\begin{tabular}{l || ccccc||}
\multicolumn{1}{c}{} & \multicolumn{5}{c}{{Model~1 (a=1)}}\\
\hline
\multicolumn{1}{c}{} &  \multicolumn{5}{c}{$N=n= 100$}  \\
\hline\noalign{\smallskip}
  $\varepsilon$ \;\;\;  & \;\;   \texttt{tree} \;\;\; & \texttt{rf}  \;\;\;& \texttt{svm} & \;\;   \texttt{C} \;\;\; &  \texttt{MS} \\
\noalign{\smallskip}
\hline
\noalign{\smallskip}
 0   &  0.00 (0.00) & 0.00 (0.00) &0.00 (0.00) &0.00 (0.00) &0.00 (0.00)  \\
 0.1 & 0.10 (0.02) & 0.10 (0.02)  & 0.10 (0.02)  & 0.10 (0.02)  & 0.10 (0.02) \\
 0.3 & 0.30 (0.02) & 0.30 (0.03) & 0.30 (0.03) & 0.30 (0.02) &  0.30 (0.03) \\
 0.5 & 0.50 (0.03) &  0.50 (0.03) &  0.50 (0.03) &  0.50 (0.03) &  0.50 (0.03)  \\
 0.7 & 0.70 (0.03) & 0.70 (0.03) & 0.69 (0.03) & 0.69 (0.03) &  0.70 (0.03) \\
 0.9 & 0.90 (0.02) & 0.90 (0.02) & 0.90 (0.02) & 0.90 (0.02) & 0.90 (0.02)  \\

\hline
\end{tabular}
}
\vspace*{0.25cm}

\footnotesize{
\begin{tabular}{l || ccccc||}
\multicolumn{1}{c}{} & \multicolumn{5}{c}{{Model~1 (a=1)}}\\
\hline
\multicolumn{1}{c}{} &  \multicolumn{5}{c}{$N=n= 1000$}  \\
\hline\noalign{\smallskip}
  $\varepsilon$ \;\;\;  & \;\;   \texttt{tree} \;\;\; & \texttt{rf}  \;\;\;& \texttt{svm} & \;\;   \texttt{C} \;\;\; &  \texttt{MS} \\
\noalign{\smallskip}
\hline
\noalign{\smallskip}
 0   &  0.00 (0.00) & 0.00 (0.00) &0.00 (0.00) &0.00 (0.00) &0.00 (0.00)  \\
 0.1 & 0.10 (0.02) & 0.10 (0.02) & 0.10 (0.02)& 0.10 (0.02) & 0.10 (0.02) \\
 0.3 & 0.30 (0.02) & 0.30 (0.02)& 0.30 (0.02) & 0.30(0.02)& 0.30 (0.02) \\
 0.5 & 0.50 (0.03) & 0.50 (0.03) & 0.50 (0.02)& 0.50 (0.03)& 0.50 (0.03) \\
 0.7 & 0.70 (0.02)& 0.70 (0.02) & 0.70 (0.02)& 0.70 (0.02) & 0.70 (0.03) \\
 0.9 & 0.90 (0.02) & 0.90 (0.02) & 0.90 (0.02)& 0.90 (0.02) & 0.90 (0.02) \\

\hline
\end{tabular}
}
\vspace*{0.25cm}
\footnotesize{
\begin{tabular}{l || ccccc||}
\multicolumn{1}{c}{} & \multicolumn{5}{c}{{Model~5}}\\
\hline
\multicolumn{1}{c}{} &  \multicolumn{5}{c}{$N=n= 100$}  \\
\hline\noalign{\smallskip}
  $\varepsilon$ \;\;\;  & \;\;   \texttt{tree} \;\;\; & \texttt{rf}  \;\;\;& \texttt{svm} & \;\;   \texttt{C} \;\;\; &  \texttt{MS} \\
\noalign{\smallskip}
\hline
\noalign{\smallskip}
 0   &  0.00 (0.00) & 0.00 (0.00) &0.00 (0.00) &0.00 (0.00) &0.00 (0.00)  \\
 0.1 & 0.10 (0.02) & 0.10 (0.02) & 0.10 (0.02) & 0.10 (0.02) & 0.10 (0.02) \\
 0.3 & 0.30 (0.03) & 0.30 (0.03) & 0.30 (0.03) & 0.30 (0.03) &  0.30 (0.03) \\
 0.5 & 0.50 (0.03) & 0.50 (0.03) & 0.50 (0.02) & 0.50 (0.03) &   0.50 (0.03) \\
 0.7 & 0.70 (0.02) & 0.70 (0.03) & 0.70 (0.03) & 0.69 (0.03) &  0.69 (0.03) \\
 0.9 & 0.90 (0.02) & 0.90 (0.02) & 0.90 (0.01) & 0.90 (0.02) &  0.90 (0.02) \\

\hline
\end{tabular}
}
\vspace*{0.25cm}
\footnotesize{
\begin{tabular}{l || ccccc||}
\multicolumn{1}{c}{} & \multicolumn{5}{c}{{Model~5}}\\
\hline
\multicolumn{1}{c}{} &  \multicolumn{5}{c}{$N=n= 1000$}  \\
\hline\noalign{\smallskip}
  $\varepsilon$ \;\;\;  & \;\;   \texttt{tree} \;\;\; & \texttt{rf}  \;\;\;& \texttt{svm} & \;\;   \texttt{C} \;\;\; &  \texttt{MS} \\
\noalign{\smallskip}
\hline
\noalign{\smallskip}
 0   &  0.00 (0.00) & 0.00 (0.00) &0.00 (0.00) &0.00 (0.00) &0.00 (0.00)  \\
 0.1 & 0.10 (0.01) & 0.10 (0.02)  & 0.10 (0.02) & 0.10 (0.02) & 0.10 (0.02) \\
 0.3 & 0.30 (0.02) & 0.30 (0.02) & 0.30 (0.02) & 0.30 (0.03) &  0.30 (0.02) \\
 0.5 & 0.49 (0.03) & 0.50 (0.03) & 0.50 (0.03) & 0.50 (0.03) & 0.50 (0.03)  \\
 0.7 & 0.70 (0.03) & 0.70 (0.03) & 0.70 (0.02) & 0.70 (0.03) & 0.70 (0.03)  \\
 0.9 & 0.90 (0.02) & 0.90 (0.02) & 0.90 (0.01) &  0.90 (0.02) & 0.90 (0.02) \\

\hline
\end{tabular}
}
\end{center}

\caption{Empirical reject rate $\hat{r}$ of five plug-in $\varepsilon$-predictors in Model~1 when $a=0.25$ or $1$ and Model~5. The standard deviation is
provided between parenthesis.
}
 \label{Tab:Rejectr}
\end{table}

\begin{figure}[ht]%
 \centering
 \subfloat{\includegraphics[scale=0.24]{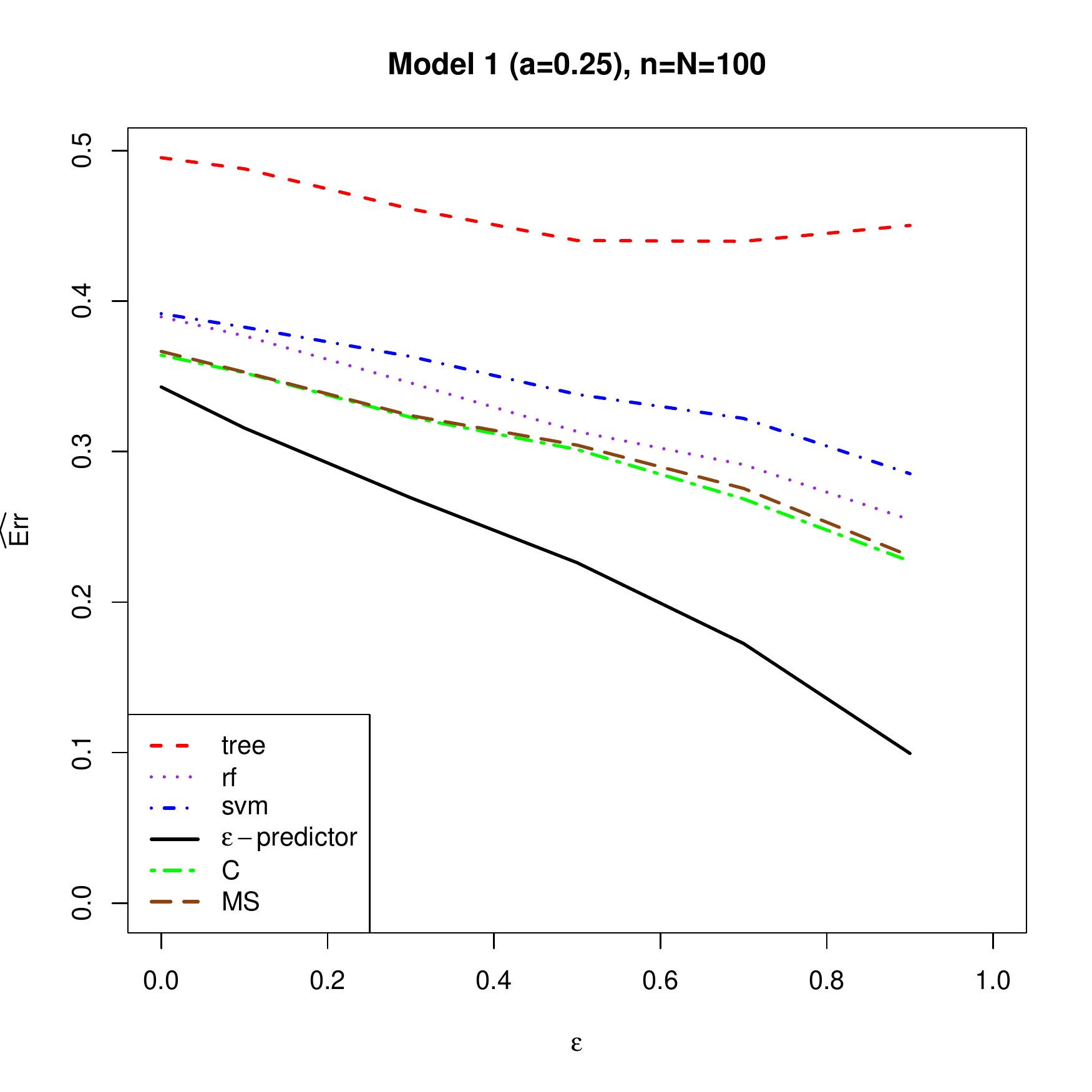}}%
 \subfloat{\includegraphics[scale=0.24]{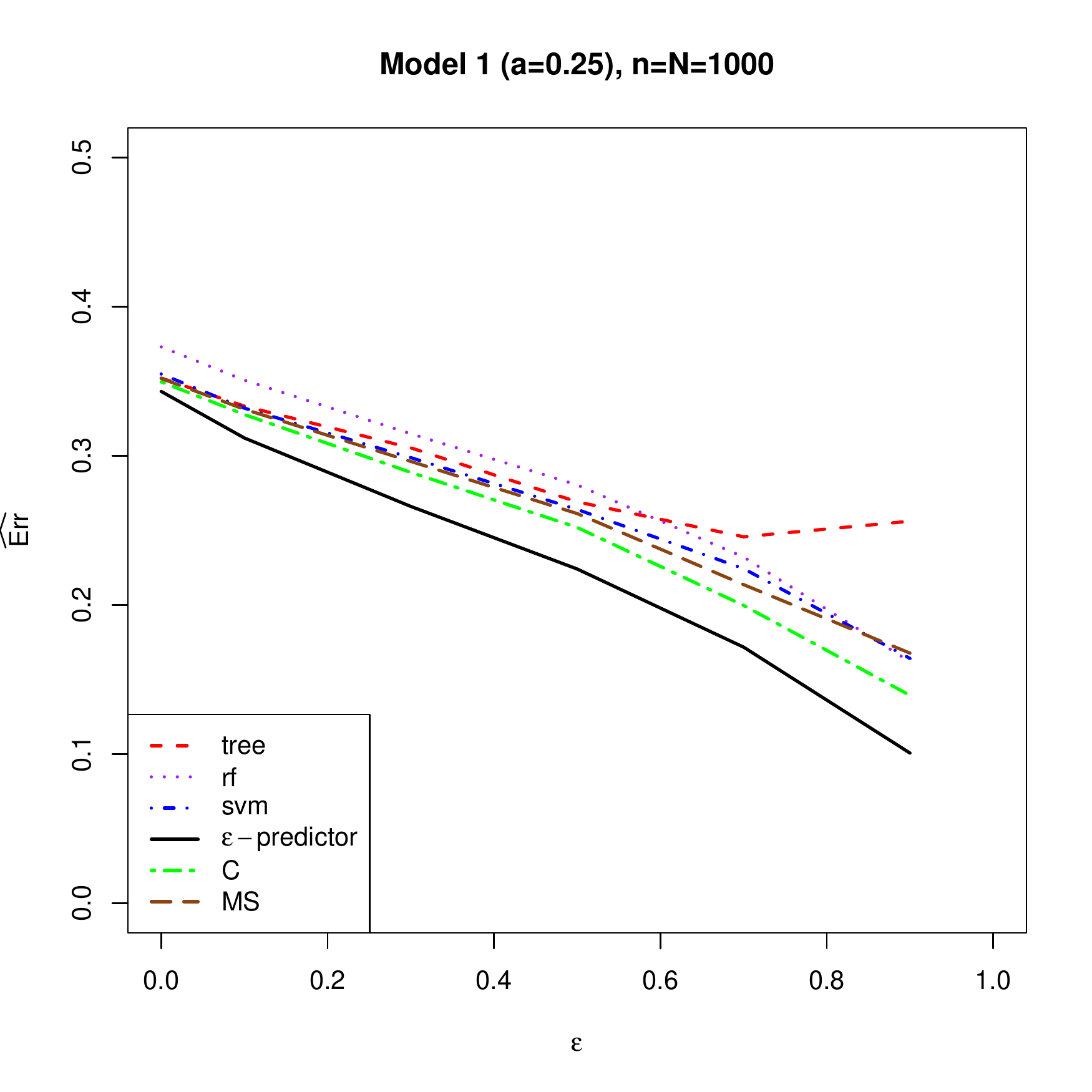}}
  \subfloat{\includegraphics[scale=0.24]{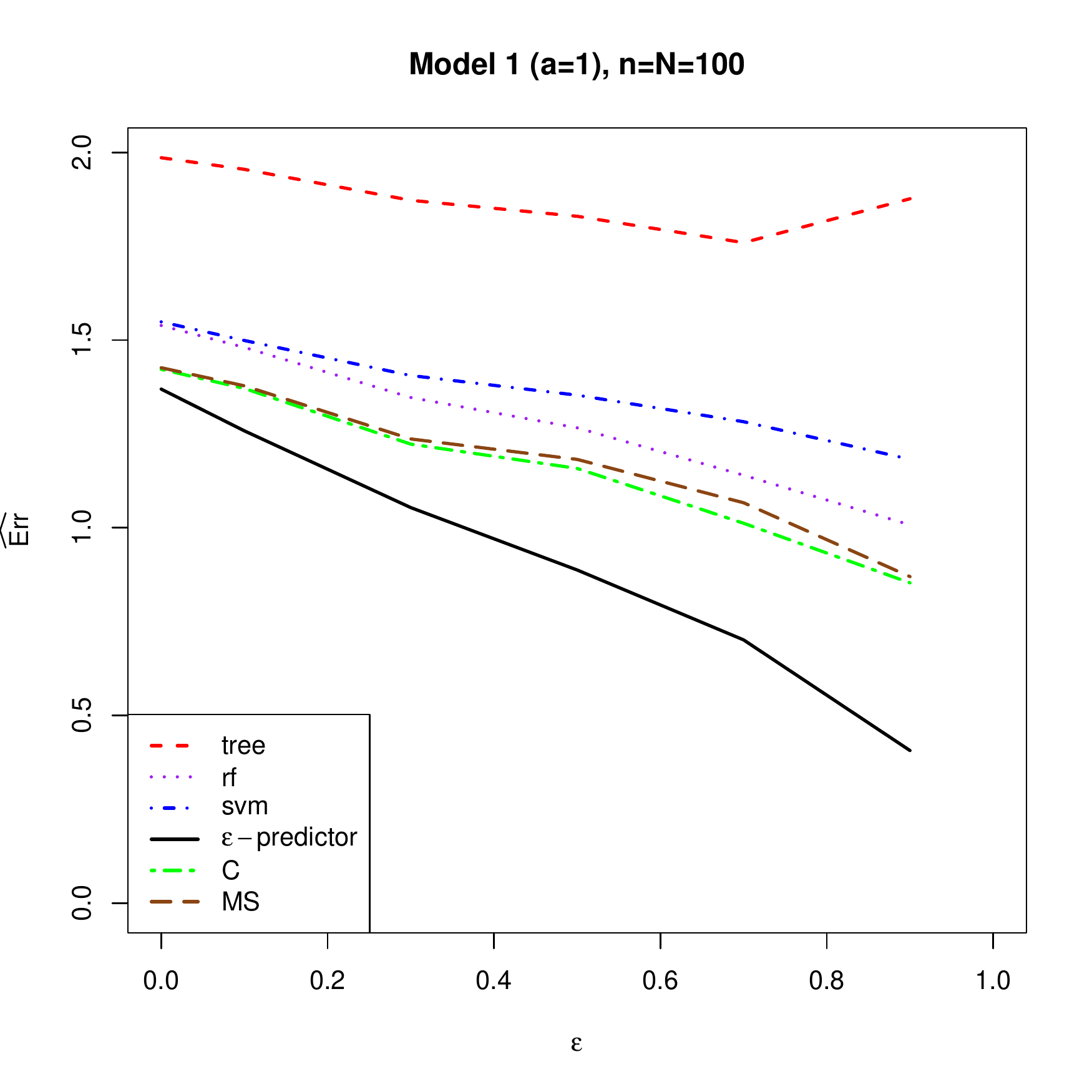}}
   \subfloat{\includegraphics[scale=0.24]{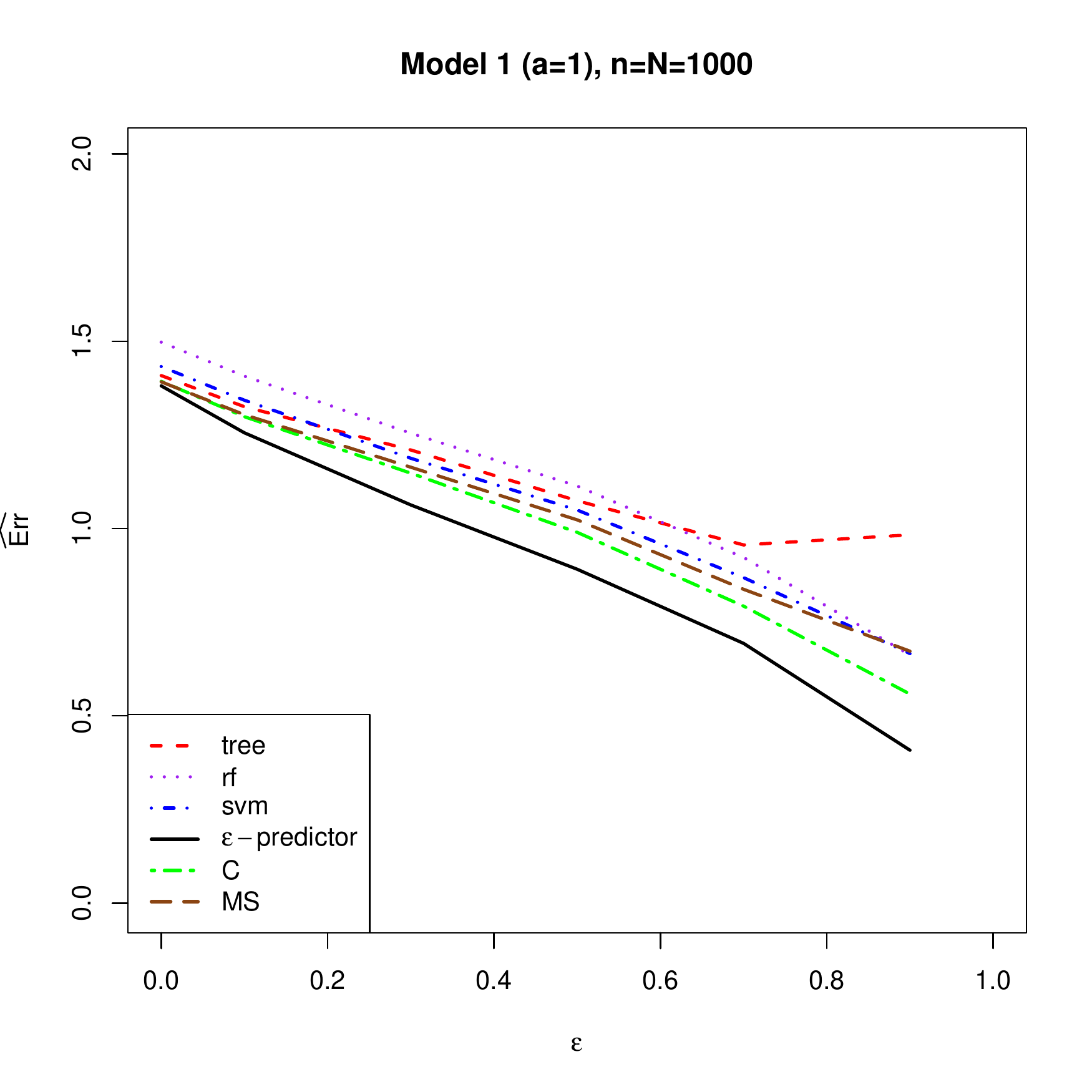}}\\
   \subfloat{\includegraphics[scale=0.24]{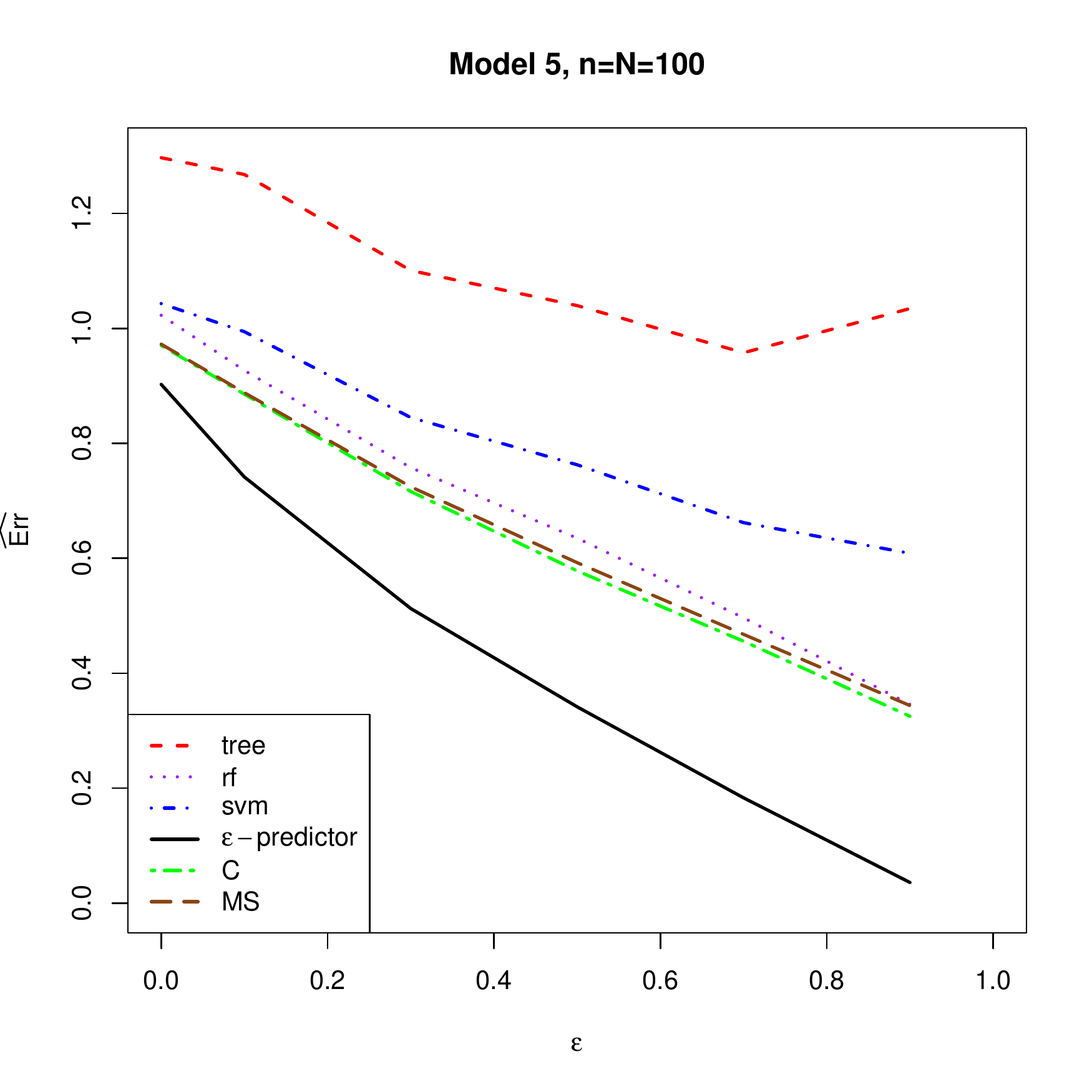}}
   \subfloat{\includegraphics[scale=0.24]{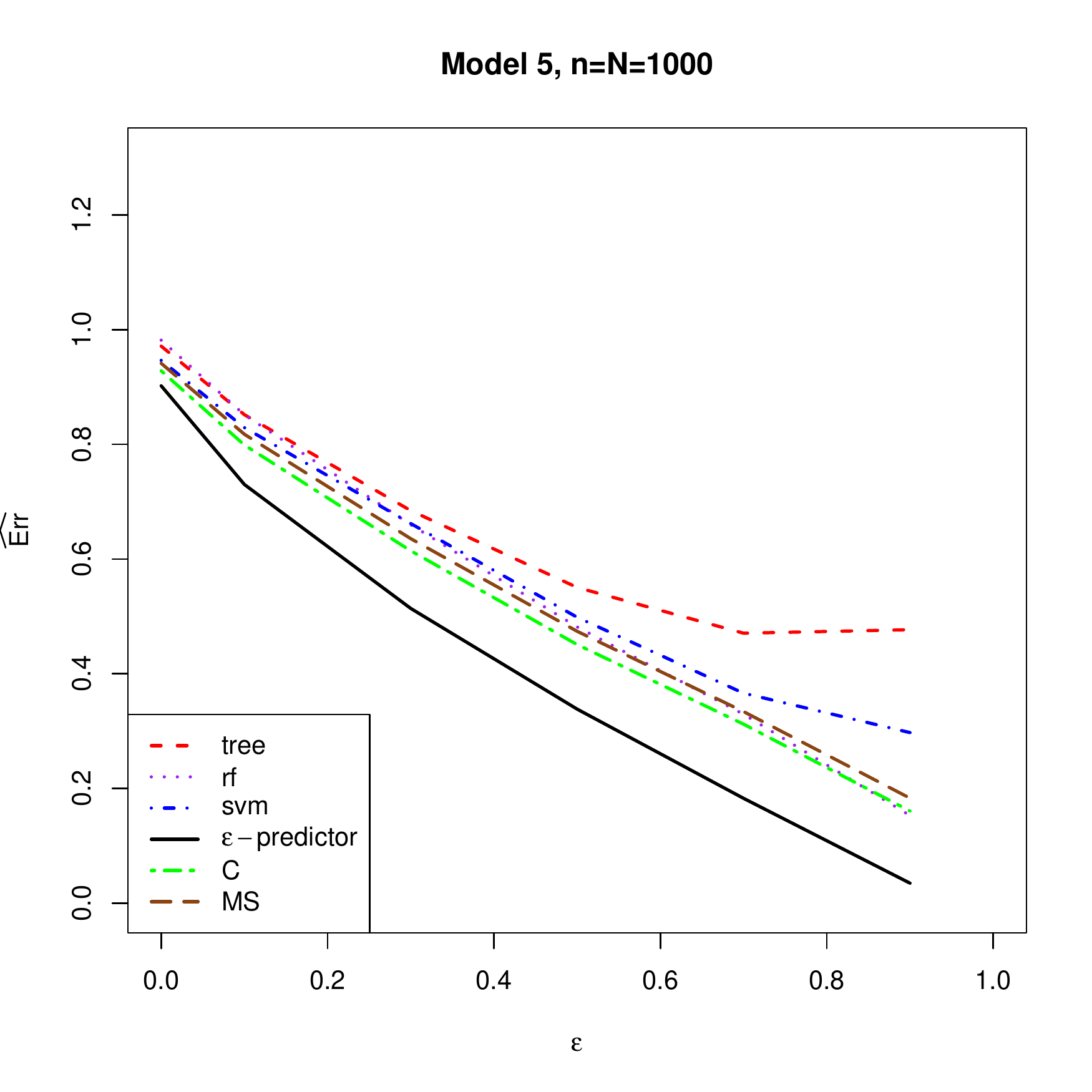}}
 \caption{Visual description of the performance of five plug-in $\varepsilon$-predictors.}%
 \label{fig:ErrorReject}%
\end{figure}

\section{Conclusion}
In the regression setting, we estimated the variance function by the model selection and convex aggregation when the set of initial estimators are constructed by the residual based-method. We called the estimators of the two procedures the \texttt{MS}-estimator and \texttt{C}-estimator respectively. We established the consistency of our estimators under mild assumptions and provided rate of convergence for these methods in $L_2$-norm that are of order $O(\log(M_1)/N)^{1/8})$ when $Y$ is satisfied the gaussian model; and $O(\log(M_1)/N)^{1/4})$ when $Y$ is bounded.\\

\bibliographystyle{plain}
\bibliography{sample}


\newpage

\appendix
\section*{Appendix}
This section gathers the proof of our results.
\section{Proof of Theorem~\ref{thm:Risk_MS_sigma3}}
Note that the quantity $\mathbb{E}\left[|\hat{\sigma}^{2}_{\texttt{MS}}(X)-\sigma^{2}(X)|^2\right]$ is the exces risk of the estimator $\hat{\sigma}^{2}_{\texttt{MS}}$ and  defines as follow 
$$
\mathbb{E}\left[|\hat{\sigma}^{2}_{\texttt{MS}}(X)-\sigma^{2}(X)|^2\right]:=\mathbb{E}\left[R(\hat{\sigma}^{2}_{\texttt{MS}})- R(\sigma^2)\right]\enspace,
$$
where $R(\sigma^2)=\mathbb{E}\left[|Z-\sigma^{2}(X)|^2\right]$ is the true risk of the variance function. 
Besides, we introduce a minimizer of the risk $R$, denoted by $\bar{\sigma}^{2}_{\texttt{MS}}$ and  given 
\begin{equation}
\label{eq:bar_sigmaMS}
\bar{\sigma}^{2}_{\texttt{MS}}:= \hat{\sigma}^{2}_{\hat{s},\bar{m}}\enspace,\;\text{where}\enspace \enspace \bar{m}\in \argmin_{m\in[M_2]} R(\hat{\sigma}^{2}_{\hat{s},m}).
\end{equation}
We consider the following decomposition
\begin{equation}
R(\hat{\sigma}^{2}_{\texttt{MS}})- R(\sigma^2)= \underbrace{R(\hat{\sigma}^{2}_{\texttt{MS}})-R(\bar{\sigma}^{2}_{\texttt{MS}})}_{\text{estimation error}}+\underbrace{R(\bar{\sigma}^{2}_{\texttt{MS}})- R(\sigma^2)}_{\text{approximation error}}.
\label{eq:decompositionExcesRisk}
\end{equation}
Each of these errors is obviously positive.  The random term $R(\hat{\sigma}^{2}_{\texttt{MS}})-R(\bar{\sigma}^{2}_{\texttt{MS}})$ is called the estimation error (or the variance). It measures how close $\hat{\sigma}^{2}_{\texttt{MS}}$ is to the best possible rule in $[M_2]$ in terms of the risk $R$. The deterministic term $R(\bar{\sigma}^{2}_{\texttt{MS}})- R(\sigma^2)$ is called the approximation error (or the bias). We start with the following lemma
\begin{lemma}
\label{lem:aprox_error}
Let $\bar{\sigma}^{2}_{\texttt{MS}}$ be an aggregate defined in Equation~\eqref{eq:bar_sigmaMS}. Then, 
\begin{eqnarray*}
\mathbb{E}\left[R(\bar{\sigma}^{2}_{\texttt{MS}})-R(\sigma^2)\right]=\mathbb{E}\left[\min_{m\in[M_2]}\mathbb{E}_{X}\left[|\hat{\sigma}^{2}_{\hat{s},m}(X)-\sigma^{2}(X)|^{2}\right]\right].
\end{eqnarray*}
\end{lemma}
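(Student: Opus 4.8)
The identity is a direct consequence of the bias--variance decomposition of the quadratic risk $R$, combined with the fact that, by construction, $\bar m$ minimises $R(\hat\sigma^{2}_{\hat s,\cdot})$ over $[M_2]$. The starting point is the following elementary fact: for any (possibly random) function $g:\mathbb{R}^{d}\to\mathbb{R}$ that is independent of a generic pair $(X,Y)$ --- in particular, for any function measurable with respect to $\mathcal{D}_n\cup\mathcal{D}_N$ ---
\begin{equation*}
R(g)-R(\sigma^2)=\mathbb{E}_{X}\!\left[\,|g(X)-\sigma^{2}(X)|^{2}\,\right],
\end{equation*}
where $\mathbb{E}_{X}$ denotes expectation with respect to $X$ only, conditionally on the samples. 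To prove it I would expand $|Z-g(X)|^{2}=|Z-\sigma^{2}(X)|^{2}+2\,(Z-\sigma^{2}(X))(\sigma^{2}(X)-g(X))+|\sigma^{2}(X)-g(X)|^{2}$, take the conditional expectation given $X$ (and $g$), and use $\mathbb{E}[Z\mid X]=\sigma^{2}(X)$ to make the cross term disappear; integrating the remaining equality over $X$ gives the claim.

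Next, I would apply this identity, conditionally on $(\mathcal{D}_n,\mathcal{D}_N)$, to each candidate $g=\hat\sigma^{2}_{\hat s,m}$, $m\in[M_2]$. This is legitimate because every $\hat\sigma^{2}_{\hat s,m}$, as well as the random index $\hat s$ itself, is a function of $\mathcal{D}_n\cup\mathcal{D}_N$ and hence independent of the fresh $(X,Y)$ entering $R$. One obtains, for each $m$, the almost sure identity $R(\hat\sigma^{2}_{\hat s,m})-R(\sigma^2)=\mathbb{E}_{X}[\,|\hat\sigma^{2}_{\hat s,m}(X)-\sigma^{2}(X)|^{2}\,]$. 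Since $R(\sigma^2)$ does not depend on $m$, the minimiser $\bar m\in\argmin_{m\in[M_2]}R(\hat\sigma^{2}_{\hat s,m})$ also minimises $m\mapsto\mathbb{E}_{X}[\,|\hat\sigma^{2}_{\hat s,m}(X)-\sigma^{2}(X)|^{2}\,]$; substituting $m=\bar m$ and recalling $\bar\sigma^{2}_{\texttt{MS}}=\hat\sigma^{2}_{\hat s,\bar m}$ yields
\begin{equation*}
R(\bar\sigma^{2}_{\texttt{MS}})-R(\sigma^2)=\min_{m\in[M_2]}\mathbb{E}_{X}\!\left[\,|\hat\sigma^{2}_{\hat s,m}(X)-\sigma^{2}(X)|^{2}\,\right]\qquad\text{a.s.}
\end{equation*}
Taking the expectation over $\mathcal{D}_n$ and $\mathcal{D}_N$ on both sides (all quantities being nonnegative, Fubini applies with no integrability concern) gives exactly the statement of the lemma.

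The argument is essentially bookkeeping; the one place that deserves attention --- what I would flag as the crux --- is the vanishing of the cross term, which hinges on the candidates being built from data independent of the evaluation point. This is guaranteed by the two-sample splitting underlying the construction in Section~\ref{sec: Aggreg_Estimators}, and it is also why one must condition on the \emph{whole} of $(\mathcal{D}_n,\mathcal{D}_N)$, so that the data-dependent selector $\hat s$ is frozen before the expansion is carried out. Everything else --- the tower property, measurability of $\bar m$, nonnegativity of the two terms in the decomposition~\eqref{eq:decompositionExcesRisk} --- is routine.
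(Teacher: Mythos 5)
Your proof is correct and follows essentially the same route as the paper: the pointwise bias--variance identity $R(\hat{\sigma}^{2}_{\hat{s},m})-R(\sigma^2)=\mathbb{E}_{X}\left[|\hat{\sigma}^{2}_{\hat{s},m}(X)-\sigma^{2}(X)|^{2}\right]$ for each $m$ (the cross term vanishing because $\mathbb{E}[Z\mid X]=\sigma^{2}(X)$), followed by taking the minimum over $m\in[M_2]$ and then the expectation over the samples. Your write-up simply makes explicit the conditioning on $(\mathcal{D}_n,\mathcal{D}_N)$ and the independence of the fresh pair $(X,Y)$ that the paper's two-line proof leaves implicit.
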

This result explicitly determines the approximation error.
\begin{proof}[Proof of Lemma~\ref{lem:aprox_error}]
For all $m\in[M_{2}]$, the excess risk of $\hat{\sigma}^{2}_{\hat{s},m}$ is given as follow
\begin{equation}
\label{eq:Lemma1}
\mathbb{E}\left[|Z-\hat{\sigma}^{2}_{\hat{s},m}(X)|^{2}\right]-\mathbb{E}\left[|Z-\sigma^{2}(X)|^{2}\right]=\mathbb{E}_{X}\left[|\hat{\sigma}^{2}_{\hat{s},m}(X)-\sigma^{2}(X)|^{2}\right].
\end{equation}
We apply $min$ in Equation~\eqref{eq:Lemma1} and we get
\begin{eqnarray*}
\mathbb{E}\left[R(\bar{\sigma}^{2}_{\texttt{MS}})-R(\sigma^2)\right]&=&\mathbb{E}\left[\min_{m\in[M_2]}\mathbb{E}_{X}\left[|\hat{\sigma}^{2}_{\hat{s},m}(X)-\sigma^{2}(X)|^{2}\right]\right].
\end{eqnarray*}
\end{proof}
\begin{proof}[Proof of Theorem~\ref{thm:Risk_MS_sigma3}]
We thanks the decomposition in Eq.~\eqref{eq:decompositionExcesRisk}, we have
\begin{equation}
\mathbb{E}\left[|\hat{\sigma}^{2}_{\texttt{MS}}(X)-\sigma^{2}(X)|^2\right]= \mathbb{E}\left[R(\hat{\sigma}^{2}_{\texttt{MS}})-R(\bar{\sigma}^{2}_{\texttt{MS}})\right]+\mathbb{E}\left[R(\bar{\sigma}^{2}_{\texttt{MS}})- R(\sigma^2)\right].
\label{eq:Eq1}
\end{equation}
\item \textbf{Step 1.} Study of the term $\mathbb{E}\left[R(\bar{\sigma}^{2}_{\texttt{MS}})- R(\sigma^2)\right]$. We begin with the following Lemma
\begin{lemma}
\label{lem:ContrProhatsneqStar_s}
Let $\hat{s}$ and $s^*$ be two estimators defined in~\eqref{est:hats} and~\eqref{est:star(s)} respectively. Then, under Assumptions~\ref{ass:fstar_bound},~\ref{ass:Y_bound},~\ref{ass:hatfs_hatsigma_hats_m_bound} and~\ref{ass:SeperHyp} there exists an absolute constant $C$ such that 
\begin{eqnarray*}
\mathbb{P}\left(\hat{s}\neq s^{*}\right)\leq C\left(\frac{\log(M_1)}{N}\right)^{1/2}.
\end{eqnarray*}
\end{lemma}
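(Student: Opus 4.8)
The plan is to exploit the separability hypothesis (Assumption~\ref{ass:SeperHyp}) to turn the event $\{\hat s \ne s^*\}$ into a uniform deviation event for the empirical risk over the finite dictionary $\{\hat f_s\}_{s \in [M_1]}$, and then to control that event by a concentration inequality applied conditionally on $\mathcal D_n$, combined with a union bound over the $M_1$ candidates. First I would perform the reduction: since $\hat s$ minimizes $s \mapsto \hat{\mathcal R}_N(\hat f_s)$, we have $\hat{\mathcal R}_N(\hat f_{\hat s}) \le \hat{\mathcal R}_N(\hat f_{s^*})$, hence
\[
\mathcal R(\hat f_{\hat s}) - \mathcal R(\hat f_{s^*}) \le \bigl(\mathcal R(\hat f_{\hat s}) - \hat{\mathcal R}_N(\hat f_{\hat s})\bigr) + \bigl(\hat{\mathcal R}_N(\hat f_{s^*}) - \mathcal R(\hat f_{s^*})\bigr) \le 2 \max_{s \in [M_1]} \bigl| \hat{\mathcal R}_N(\hat f_s) - \mathcal R(\hat f_s) \bigr|.
\]
Since $s^*$ minimizes $\mathcal R(\hat f_{\cdot})$, on $\{\hat s \ne s^*\}$ the left-hand side is at least $\delta^* > \delta_0$ by Assumption~\ref{ass:SeperHyp}; therefore $\{\hat s \ne s^*\} \subseteq \{\max_{s \in [M_1]} | \hat{\mathcal R}_N(\hat f_s) - \mathcal R(\hat f_s)| > \delta_0/2\}$ and it suffices to bound the probability of this last event.

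Next I would condition on $\mathcal D_n$. Given $\mathcal D_n$, each $\hat f_s$ is a fixed function with $\|\hat f_s\|_\infty \le K_1$ (Assumption~\ref{ass:hatfs_hatsigma_hats_m_bound}), and $\hat{\mathcal R}_N(\hat f_s) - \mathcal R(\hat f_s)$ is the centered average of the $N$ i.i.d.\ variables $|Y_i - \hat f_s(X_i)|^2$, $i = n+1,\dots,n+N$. When $Y$ is bounded, these variables lie in an interval whose length depends only on the bound on $|Y|$ and on $K_1$, so Hoeffding's inequality (Lemma~\ref{lem:HoeffdingLemma}) gives $\mathbb P(|\hat{\mathcal R}_N(\hat f_s) - \mathcal R(\hat f_s)| > \delta_0/2 \mid \mathcal D_n) \le 2\exp(-cN)$ with $c$ depending only on $\delta_0$, $K_1$ and the bound on $|Y|$. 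When $Y$ follows the Gaussian model, $Y - \hat f_s(X) = (Y - f^*(X)) + (f^*(X) - \hat f_s(X))$ is sub-Gaussian with a parameter depending only on $\|f^*\|_\infty$, $\|\sigma^2\|_\infty$ and $K_1$ (Assumptions~\ref{ass:fstar_bound} and~\ref{ass:Y_bound}), so $|Y - \hat f_s(X)|^2$ is sub-exponential and a Bernstein-type inequality yields a bound of the same form $2\exp(-cN)$. A union bound over the $M_1$ indices then gives $\mathbb P(\max_{s} |\hat{\mathcal R}_N(\hat f_s) - \mathcal R(\hat f_s)| > \delta_0/2 \mid \mathcal D_n) \le 2M_1 \exp(-cN)$, and since this estimate is deterministic, averaging over $\mathcal D_n$ preserves it: $\mathbb P(\hat s \ne s^*) \le 2M_1 \exp(-cN)$. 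Finally, since $\sup_{N \ge 1}\sqrt N\,e^{-cN} < \infty$ (and $M_1 \ge 2$ is a fixed finite integer), one has $2M_1\exp(-cN) \le C(\log(M_1)/N)^{1/2}$ for a suitable constant $C$, which is the announced bound.

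The main obstacle is the unbounded (Gaussian) case: one must check that $|Y - \hat f_s(X)|^2$ is genuinely sub-exponential with constants uniform in $\mathcal D_n$, $N$ and $M_1$ — this is precisely what Assumptions~\ref{ass:fstar_bound} and~\ref{ass:Y_bound} provide, through the sub-Gaussianity of $Y - f^*(X)$ — and then to replace Hoeffding by a Bernstein inequality, noting that its linear term is harmless here since the deviation level $\delta_0/2$ is a fixed constant. The remainder is bookkeeping: one should read $\mathcal R(\hat f_s) = \mathbb E[|Y - \hat f_s(X)|^2 \mid \mathcal D_n]$, so that $s^*$ and $\delta^*$ are $\mathcal D_n$-measurable and Assumption~\ref{ass:SeperHyp} supplies an almost sure lower bound $\delta_0$, and one uses that $\delta^* > 0$ forces $s^*$ to be unique, so that the event $\{\hat s \ne s^*\}$ is well defined.
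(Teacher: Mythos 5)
Your proposal is correct and takes essentially the same route as the paper: the separability hypothesis reduces $\{\hat{s}\neq s^{*}\}$ to the uniform deviation event $\{\max_{s\in[M_1]}|\hat{\mathcal{R}}_{N}(\hat{f}_{s})-\mathcal{R}(\hat{f}_{s})|\geq \delta_0/2\}$ (your one-line ERM inequality replaces the paper's two-case discussion, with the same conclusion), and that event is controlled conditionally on $\mathcal{D}_n$ by Hoeffding in the bounded case and a Bernstein-type bound for the sub-exponential squared residuals in the Gaussian case (the paper verifies the Bernstein moment conditions explicitly via Gaussian moment bounds), followed by a union bound over the $M_1$ candidates and the conversion of $2M_1e^{-cN}$ into $C(\log(M_1)/N)^{1/2}$. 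The only cosmetic difference is in that last step, where you let the constant absorb the fixed $M_1$ while the paper keeps $\log(M_1)$ in the exponent; both are the same bookkeeping.
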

\begin{proof}
Under Assumption~\ref{ass:SeperHyp}, we have firstly  $\delta^*\left(\mathcal{D}_n\right)=\min_{s\neq s^*}\left\{|\mathcal{R}(\hat{f}_{s^*})-\mathcal{R}(\hat{f}_{s})|\right\}>\delta_0>0$. Recall that $\mathcal{R}(\hat{f}_{s^*})\leq \mathcal{R}(\hat{f}_{\hat{s}})$. On the event $\left\{\hat{s}\neq s^*\right\}$, we have two cases
\begin{itemize}
\item $\hat{\mathcal{R}}_{N}(\hat{f}_{\hat{s}}) <\mathcal{R}(\hat{f}_{s^*})$, and then 
\begin{equation*}
\delta^*\leq |\mathcal{R}(\hat{f}_{\hat{s}})-\mathcal{R}(\hat{f}_{s^*})|\leq |\hat{\mathcal{R}}_{N}(\hat{f}_{s^{*}})-\mathcal{R}(\hat{f}_{s^*})|\leq \max_{s\in[M_1]}|\hat{\mathcal{R}}_{N}(\hat{f}_{s})-\mathcal{R}(\hat{f}_{s})|
\end{equation*}
\item $\hat{\mathcal{R}}_{N}(\hat{f}_{\hat{s}}) \geq\mathcal{R}(\hat{f}_{s^*})$, and then 
\begin{equation*}
\delta^*\leq |\hat{\mathcal{R}}_{N}(\hat{f}_{\hat{s}})-\mathcal{R}(\hat{f}_{\hat{s}})|+|\hat{\mathcal{R}}_{N}(\hat{f}_{s^{*}})-\mathcal{R}(\hat{f}_{s^*})|\leq 2\max_{s\in[M_1]}|\hat{\mathcal{R}}_{N}(\hat{f}_{s})-\mathcal{R}(\hat{f}_{s})|.
\end{equation*}
\end{itemize}
Therefore,
\begin{eqnarray*}
\mathbb{P}\left(\hat{s}\neq s^*\right)\leq \mathbb{P}\left(\max_{s\in[M_1]}|\hat{\mathcal{R}}_{N}(\hat{f}_{s})-\mathcal{R}(\hat{f}_{s})|\geq \delta_0/2\right)
\end{eqnarray*}
We control this term using Bernstein's inequality. We check that the conditions for Bernstein's inequality are satisfied. For all $s\in [M_1]$, set $V_{i}(s)=|Y_i-\hat{f}_{s}(X_i)|^{2}=|f^{*}(X_i)-\hat{f}_{s}(X_i)+\sigma(X_i)\xi_i|^{2}$ for all $i=1,\cdots, N$. First, Assumptions~\ref{ass:fstar_bound} and~\ref{ass:hatfs_hatsigma_hats_m_bound} ensure that there exist a positive constants $L_1$ and $L_2$ such that $|f^{*}(X)-\hat{f}_{s}(X)|\leq L_{1}$ and $|\sigma^{2}(X)|\leq L_2$. Second, note that since the variables $V_{i}(s)$ are \iid and by the elementary inequality $(x+y)^{4}\leq 2^{3}(x^{4}+y^{4})$ for all $x, y\in \mathbb{R}$, by Lemma~\ref{lem:momentsBound}, and by  the elementary inequality $x^4+y^4\leq (x+y)^4$ for all $x,y\geq 0$ we have
\begin{eqnarray*}
\sum_{i=1}^{N}\mathbb{E}\left[V^{2}_{i}(s)\right] \leq 2^{3}\sum_{i=1}^{N}\mathbb{E}\left[|f^{*}(X)-\hat{f}_{s}(X)|^4+\sigma^{4}(X_i)\xi_{i}^{4}\right]\leq 2^{7}N(L_{1}+\sqrt{L_{2}})^{4}:=v_{N}\enspace,
\end{eqnarray*} 
and for $k\geq 3$  we follow the elementary inequality $(x+y)^{2k}\leq 2^{2k-1}(x^{2k}+y^{2k})$ for all $x,y\in \mathbb{R}$, Lemma~\ref{lem:momentsBound}, and the following elementary inequality $ x^{2k}+y^{2k}\leq (x+y)^{2k}$ for all $x,y\geq 0$ 
\begin{eqnarray*}
\sum_{i=1}^{N}\mathbb{E}\left[(V^{k}_{i}(s)\vee 0)\right]&=& \sum_{i=1}^{N}\mathbb{E}\left[|f^{*}(X_i)-\hat{f}_{s}(X_i)+\sigma(X_i)\xi_i|^{2k}\right]\\
&\leq & 2^{2k-1}\sum_{i=1}^{N}\mathbb{E}\left[|f^{*}(X_i)-\hat{f}_{s}(X_i)|^{2k}+|\sigma^{2}(X_i)|^{k}|\xi_{i}|^{2k}\right]\\
&\leq & \frac{1}{2} 2^{2k} N\left(L_{1}^{2k}+2^{k+1}(\sqrt{L_2})^{2k}(k)!\right)\\
&\leq & \frac{1}{2}2^{3k+1}N\left(L_{1}+\sqrt{L_2}\right)^{2k}k!\\
&\leq & \frac{1}{2}v_{N}c^{k-2}k!\enspace.
\end{eqnarray*}
where $c:=8\left(L_{1}+\sqrt{L_2}\right)^2$. Using the Bernstein's inequality (Lemma~\ref{lem:BernsteinIneq}), we get for all $s\in [M_1]$
\begin{equation*}
\mathbb{P}\left(|\hat{\mathcal{R}}_{N}(\hat{f}_{s})-\mathcal{R}(\hat{f}_{s})|\geq \frac{\delta_0}{2}\right)\leq 2\exp\left(-\frac{N\delta_{0}^2}{2^{10}(L_{1}+\sqrt{L_{2}})^{4}+4c\delta_0}\right)
\end{equation*}
By union bound on $s\in [M_1]$, we obtain
\begin{equation*}
\mathbb{P}\left(\hat{s}\neq s^*\right)\leq 2\exp\left(\log(M_1)-\frac{N\delta_0^2}{2^{10}(L_{1}+\sqrt{L_{2}})^{4}+4c\delta_0}\right)\leq C\left(\frac{\log(M_1)}{N}\right)^{1/2}\enspace,
\end{equation*}
where $C$ is a positive constant which depends on $L_1$, $L_2$ and $\delta_0$.
\end{proof}
By Lemmas~\ref{lem:aprox_error} and~\ref{lem:ContrProhatsneqStar_s}, and under Assumptions~\ref{ass:fstar_bound} and~\ref{ass:hatfs_hatsigma_hats_m_bound} we  get
\begin{eqnarray*}
\mathbb{E}\left[R(\bar{\sigma}^{2}_{\texttt{MS}})-R(\sigma^2)\right]&=&\mathbb{E}\left[\min_{m\in[M_2]}\mathbb{E}_{X}\left[|\hat{\sigma}^{2}_{\hat{s},m}(X)-\sigma^{2}(X)|^{2}\left\{\one_{\left\{\hat{s}=s^*\right\}}+\one_{\left\{\hat{s}\neq s^*\right\}}\right\}\right]\right]\\
&\leq & \mathbb{E}\left[\min_{m\in[M_2]}\mathbb{E}_{X}\left[|\hat{\sigma}^{2}_{s^*,m}(X)-\sigma^{2}(X)|^{2}\right]\right] + C\left(\frac{\log(M_1)}{N}\right)^{1/2}\enspace
\end{eqnarray*}
where $C$ is a constant which depends on $K_2$ , $\sigma^2$ and the constant in Lemma~\ref{lem:ContrProhatsneqStar_s}.
\item \textbf{Step 2.}  Study of the term $\mathbb{E}\left[R(\hat{\sigma}^{2}_{\texttt{MS}})-R(\bar{\sigma}^{2}_{\texttt{MS}})\right]$. 
To treat the estimation error, we introduce an aggregate $\tilde{\sigma}^{2}_{\texttt{MS}}$ which is based on minimization of the empirical risk of $R$
\begin{equation*}
\tilde{\sigma}^{2}_{\texttt{MS}}:= \hat{\sigma}^{2}_{\hat{s},\tilde{m}}\enspace,\;\text{where}\enspace \enspace \tilde{m}\in \argmin_{m\in[M_2]} R_{N}(\hat{\sigma}^{2}_{\hat{s},m})\enspace,
\end{equation*}
with $R_N(\hat{\sigma}^{2}_{\hat{s},m})=\frac{1}{N}\sum_{i=1}^{N}|Z_i-\hat{\sigma}^{2}_{\hat{s},m}(X_i)|^2$. Moroever, we consider the decomposition
\begin{equation*}
\mathbb{E}\left[R(\hat{\sigma}^{2}_{\texttt{MS}})-R(\bar{\sigma}^{2}_{\texttt{MS}})\right]=\mathbb{E}\left[R(\hat{\sigma}^{2}_{\texttt{MS}})-R(\tilde{\sigma}^{2}_{\texttt{MS}})\right]+\mathbb{E}\left[R(\tilde{\sigma}^{2}_{\texttt{MS}})-R(\bar{\sigma}^{2}_{\texttt{MS}})\right].
\end{equation*}
\item \textbf{Step 2.1} Study of the term $\mathbb{E}\left[R(\tilde{\sigma}^{2}_{\texttt{MS}})-R(\bar{\sigma}^{2}_{\texttt{MS}})\right]$. We decompose the term $\mathbb{E}\left[R(\tilde{\sigma}^{2}_{\texttt{MS}})-R(\bar{\sigma}^{2}_{\texttt{MS}})\right]$ into two positive terms
\begin{equation}
\mathbb{E}\left[R(\tilde{\sigma}^{2}_{\texttt{MS}})-R(\bar{\sigma}^{2}_{\texttt{MS}})\right]=\mathbb{E}\left[R(\tilde{\sigma}^{2}_{\texttt{MS}})-R_{N}(\tilde{\sigma}^{2}_{\texttt{MS}})\right]+\mathbb{E}\left[R_{N}(\tilde{\sigma}^{2}_{\texttt{MS}})-R(\bar{\sigma}^{2}_{\texttt{MS}})\right].
\label{eq:Eq2}
\end{equation}
We use the fact that $R_{N}(\tilde{\sigma}^{2}_{\texttt{MS}})\leq R_{N}(\bar{\sigma}^{2}_{\texttt{MS}})$ in Eq.~\eqref{eq:Eq2}, and we get the uniform bound
\begin{equation*}
\mathbb{E}\left[R(\tilde{\sigma}^{2}_{\texttt{MS}})-R(\bar{\sigma}^{2}_{\texttt{MS}})\right]\leq 2\mathbb{E}\left[\max_{(s,m)\in [M_1]\times[M_2]}|R_N(\hat{\sigma}^{2}_{s,m})-R(\hat{\sigma}^{2}_{s,m})|\right].
\end{equation*}
Then using Assumption~\ref{ass:Y_bound}, for some $(s,m)\in [M_1]\times[M_2]$, set $T_{i}(s,m)=|Z_i-\hat{\sigma}^{2}_{s,m}(X_i)|^{2}=|\sigma^{2}(X_i)\xi^{2}_{i}-\hat{\sigma}^{2}_{s,m}(X_i)|^{2}$ for all $i=1,\cdots, N$. First, note that since the variables $T_{i}(s,m)$ are \iid, conditionally on $\mathcal{D}_n$ we have 
\begin{eqnarray*}
|R_N(\hat{\sigma}^{2}_{s,m})-R(\hat{\sigma}^{2}_{s,m})|&=&\bigg|\frac{1}{N}\sum_{i=1}^{N}(T_{i}(s,m)-\mathbb{E}[T_{i}(s,m)])\bigg|\\
&\leq & \bigg|\frac{1}{N}\sum_{i=1}^{N}(T_{i}(s,m)-\mathbb{E}[T_{i}(s,m)])\one_{\{|\xi_{i}|\leq L\}}\bigg|+\bigg|\frac{1}{N}\sum_{i=1}^{N}(T_{i}(s,m)-\mathbb{E}[T_{i}(s,m)])\one_{\{|\xi_{i}|> L\}}\bigg|
\end{eqnarray*}
for any $L>0$. Therefore, conditionally on $\mathcal{D}_n$
\begin{equation}
\label{eq:decompdeviationbound}
\begin{split}
\mathbb{E}\left[\max_{(s,m)\in [M_1]\times[M_2]}|R_N(\hat{\sigma}^{2}_{s,m})-R(\hat{\sigma}^{2}_{s,m})|\right]&\leq  \mathbb{E}\left[\max_{(s,m)\in [M_1]\times[M_2]}\bigg|\frac{1}{N}\sum_{i=1}^{N}(T_{i}(s,m)-\mathbb{E}[T_{i}(s,m)])\one_{\{|\xi_{i}|\leq L\}}\bigg|\right]\\
&+\mathbb{E}\left[\max_{(s,m)\in [M_1]\times[M_2]}\bigg|\frac{1}{N}\sum_{i=1}^{N}(T_{i}(s,m)-\mathbb{E}[T_{i}(s,m)])\one_{\{|\xi_{i}|> L\}}\bigg|\right].
\end{split}
\end{equation}
\textbf{Step 2.1.1}. We control the first term on the r.h.s. of Eq.~\eqref{eq:decompdeviationbound}.  On the event $\{|\xi| \leq L\}$ and under Assumptions~\ref{ass:fstar_bound} and~\ref{ass:hatfs_hatsigma_hats_m_bound}, we get $|T_{i}(s,m)|\leq c_{1}L^4+2K_{2}^{2}$ for all $i=1,\cdots, N$ for some $c_{1}>0$ that depends on $\sigma^2$. Conditionally on $\mathcal{D}_n$, we apply Hoeffding's inequality, for all $(s,m)\in [M_1]\times[M_2]$, and all $t\geq 0$
\begin{equation*}
\mathbb{P}\left(\bigg|\frac{1}{N}\sum_{i=1}^{N}(T_{i}(s,m)-\mathbb{E}[T_{i}(s,m)])\one_{\{|\xi_{i}|\leq L\}}\bigg|\geq t\right) \leq 2 \exp\left(-\frac{Nt^{2}}{2(c_{1}L^{4}+2K_{2}^{2})^2}\right)\enspace,
\end{equation*}
Conditionally on $\mathcal{D}_n$, by a union bound on $(s,m)\in[M_1]\times[M_2]$, we deduce that for all $t\geq 0$
\begin{equation*}
\mathbb{P}\left(\max_{(s,m)\in [M_1]\times[M_2]}\bigg|\frac{1}{N}\sum_{i=1}^{N}(T_{i}(s,m)-\mathbb{E}[T_{i}(s,m)])\one_{\{|\xi_{i}|\leq L\}}\bigg|\geq t\right) \leq 2 \exp\left(\log(M_{1}M_2)-\frac{Nt^{2}}{2(c_{1}L^{4}+2K_{2}^{2})^2}\right).
\end{equation*}
We apply Lemma~\ref{lem:technProba}. Then, there exists a positive constant $\mathbf{c}$ such that
\begin{equation*}
 \mathbb{E}\left[\max_{(s,m)\in [M_1]\times[M_2]}\bigg|\frac{1}{N}\sum_{i=1}^{N}(T_{i}(s,m)-\mathbb{E}[T_{i}(s,m)])\one_{\{|\xi_{i}|\leq L\}}\bigg|\right]\leq \mathbf{c}\left(c_{2}L^{4}+c_{3}\right)\left(\frac{\log(M_{1}M_{2})}{N}\right)^{1/2},
\end{equation*}
where $c_2$ is a positive constant that depends on $c_1$ and $c_{3}$ depends on $K_2$.
\\ \textbf{Step 2.1.2} We control the second term on the r.h.s. of Eq.~\eqref{eq:decompdeviationbound}. By union bound on $(s,m)\in [M_1]\times [M_2]$, by Cauchy–Schwarz inequality, under Assumptions~\ref{ass:fstar_bound},~\ref{ass:Y_bound} and~\ref{ass:hatfs_hatsigma_hats_m_bound}, and Lemma~\ref{lem:gaussianProb} we obtain
\begin{eqnarray*}
&&\mathbb{E}\left[\max_{(s,m)\in [M_1]\times[M_2]}\bigg|\frac{1}{N}\sum_{i=1}^{N}(T_{i}(s,m)-\mathbb{E}[T_{i}(s,m)])\one_{\{|\xi_{i}|> L\}}\bigg|\right]\\
&\leq & \sum_{s=1}^{M_1}\sum_{m=1}^{M_2}\frac{1}{N}\sum_{i=1}^{N}\mathbb{E}[|T_{i}(s,m)-\mathbb{E}[T_{i}(s,m)]|\one_{\{\xi_{i}> L\}}]\\
&\leq & \sum_{s=1}^{M_1}\sum_{m=1}^{M_2}\frac{1}{N}\sum_{i=1}^{N}\sqrt{\mathbb{E}[|T_{i}(s,m)-\mathbb{E}[T_{i}(s,m)]|^2]\mathbb{P}(|\xi_{i}|> L)}]\\
&\leq & cM_{1}M_{2}\sqrt{\mathbb{P}(|\xi_1|> L)}\\
&\leq &c M_{1}M_{2}\frac{\exp(-L^2/4)}{L^{1/2}} \enspace, 
\end{eqnarray*}
where $c$ is a positive constant which depends on $\xi$, $\sigma^2$ and $K_2$. 
\\Combining the results of the \textbf{Step 2.1.1} and \textbf{Step 2.1.2} in Eq.~\eqref{eq:decompdeviationbound}
\begin{equation*}
\mathbb{E}\left[\max_{(s,m)\in [M_1]\times[M_2]}|R_N(\hat{\sigma}^{2}_{s,m})-R(\hat{\sigma}^{2}_{s,m})|\right]\leq  \mathbf{c}(c_{2}L^{4}+c_3)\left(\frac{\log(M_{1}M_{2})}{N}\right)^{1/2}+c M_{1}M_{2}\frac{\exp(-L^2/4)}{L^{1/2}}.
\end{equation*}
Choosing $L=2\sqrt{\log(N)}$ and we get 
\begin{equation*}
\mathbb{E}\left[\max_{(s,m)\in [M_1]\times[M_2]}|R_N(\hat{\sigma}^{2}_{s,m})-R(\hat{\sigma}^{2}_{s,m})|\right]\leq  C\left(\frac{\log(N)^{4}\log(M_{1}M_{2})}{N}\right)^{1/2}\enspace,
\end{equation*}
where $C$ is a positive constant that depends on $c_2$ and $\mathbf{c}$, and \textbf{Step 2.1.2} is finished.

 We combine the results of the \textbf{Step 2.1.1} and \textbf{Step 2.1.2} and we get the following bound
$$
\mathbb{E}\left[R(\tilde{\sigma}^{2}_{\texttt{MS}})-R(\bar{\sigma}^{2}_{\texttt{MS}})\right]\leq 2C\left(\frac{\log(N)^{4}\log(M_{1}M_{2})}{N}\right)^{1/2}.
$$
\begin{remark}
It is clear that when $Y$ is bounded, there exists an absolute constant $C>0$ such that
$$
\mathbb{E}\left[R(\tilde{\sigma}^{2}_{\texttt{MS}})-R(\bar{\sigma}^{2}_{\texttt{MS}})\right]\leq C\left(\frac{\log(M_{1}M_{2})}{N}\right)^{1/2}.
$$
\end{remark}

\item \textbf{Step 2.2} Study of the term $
\mathbb{E}\left[R(\hat{\sigma}^{2}_{\texttt{MS}})-R(\tilde{\sigma}^{2}_{\texttt{MS}})\right]$. We start with the following decomposition
\begin{equation}
\label{eq:Eq3}
\mathbb{E}\left[R(\hat{\sigma}^{2}_{\texttt{MS}})-R(\tilde{\sigma}^{2}_{\texttt{MS}})\right]=\mathbb{E}\left[R(\hat{\sigma}^{2}_{\texttt{MS}})-R_{N}(\hat{\sigma}^{2}_{\texttt{MS}})\right]+\mathbb{E}\left[R_{N}(\hat{\sigma}^{2}_{\texttt{MS}})-R_{N}(\tilde{\sigma}^{2}_{\texttt{MS}})\right]+\mathbb{E}\left[R_{N}(\tilde{\sigma}^{2}_{\texttt{MS}})-R(\tilde{\sigma}^{2}_{\texttt{MS}})\right].
\end{equation}
We use the same arguments in \textbf{Step 2.1} to control the first term and the last term on the r.h.s. of Eq.~\eqref{eq:Eq3}, and we get the following bound
\begin{eqnarray*}
\mathbb{E}\left[R(\hat{\sigma}^{2}_{\texttt{MS}})-R_{N}(\hat{\sigma}^{2}_{\texttt{MS}})\right]+\mathbb{E}\left[R_{N}(\tilde{\sigma}^{2}_{\texttt{MS}})-R(\tilde{\sigma}^{2}_{\texttt{MS}})\right]&\leq & 2\mathbb{E}\left[\max_{(s,m)\in [M_1]\times[M_2]}|R_N(\hat{\sigma}^{2}_{s,m})-R(\hat{\sigma}^{2}_{s,m})|\right]\\
&\leq&  C\left(\frac{\log(N)^{4}\log(M_{1}M_{2})}{N}\right)^{1/2}\enspace.
\end{eqnarray*}
\begin{remark}
If $Y$ is bounded, there exists an absolute constant $C>0$ such that
$$
\mathbb{E}\left[R(\hat{\sigma}^{2}_{\texttt{MS}})-R_{N}(\hat{\sigma}^{2}_{\texttt{MS}})\right]+\mathbb{E}\left[R_{N}(\tilde{\sigma}^{2}_{\texttt{MS}})-R(\tilde{\sigma}^{2}_{\texttt{MS}})\right]\leq C\left(\frac{\log(M_{1}M_{2})}{N}\right)^{1/2}.
$$
\end{remark}
We study now the second term on the r.h.s. of Eq.~\eqref{eq:Eq3}. For that, we need the following decomposition
\begin{equation}
\label{eq:Eq4}
\mathbb{E}\left[R_{N}(\hat{\sigma}^{2}_{\texttt{MS}})-R_{N}(\tilde{\sigma}^{2}_{\texttt{MS}})\right]=\mathbb{E}\left[R_{N}(\hat{\sigma}^{2}_{\texttt{MS}})-\hat{R}_{N}(\hat{\sigma}^{2}_{\texttt{MS}})\right]+\mathbb{E}\left[\hat{R}_{N}(\hat{\sigma}^{2}_{\texttt{MS}})-R_{N}(\tilde{\sigma}^{2}_{\texttt{MS}})\right].
\end{equation}
Using $\hat{R}_{N}(\hat{\sigma}^{2}_{\texttt{MS}})\leq \hat{R}_{N}(\tilde{\sigma}^{2}_{\texttt{MS}})$ in Eq.~\eqref{eq:Eq4}, we obtain the following inequality
\begin{equation}
\label{eq:IneqUniformBound}
\mathbb{E}\left[R_{N}(\hat{\sigma}^{2}_{\texttt{MS}})-R_{N}(\tilde{\sigma}^{2}_{\texttt{MS}})\right]\leq 2\mathbb{E}\left[\max_{m\in[M_2]}|\hat{R}_{N}(\hat{\sigma}^{2}_{\hat{s},m})-R_{N}(\hat{\sigma}^{2}_{\hat{s},m})|\right].
\end{equation}
We control the term $\mathbb{E}\left[\max_{m\in[M_2]}|\hat{R}_{N}(\hat{\sigma}^{2}_{\hat{s},m})-R_{N}(\hat{\sigma}^{2}_{\hat{s},m})|\right] $. By definition of $\hat{R}_{N}$ and $R_{N}$, and under Assumption~\ref{ass:hatfs_hatsigma_hats_m_bound}, we get for all $m\in[M_2]$
\begin{eqnarray*}
|\hat{R}_{N}(\hat{\sigma}^{2}_{\hat{s},m})-R_{N}(\hat{\sigma}^{2}_{\hat{s},m})|&\leq & 
 \frac{1}{N}\sum_{i=1}^{N}|\hat{Z}_{i}-Z_{i}|^{2}+\frac{2}{N}\sum_{i=1}^{N}|\hat{Z}_{i}-Z_{i}|(|Z_i|+K_2)\enspace,
\end{eqnarray*}
where $K_2$ is the bound of $\hat{\sigma}^{2}_{\hat{s},m}$. The upper-bound of $|\hat{R}_{N}(\hat{\sigma}^{2}_{\hat{s},m})-R_{N}(\hat{\sigma}^{2}_{\hat{s},m})|$ does not depend on $m$, therefore 
\begin{equation}
\mathbb{E}\left[\max_{m\in[M_2]}|\hat{R}_{N}(\hat{\sigma}^{2}_{\hat{s},m})-R_{N}(\hat{\sigma}^{2}_{\hat{s},m})|\right]\leq \mathbb{E}\left[\frac{1}{N}\sum_{i=1}^{N}|\hat{Z}_{i}-Z_{i}|^{2}\right]+2\mathbb{E}\left[\frac{1}{N}\sum_{i=1}^{N}|\hat{Z}_{i}-Z_{i}|(|Z_i|+K_2)\right].
\label{eq: Ineg1}
\end{equation}
Note that, by Assumptions~\ref{ass:fstar_bound} and~\ref{ass:hatfs_hatsigma_hats_m_bound} we obtain for all $i=1,\cdots, N$
\begin{eqnarray*}
|f^{*}(X_i)-\hat{f}_{\texttt{MS}}(X_i)|\leq \|f^{*}\|_{\infty}+\max_{s\in [M_1]}\|\hat{f}_{s}\|_{\infty}\leq \|f^{*}\|_{\infty}+K_1\leq L_1< \infty.
\end{eqnarray*}
Since $x^2-y^2= (x-y)(x+y)$, $(x+y)^2\leq 2(x^2+y^2)$,  we obtain the following inequality for all $i=1,\cdots, N$
\begin{eqnarray}
|\hat{Z_i}-Z_i|^2&=&|(Y_i-\hat{f}_{\texttt{MS}}(X_i))^2-(Y_i-f^{*}(X_i))^2|^2 \notag \\
&=& |(f^{*}(X_i)-\hat{f}_{\texttt{MS}}(X_i))(2(Y_i-f^{*}(X_i))+(f^{*}(X_i)-\hat{f}_{\texttt{MS}}(X_i))|^{2}\notag\\
&\leq & |f^{*}(X_i)-\hat{f}_{\texttt{MS}}(X_i)|^{2}|(8|Y_i-f^{*}(X_i)|^2+2L_{1}^{2}|\enspace,
\label{eq:hatZ-Z}
\end{eqnarray}
\\\textbf{Control of $\mathbb{E}\left[\frac{1}{N}\sum_{i=1}^{N}|\hat{Z}_{i}-Z_{i}|^{2}\right]$.} First, since Assumptions~\ref{ass:fstar_bound}-\ref{ass:Y_bound} are satisfied, we have that for all $i=1,\cdots,N$,  $\mathbb{E}\left[|Y_i-f^{*}(X_i)|^{4}\right]\leq k_1 <\infty$. Second, by inequality~\eqref{eq:hatZ-Z}, Cauchy-Schwarz inequality, Jensen's inequality, and under Assumptions~\ref{ass:fstar_bound}, and~\ref{ass:Y_bound}, one gets
\begin{eqnarray*}
\mathbb{E}\left[\frac{1}{N}\sum_{i=1}^{N}|\hat{Z}_{i}-Z_{i}|^{2}\right]&\leq &   2L_{1}^{2}\mathbb{E}\left[\|\hat{f}_{\texttt{MS}}-f^{*}\|_{N}^{2}\right]+\frac{8}{N}\sum_{i=1}^{N}\mathbb{E}\left[|Y_i-f^{*}(X_i)|^{2}|f^{*}(X_i)-\hat{f}_{\texttt{MS}}(X_i)|^{2}\right]\\
&\leq & 2 L_{1}^{2} \mathbb{E}\left[\|\hat{f}_{\texttt{MS}}-f^{*}\|_{N}^{2}\right]+\frac{8}{N}\sum_{i=1}^{N}\sqrt{\mathbb{E}\left[|Y_i-f^{*}(X_i)|^{4}\right]}\sqrt{\mathbb{E}\left[|f^{*}(X_i)-\hat{f}_{\texttt{MS}}(X_i)|^{4}\right]}\\
&\leq & 2 L_{1}^{2}\mathbb{E}\left[\|\hat{f}_{\texttt{MS}}-f^{*}\|_{N}^{2}\right]+\frac{8\sqrt{k_1}L_1}{N}\sum_{i=1}^{N}\sqrt{\mathbb{E}\left[|f^{*}(X_i)-\hat{f}_{\texttt{MS}}(X_i)|^{2}\right]}\\
&\leq & 2 L_{1}^{2}\mathbb{E}\left[\|\hat{f}_{\texttt{MS}}-f^{*}\|_{N}^{2}\right]+8\sqrt{k_1}L_1\sqrt{\mathbb{E}\left[\frac{1}{N}\sum_{i=1}^{N}|f^{*}(X_i)-\hat{f}_{\texttt{MS}}(X_i)|^{2}\right]}\\
&\leq & C_1\sqrt{\mathbb{E}\left[\|\hat{f}_{\texttt{MS}}-f^{*}\|_{N}^{2}\right]}\enspace,
\end{eqnarray*}
where $C_1$ is a positive constant that depends on $k_1$ and $L_1$.
\\\textbf{Control of $\mathbb{E}\left[\frac{1}{N}\sum_{i=1}^{N}|\hat{Z}_{i}-Z_{i}|(|Z_i|+K_2)\right]$.} First, since Assumptions~\ref{ass:fstar_bound}-\ref{ass:Y_bound} are satisfied, we have that for all $i=1,\cdots,N$,  $\mathbb{E}\left[(|Y_i-f^{*}(X_i)|^{2}+K_2)^2\right]\leq k_2 <\infty$. Second, by Cauchy-Schwarz inequality and Jensen's inequality, one gets
\begin{eqnarray*}
\frac{1}{N}\sum_{i=1}^{N}\mathbb{E}\left[|\hat{Z}_{i}-Z_{i}|(|Z_i|+K_2)\right]
&\leq & \frac{1}{N}\sum_{i=1}^{N}\sqrt{\mathbb{E}\left[|\hat{Z}_{i}-Z_{i}|^2\right]}\sqrt{\mathbb{E}\left[(|Y_i-f^{*}(X_i)|^2+K_1)^2\right]}\\
&\leq & \frac{\sqrt{k_2}}{N}\sum_{i=1}^{N}\sqrt{\mathbb{E}\left[|\hat{Z}_{i}-Z_{i}|^2\right]}\\
&\leq & \sqrt{k_2}\sqrt{\mathbb{E}\left[\frac{1}{N}\sum_{i=1}^{N}|\hat{Z}_{i}-Z_{i}|^2\right]}\\
&\leq & C_2 \mathbb{E}\left[\|\hat{f}_{\texttt{MS}}-f^{*}\|_{N}^{2}\right]^{1/4}\enspace,
\end{eqnarray*}
where $C_2$ is a positive constant that depends on $C_1$ and $k_2$. Thus, there exists an absolute constant $C$ such that 
\begin{equation*}
\mathbb{E}\left[\max_{m\in[M_2]}|\hat{R}_{N}(\hat{\sigma}^{2}_{\hat{s},m})-R_{N}(\hat{\sigma}^{2}_{\hat{s},m})|\right]\leq C\mathbb{E}\left[\|\hat{f}_{\texttt{MS}}-f^{*}\|_{N}^{2}\right]^{1/4}.
\end{equation*}
We need the following proposition: 
\begin{proposition}
\label{prob:Emp_Nor_fhatMS}
Let $\hat{f}_{\texttt{MS}}$ the aggregate defined in Eq.~\eqref{est: hatfMS}. Then, under Assumptions~\ref{ass:Y_bound} and~\ref{ass:hatfs_hatsigma_hats_m_bound} there exists an absolute constant $C$ such that 
\begin{equation*}
\mathbb{E}\left[\|\hat{f}_{\texttt{MS}}-f^{*}\|_{N}^{2}\right]\leq \min_{s\in[M_1]}\mathbb{E}\left[\|\hat{f}_{s}-f^{*}\|_{N}^{2}\right]+ C\left(\frac{\log(M_1)}{N}\right)^{1/2}.
\end{equation*}
\end{proposition}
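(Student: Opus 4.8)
The plan is to treat the selection of $\hat s$ as empirical risk minimization over the finite family $\{\hat f_1,\dots,\hat f_{M_1}\}$ and to run the usual oracle-inequality argument, exploiting that the empirical risk $\hat{\mathcal R}_N$ and the empirical $L^2$-error differ only by a term that does not depend on the candidate plus a linear noise term. Write $W_i=Y_i-f^*(X_i)$ for the noise attached to the $N$ observations of $\mathcal D_N$, set $g_s=\hat f_s-f^*$ for $s\in[M_1]$, and put $\nu_N(g)=\frac1N\sum_{i=1}^N W_i\,g(X_i)$. Expanding the square,
\[
\hat{\mathcal R}_N(\hat f_s)=\frac1N\sum_{i=1}^N W_i^2+\|g_s\|_N^2-2\,\nu_N(g_s),\qquad s\in[M_1],
\]
and since $\frac1N\sum_i W_i^2$ is independent of $s$, the minimizer $\hat s$ from Eq.~\eqref{est:hats} also minimizes $s\mapsto\|g_s\|_N^2-2\nu_N(g_s)$. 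Comparing the value at $\hat s$ with the value at the deterministic oracle index $s^*$ of Eq.~\eqref{est:star(s)} gives
\[
\|\hat f_{\texttt{MS}}-f^*\|_N^2\;\le\;\|g_{s^*}\|_N^2-2\,\nu_N(g_{s^*})+2\,\nu_N(g_{\hat s}).
\]

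Next I would take expectations in the last display. Because $s^*$ is a fixed (non-random) index, $g_{s^*}$ depends only on $\mathcal D_n$, which is independent of $\mathcal D_N$; hence $\mathbb E[\nu_N(g_{s^*})]=\mathbb E\big[\frac1N\sum_i \mathbb E[W_i\mid X_i]\,g_{s^*}(X_i)\big]=0$ by $\mathbb E[W\mid X]=0$. Moreover, conditioning on $\mathcal D_n$ and then integrating over the freshly drawn features of $\mathcal D_N$ shows $\mathbb E[\|\hat f_s-f^*\|_N^2]=\mathcal R(\hat f_s)-\mathbb E[W^2]$ for every $s$, so $s^*$ is also the minimizer of $s\mapsto\mathbb E[\|\hat f_s-f^*\|_N^2]$ and $\mathbb E[\|g_{s^*}\|_N^2]=\min_{s\in[M_1]}\mathbb E[\|\hat f_s-f^*\|_N^2]$. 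The remaining (random) term is bounded crudely by $\nu_N(g_{\hat s})\le\max_{s\in[M_1]}|\nu_N(g_s)|$, so the proof reduces to the uniform deviation estimate
\[
\mathbb E\Big[\max_{s\in[M_1]}|\nu_N(g_s)|\Big]\;\le\;C\Big(\frac{\log(M_1)}{N}\Big)^{1/2}.
\]

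To obtain this, I would condition on $\mathcal D_n$, so that each $g_s$ becomes a fixed function with $\|g_s\|_\infty\le\|f^*\|_\infty+K_1=:L_1$ by Assumptions~\ref{ass:fstar_bound} and~\ref{ass:hatfs_hatsigma_hats_m_bound}. Then the summands $W_i\,g_s(X_i)$, $i=1,\dots,N$, are independent, centered, and sub-Gaussian with a variance proxy uniformly bounded by $L_1^2\|\sigma\|_\infty^2$: when $Y$ is bounded the $W_i\,g_s(X_i)$ are themselves bounded and Hoeffding's inequality (Lemma~\ref{lem:HoeffdingLemma}) applies directly, while in the Gaussian model of Assumption~\ref{ass:Y_bound} one has $\frac1N\sum_i W_i\,g_s(X_i)=\sum_i \frac{\sigma(X_i)g_s(X_i)}{N}\,\xi_i$, which conditionally on the features is a centered Gaussian with variance at most $L_1^2\|\sigma\|_\infty^2/N$. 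In either case a union bound over $s\in[M_1]$ yields $\mathbb P\big(\max_{s}|\nu_N(g_s)|\ge t\mid\mathcal D_n\big)\le 2\exp\big(\log M_1-Nt^2/(2L_1^2\|\sigma\|_\infty^2)\big)$ for all $t\ge0$, and Lemma~\ref{lem:technProba} turns this tail bound into the displayed expectation bound, uniformly in $\mathcal D_n$; integrating over $\mathcal D_n$ and collecting the three displays gives $\mathbb E[\|\hat f_{\texttt{MS}}-f^*\|_N^2]\le\min_{s\in[M_1]}\mathbb E[\|\hat f_s-f^*\|_N^2]+2C(\log(M_1)/N)^{1/2}$, which is the claim.

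The main point to be careful about is that the oracle comparison must be carried out against the \emph{deterministic} index $s^*$: this is precisely what yields the leading constant $1$ in front of the oracle term $\min_{s}\mathbb E[\|\hat f_s-f^*\|_N^2]$ and lets us avoid any Young-type splitting of $\|g_{\hat s}\|_N^2$ onto the right-hand side (such a splitting would cost a factor strictly larger than $1$). A secondary point is handling the noise term $\nu_N(g_s)$ uniformly in the two regimes of Assumption~\ref{ass:Y_bound}; note that, unlike in the variance-function part of the analysis, no truncation of the noise is needed here because $W\,g_s(X)$ is directly sub-Gaussian, which is why the rate is the clean $(\log(M_1)/N)^{1/2}$ with no extra polylogarithmic factor.
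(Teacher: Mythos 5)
Your proof is correct and follows essentially the same route as the paper: the ERM basic inequality for $\hat s$ combined with a sub-Gaussian maximal inequality over the dictionary, yielding the $(\log(M_1)/N)^{1/2}$ remainder. The only differences are cosmetic — you compare to $s^*$ and cancel $\nu_N(g_{s^*})$ by independence and then control $\max_s|\nu_N(g_s)|$ via a union bound plus Lemma~\ref{lem:technProba}, whereas the paper compares to an arbitrary $s$ and invokes the max-of-sub-Gaussians bound of Lemma~\ref{lem:maxIneqSubG}; note that, like the paper's own argument (through the sub-Gaussian constant of $Y-f^*$), your variance proxy implicitly uses the boundedness of $\sigma^2$ from Assumption~\ref{ass:fstar_bound}.
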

This result study the upper-bound of empirical norm risk of the aggregate $\hat{f}_{\texttt{MS}}$ and the proof of it exists in~\cite{Tsybakov14}. Besides, the Proposition~\ref{prob:Emp_Nor_fhatMS} and the elementary inequality $(x+y)^{1/4}\leq x^{1/4}+y^{1/4}$ for all $x,y\geq 0$ give us the following inequality
\begin{equation*}
\mathbb{E}\left[R(\hat{\sigma}^{2}_{\texttt{MS}})-R(\tilde{\sigma}^{2}_{\texttt{MS}})\right]\leq C^{'}\left\{\min_{s\in[M_1]}\mathbb{E}\left[\|\hat{f}_{s}-f^{*}\|_{N}^{2}\right]\right\}^{1/4} + C^{"}\left(\frac{\log(M_1)}{N}\right)^{1/8}\enspace,
\end{equation*}
where $C^{'}$ is a constant which depends on $C_2$ and $C^{"}$ is a constant which depends on $C_2$ and the constant in Proposition~\ref{prob:Emp_Nor_fhatMS}.

Merging the results of the \textbf{Step 1.} and \textbf{Step 2.} in Eq.~\eqref{eq:Eq1} and we get the result.
\begin{remark}
In the case where $Y$ is bounded and from Eq.~\eqref{eq:hatZ-Z}, we oberve that there exists a constant $C_3$ such that
\begin{eqnarray}
|\hat{Z_i}-Z_i|^2 \leq C_{3}|f^{*}(X_i)-\hat{f}_{\texttt{MS}}(X_i)|^{2}\enspace.
\label{eq:hatZ_Zbounded}
\end{eqnarray}
By Jensen’s inequality twice an inequality~\eqref{eq: Ineg1} and from Eq.\eqref{eq:hatZ_Zbounded}, one gets there exists an absolute constant $C_4$ such that
\begin{equation}
\mathbb{E}\left[\max_{m\in[M_2]}|\hat{R}_{N}(\hat{\sigma}^{2}_{\hat{s},m})-R_{N}(\hat{\sigma}^{2}_{\hat{s},m})|\right]\leq C_4\mathbb{E}\left[\|\hat{f}_{\texttt{MS}}-f^{*}\|_{N}^{2}\right]^{1/2}.
\label{eq: Inegbbound}
\end{equation}
Finally, we apply Proposition~\ref{prob:Emp_Nor_fhatMS} in Eq.\eqref{eq: Inegbbound} to get the result.
\end{remark}
\end{proof}
\section{Proof of Proposition~\ref{prob:Emp_Nor_fhatMS}}
From the definition of \texttt{MS}-estimator $\hat{f}_{\texttt{MS}}$, we get by a simple algebra that, for any $s\in [M_1]$
\begin{eqnarray*}
\|\hat{f}_{\texttt{MS}}-f^*\|^{2}_{N}\leq \|\hat{f}_{s}-f^*\|^{2}_{N}+2<\hat{f}_{\texttt{MS}}-\hat{f}_{s}, Y-f^{*}>,
\end{eqnarray*}
where $<\hat{f}_{\texttt{MS}}-\hat{f}_{s}, Y-f^{*}>:=\frac{1}{N}\sum_{i=1}^{N}\left((\hat{f}_{\texttt{MS}}(X_i)-\hat{f}_{s}(X_i))(Y_i-f^{*}(X_i))\right)$. Therefore, one gets for any $s\in [M_1]$
\begin{eqnarray}
\label{eq:PropIneq1}
\mathbb{E}\left[\|\hat{f}_{\texttt{MS}}-f^*\|^{2}_{N}\right]\leq \mathbb{E}\left[\|\hat{f}_{s}-f^*\|^{2}_{N}\right]+2\mathbb{E}\left[<\hat{f}_{\texttt{MS}}-\hat{f}_{s}, Y-f^{*}>\right].
\end{eqnarray}
We control the second term in the r.h.s. of Eq~\eqref{eq:PropIneq1}. Firstly, we notice that
\begin{eqnarray*}
\mathbb{E}\left[<\hat{f}_{\texttt{MS}}-\hat{f}_{s}, Y-f^{*}>\right]\leq \mathbb{E}\left[\max_{1\leq j \leq M_1}<\hat{f}_{j}-\hat{f}_{s}, Y-f^{*}>\right].
\end{eqnarray*}
Secondly, since $Y-f^*$ is $\rho$-subgaussian where $\rho$ is a positive constant which depends on $Y-f^*$, then the variables $<\hat{f}_{j}-\hat{f}_{s}, Y-f^{*}>$ is $\bar{\rho}$-subgaussian where $\bar{\rho}^2=\frac{\rho^2\|\hat{f}_j-\hat{f}_{s}\|_{N}^{2}}{N}$. Moreover, under Assumption~\ref{ass:hatfs_hatsigma_hats_m_bound}, it is clear that $\max_{1\leq j\leq M_1}\|\hat{f}_{j}-\hat{f}_s\|_{N}^{2}\leq B$ where $B$ is a constant which depends on $K_1$. Therefore, we use Lemma~\ref{lem:maxIneqSubG} and we get
\begin{eqnarray*}
\mathbb{E}\left[\max_{1\leq j \leq M_1}<\hat{f}_{j}-\hat{f}_{s}, Y-f^{*}>\right]\leq \rho \sqrt{B}\sqrt{\frac{2\log(M_1)}{N}}.
\end{eqnarray*}
Thus, 
\begin{eqnarray*}
\mathbb{E}\left[\|\hat{f}_{\texttt{MS}}-f^*\|^{2}_{N}\right]\leq \min_{s\in [M_1]}\mathbb{E}\left[\|\hat{f}_{s}-f^*\|^{2}_{N}\right]+2\rho \sqrt{B}\sqrt{\frac{2\log(M_1)}{N}}.
\end{eqnarray*}
\section{Proof of Theorem~\ref{thm:UpperBoundCM}}
We introduce the following aggregates
\begin{equation*}
\tilde{\sigma}^{2}_{\texttt{C}}:= \hat{\sigma}^{2}_{\hat{\lambda},\tilde{\beta}}\enspace,\;\text{where}\enspace \enspace \tilde{\beta}\in \argmin_{\beta\in \Lambda^{M_{2}}} R_{N}(\hat{\sigma}^{2}_{\hat{\lambda},\beta})\enspace,
\end{equation*}
and
\begin{equation*}
\bar{\sigma}^{2}_{\texttt{C}}:= \hat{\sigma}^{2}_{\hat{\lambda},\bar{\beta}}\enspace,\;\text{where}\enspace \enspace \bar{\beta}\in \argmin_{\beta\in\Lambda^{M_{2}}} R(\hat{\sigma}^{2}_{\hat{\lambda},\beta}).
\end{equation*}
Consider the following decomposition
\begin{equation}
\mathbb{E}\left[|\hat{\sigma}^{2}_{\texttt{C}}(X)-\sigma^{2}(X)|^2\right]= \mathbb{E}\left[R(\hat{\sigma}^{2}_{\texttt{C}})-R(\tilde{\sigma}^{2}_{\texttt{C}})\right]+\mathbb{E}\left[R(\tilde{\sigma}^{2}_{\texttt{C}})-R(\bar{\sigma}^{2}_{\texttt{C}})\right]+\mathbb{E}\left[R(\bar{\sigma}^{2}_{\texttt{C}})- R(\sigma^2)\right].
\label{eq:DecompRiskCM}
\end{equation}
\textbf{Step 1.} Study of the term $\mathbb{E}\left[R(\bar{\sigma}^{2}_{\texttt{C}})- R(\sigma^2)\right]$. We use the same proof of Lemma~\ref{lem:aprox_error}, and we get
\begin{equation*}
\mathbb{E}\left[R(\bar{\sigma}^{2}_{\texttt{C}})- R(\sigma^2)\right]\leq \mathbb{E}\left[\inf_{\beta\in \Lambda^{M_2}}\mathbb{E}_{X}\left[|\hat{\sigma}_{\hat{\lambda},\beta}^2(X)-\sigma^{2}(X)\right]\right].
\end{equation*}
\textbf{Step 2.}  Study of the term $\mathbb{E}\left[R(\tilde{\sigma}^{2}_{\texttt{C}})-R(\bar{\sigma}^{2}_{\texttt{C}})\right]$. We use the fact that $R_{N}(\tilde{\sigma}^{2}_{\texttt{C}})\leq R_{N}(\bar{\sigma}^{2}_{\texttt{C}})$ , and we get the uniform bound
\begin{equation*}
\mathbb{E}\left[R(\tilde{\sigma}^{2}_{\texttt{C}})-R(\bar{\sigma}^{2}_{\texttt{C}})\right]\leq 2\mathbb{E}\left[\sup_{(\lambda,\beta)\in   \Lambda^{M_1}\times\Lambda^{M_2}}|R_N(\hat{\sigma}^{2}_{\lambda,\beta})-R(\hat{\sigma}^{2}_{\lambda,\beta})|\right].
\end{equation*}
Since $\Lambda^{M_2}$ (resp. $\Lambda^{M_1}$) is compact, we have $\Lambda^{M_2}\subset \bar{B}(0,1)$ (the closed unit 
 ball) (resp. $\Lambda^{M_1}\subset \bar{B}(0,1)$), and there exists an $\epsilon_2$-net $\Lambda^{M_2}_{\epsilon_2}$ of $\Lambda^{M_2}$ (resp. an $\epsilon_1$-net $\Lambda^{M_1}_{\epsilon_1}$ of $\Lambda^{M_1}$)~\wrt $\|\cdot\|_{1,M_2}$ (resp. $\|\cdot\|_{1,M_1}$) such that $|\Lambda^{M_2}_{\epsilon_2}|\leq \left(3/\epsilon_2\right)^{M_2}$ (resp. $|\Lambda^{M_1}_{\epsilon_1}|\leq \left(3/\epsilon_1\right)^{M_1}$). In particular, for all $\beta\in\Lambda^{M_2}$ (resp. $\lambda\in\Lambda^{M_1}$) there exists $\beta^{\epsilon_2}\in\Lambda^{M_2}_{\epsilon_2}$ (resp. $\lambda^{\epsilon_1}\in\Lambda^{M_1}_{\epsilon_1}$ ) such that $\|\beta-\beta^{\epsilon_2}\|_{1,M_2}\leq \epsilon_2$ (resp. $\|\lambda-\lambda^{\epsilon_1}\|_{1,M_1}\leq \epsilon_1$). From triangle inequality, one gets
\begin{multline}
|R_N(\hat{\sigma}^{2}_{\lambda,\beta})-R(\hat{\sigma}^{2}_{\lambda,\beta})|\leq |R_N(\hat{\sigma}^{2}_{\lambda,\beta})-R_N(\hat{\sigma}^{2}_{\lambda,\beta^{\epsilon_2}})|+|R_N(\hat{\sigma}^{2}_{\lambda,\beta^{\epsilon_2}})-R_N(\hat{\sigma}^{2}_{\lambda^{\epsilon_1},\beta^{\epsilon_2}})|+|R_N(\hat{\sigma}^{2}_{\lambda^{\epsilon_1},\beta^{\epsilon_2}})-R(\hat{\sigma}^{2}_{\lambda^{\epsilon_1},\beta^{\epsilon_2}})|\\
+ |R(\hat{\sigma}^{2}_{\lambda^{\epsilon_1},\beta^{\epsilon_2}})-R(\hat{\sigma}^{2}_{\lambda,\beta^{\epsilon_2}})|+|R(\hat{\sigma}^{2}_{\lambda,\beta^{\epsilon_2}})-R(\hat{\sigma}^{2}_{\lambda,\beta})|.
\label{eq:IneqRiskCM1}
\end{multline}
\begin{enumerate}
\item 
\textbf{Control of $|R(\hat{\sigma}^{2}_{\lambda,\beta^{\epsilon_2}})-R(\hat{\sigma}^{2}_{\lambda,\beta})|$.} By Jensen's inequality, under assumptions~\ref{ass:fstar_bound}-~\ref{ass:Y_bound}-~\ref{ass:hatflambda_hatsigma_hatlambda__bound} and $\mathbb{E}[\xi^2]=1$ we obtain 
\begin{eqnarray*}
|R(\hat{\sigma}^{2}_{\lambda,\beta^{\epsilon_2}})-R(\hat{\sigma}^{2}_{\lambda,\beta})|&\leq & \mathbb{E}\left[\big||Z-\hat{\sigma}^{2}_{\lambda,\beta^{\epsilon_2}}(X)|^2-|Z-\hat{\sigma}^{2}_{\lambda,\beta}(X)|^2\big|\right]\\
& = & \mathbb{E}\left[\big|\left(\sum_{j=1}^{M_2}\left(\beta_{j}-\beta^{\epsilon_2}_{j}\right)\hat{\sigma}^{2}_{\lambda,j}(X)\right)\left(2Z-\hat{\sigma}^{2}_{\lambda,\beta}(X)-\hat{\sigma}^{2}_{\lambda,\beta^{\epsilon_2}}(X)\right) \big|\right]\\
&\leq & C_1 \epsilon_2\enspace,
\end{eqnarray*}
where $C_1$ is a constant which depends on the upper bounds of
$\sigma^2$ and $\hat{\sigma}^{2}_{\lambda,j}$.
\item
\textbf{Control of $|R_N(\hat{\sigma}^{2}_{\lambda,\beta})-R_N(\hat{\sigma}^{2}_{\lambda,\beta^{\epsilon_2}})|$.} Since Assumptions~\ref{ass:fstar_bound},~\ref{ass:Y_bound}, and~\ref{ass:hatfs_hatsigma_hats_m_bound} are satisfied, we obtain
\begin{eqnarray*}
|R_N(\hat{\sigma}^{2}_{\lambda,\beta})-R_N(\hat{\sigma}^{2}_{\lambda,\beta^{\epsilon_2}})|&\leq &  \frac{1}{N}\sum_{i=1}^{N}\left(\sum_{j=1}^{M_2}|\beta_{j}-\beta^{\epsilon_2}_{j}||\hat{\sigma}^{2}_{\lambda,j}(X_i)|\right)|2\sigma^{2}(X_i)\xi^{2}_{i}-\hat{\sigma}^{2}_{\lambda,\beta}(X_i)-\hat{\sigma}^{2}_{\lambda,\beta^{\epsilon_2}}(X_i)|\\
&\leq & k\epsilon_{2}\left(\frac{C_2}{N}\sum_{i=1}^{N}\xi_{i}^{2}+C_3\right),
\end{eqnarray*}
where $k$ is the bound of $\hat{\sigma}^{2}_{\lambda,j}$, $C_2$ is the constant which depends on $\sigma^2$ and $C_3$ is the constant which depends on the upper bounds $\hat{\sigma}^{2}_{\lambda,\beta}$ and $\hat{\sigma}^{2}_{\lambda,\beta^{\epsilon_2}}$.
\item 
\textbf{Control of $|R(\hat{\sigma}^{2}_{\lambda^{\epsilon_1},\beta^{\epsilon_2}})-R(\hat{\sigma}^{2}_{\lambda,\beta^{\epsilon_2}})|$.}  Under Assumptions~\ref{ass:fstar_bound},~\ref{ass:Y_bound},~\ref{ass:hatflambda_hatsigma_hatlambda__bound}, and~\ref{ass:KLipchitz}, we get
\begin{eqnarray*}
&&|R(\hat{\sigma}^{2}_{\lambda^{\epsilon_1},\beta^{\epsilon_2}})-R(\hat{\sigma}^{2}_{\lambda,\beta^{\epsilon_2}})|\\
&\leq &\mathbb{E}\left[\sum_{j=1}^{M_2}\beta_{j}^{\epsilon_2}|\hat{\sigma}^{2}_{\lambda,j}(X)-\hat{\sigma}^{2}_{\lambda^{\epsilon_1},j}(X)||2\sigma^2(X)\xi^2-\hat{\sigma}^{2}_{\lambda^{\epsilon_1},\beta^{\epsilon_2}}(X)-\hat{\sigma}^{2}_{\lambda,\beta^{\epsilon_2}}(X)|\right]\\
&\leq & \sum_{j=1}^{M_2}\beta_{j}^{\epsilon_2}\left(2\mathbb{E}\left[\mathbb{E}\left[|\hat{\sigma}^{2}_{\lambda,j}(X)-\hat{\sigma}^{2}_{\lambda^{\epsilon_1},j}(X)|\sigma^2(X)\xi^2|\mathcal{D}_{n},X\right]\right]+\mathbb{E}\left[|\hat{\sigma}^{2}_{\lambda,j}(X)-\hat{\sigma}^{2}_{\lambda^{\epsilon_1},j}(X)||\hat{\sigma}^{2}_{\lambda^{\epsilon_1},\beta^{\epsilon_2}}(X)-\hat{\sigma}^{2}_{\lambda,\beta^{\epsilon_2}}(X)|\right]\right)\\
&\leq & \sum_{j=1}^{M_2}\beta_{j}^{\epsilon_2}\left(2\mathbb{E}\left[|\hat{\sigma}^{2}_{\lambda,j}(X)-\hat{\sigma}^{2}_{\lambda^{\epsilon_1},j}(X)|\sigma^2(X)\mathbb{E}\left[\xi^2\right]\right]+\mathbb{E}\left[|\hat{\sigma}^{2}_{\lambda,j}(X)-\hat{\sigma}^{2}_{\lambda^{\epsilon_1},j}(X)||\hat{\sigma}^{2}_{\lambda^{\epsilon_1},\beta^{\epsilon_2}}(X)-\hat{\sigma}^{2}_{\lambda,\beta^{\epsilon_2}}(X)|\right]\right)\\
&\leq & C_4\epsilon_1,
\end{eqnarray*}
where $C_4$ is constant which depends on $K$ and the upper bounds of $\sigma^2$, $\hat{\sigma}^{2}_{\lambda^{\epsilon_1},\beta^{\epsilon_2}}$ and $\hat{\sigma}^{2}_{\lambda,\beta^{\epsilon_2}}$.
\item \textbf{Control of $|R_N(\hat{\sigma}^{2}_{\lambda^{\epsilon_1},\beta^{\epsilon_2}})-R_N(\hat{\sigma}^{2}_{\lambda,\beta^{\epsilon_2}})|$.} We use the same way as $3.$ and we obtain
\begin{equation*}
|R_N(\hat{\sigma}^{2}_{\lambda^{\epsilon_1},\beta^{\epsilon_2}})-R_N(\hat{\sigma}^{2}_{\lambda,\beta^{\epsilon_2}})|\leq \epsilon_1\left(\frac{C_2}{N}\sum_{i=1}^{N}\xi_{i}^{2}+C_5\right),
\end{equation*}
where $C_5$ is constant which depends on the upper bounds of $\hat{\sigma}^{2}_{\lambda^{\epsilon_1},\beta^{\epsilon_2}}$ and $\hat{\sigma}^{2}_{\lambda,\beta^{\epsilon_2}}$.
\end{enumerate}
Therefore, we deduce that
\begin{equation*}
\mathbb{E}\left[\sup_{(\lambda,\beta)\in   \Lambda^{M_1}\times\Lambda^{M_2}}|R_N(\hat{\sigma}^{2}_{\lambda,\beta})-R(\hat{\sigma}^{2}_{\lambda,\beta})|\right]\leq C_{k,C_2, C_3, C_4,C_5}(\epsilon_1+ \epsilon_2)+\mathbb{E}\left[\sup_{(\lambda,\beta)\in  \Lambda^{M_1}_{\epsilon_1}\times\Lambda^{M_2}_{\epsilon_{2}}}|R_N(\hat{\sigma}^{2}_{\lambda,\beta})-R(\hat{\sigma}^{2}_{\lambda,\beta})|\right].
\end{equation*}
For some $(\lambda,\beta)\in \Lambda^{M_1}_{\epsilon_1}\times\Lambda^{M_2}_{\epsilon_{2}}$, set $T_{i}(\lambda,\beta)=|Z_i-\hat{\sigma}^{2}_{\lambda,\beta}(X_i)|^{2}=|\sigma^{2}(X_i)\xi^{2}_{i}-\hat{\sigma}^{2}_{\lambda,\beta}(X_i)|^{2}$ for all $i=1,\cdots, N$. Let $L>0$.  Since the variables $T_{i}(\lambda,\beta)$ are \iid, we have 
\begin{equation}
\label{eq:decompdeviationboundCMSigma}
\begin{split}
\mathbb{E}\left[\sup_{(\lambda,\beta)\in \Lambda^{M_1}_{\epsilon_1}\times\Lambda^{M_2}_{\epsilon_{2}}}|R_N(\hat{\sigma}^{2}_{\lambda,\beta})-R(\hat{\sigma}^{2}_{\lambda,\beta})|\right]&\leq  \mathbb{E}\left[\sup_{(\lambda,\beta)\in \Lambda^{M_1}_{\epsilon_1}\times\Lambda^{M_2}_{\epsilon_{2}}}\bigg|\frac{1}{N}\sum_{i=1}^{N}(T_{i}(\lambda,\beta)-\mathbb{E}[T_{i}(\lambda,\beta)])\one_{\{|\xi_{i}|\leq L\}}\bigg|\right]\\
&+\mathbb{E}\left[\sup_{(\lambda,\beta)\in \Lambda^{M_1}_{\epsilon_1}\times\Lambda^{M_2}_{\epsilon_{2}}}\bigg|\frac{1}{N}\sum_{i=1}^{N}(T_{i}(\lambda,\beta)-\mathbb{E}[T_{i}(\lambda,\beta)])\one_{\{|\xi_{i}|> L\}}\bigg|\right].
\end{split}
\end{equation}
\textbf{Step 2.1}. We control the first term on the r.h.s. of Eq.~\eqref{eq:decompdeviationboundCMSigma}.  On the event $\{|\xi| \leq L\}$ and under assumptions~\ref{ass:fstar_bound},~\ref{ass:Y_bound} and~\ref{ass:hatflambda_hatsigma_hatlambda__bound}, we get $|T_{i}(\lambda,\beta)|\leq c_{1}L^4+\bar{c}_1$ for all $i=1,\cdots, N$ where $c_{1}$ is a positive constant which depends on the upper bound of $\sigma^2$ and  $\bar{c}_1$ depends on the upper bound of $\hat{\sigma}^{2}_{\lambda,\beta}$. Conditionally on $\mathcal{D}_n$, we apply Hoeffding's inequality, for all $(\lambda,\beta)\in \Lambda^{M_1}_{\epsilon_1}\times\Lambda^{M_2}_{\epsilon_{2}}$, and all $t\geq 0$
\begin{equation*}
\mathbb{P}\left(\bigg|\frac{1}{N}\sum_{i=1}^{N}(T_{i}(\lambda,\beta)-\mathbb{E}[T_{i}(\lambda,\beta)])\one_{\{|\xi_{i}|\leq L\}}\bigg|\geq t\right) \leq 2 \exp\left(-\frac{-Nt^{2}}{2(c_{1}L^4+\bar{c}_1)^2}\right)\enspace,
\end{equation*}
By a union bound on $(\lambda,\beta)\in \Lambda^{M_1}_{\epsilon_1}\times\Lambda^{M_2}_{\epsilon_{2}}$ and choosing $\epsilon_1=\epsilon_2=\frac{3}{N}$, we deduce that for all $t\geq 0$
\begin{equation*}
\mathbb{P}\left(\sup_{(\lambda,\beta)\in \Lambda^{M_1}_{\epsilon_1}\times\Lambda^{M_2}_{\epsilon_{2}}}\bigg|\frac{1}{N}\sum_{i=1}^{N}(T_{i}(\lambda,\beta)-\mathbb{E}[T_{i}(\lambda,\beta)])\one_{\{|\xi_{i}|\leq L\}}\bigg|\geq t\right) \leq 2 \exp\left((M_1+M_2)\log(N)-\frac{-Nt^{2}}{2(c_{1}L^4+\bar{c}_1)^2}\right).
\end{equation*}
We apply Lemma~\ref{lem:technProba}. Then, there exists a positive constant $\mathbf{c}$ such that
\begin{equation*}
 \mathbb{E}\left[\sup_{(\lambda,\beta)\in \Lambda^{M_1}_{\epsilon_1}\times\Lambda^{M_2}_{\epsilon_{2}}}\bigg|\frac{1}{N}\sum_{i=1}^{N}(T_{i}(\lambda,\beta)-\mathbb{E}[T_{i}(\lambda,\beta)])\one_{\{|\xi_{i}|\leq L\}}\bigg|\right]\leq \mathbf{c}(c_{2}L^{4}+\bar{c}_2)\left(\frac{(M_1+M_2)\log(N)}{N}\right)^{1/2},
\end{equation*}
where $c_2$ is constant which depends on $c_1$ and $\bar{c}_2$ on $\bar{c}_1$.\\
\textbf{Step 2.2.} We control the second term on the r.h.s. of Eq.~\eqref{eq:decompdeviationboundCMSigma}. Thanks of the boundness of $\sigma^2$ and $\hat{\sigma}^{2}_{\lambda,\beta}$ and $\mathbb{E}[\xi^4]=3$, we get $\mathbb{E}[T_{i}(\lambda,\beta)]\leq c_3$ and $T_{i}(\lambda,\beta)\leq c_4 \xi_{i}^{4} +c_5$ for all $i=1,\cdots,N$ where $c_4$ and $c_5$ are constants which depend on the upper bounds of $\sigma^2$ and $\hat{\sigma}_{\lambda,\beta}^2$ respectively. By Cauchy–Schwarz inequality and Lemma~\ref{lem:gaussianProb} we obtain
\begin{eqnarray*}
\mathbb{E}\left[\sup_{(\lambda,\beta)\in \Lambda^{M_1}_{\epsilon_1}\times\Lambda^{M_2}_{\epsilon_{2}}}\bigg|\frac{1}{N}\sum_{i=1}^{N}(T_{i}(s,m)-\mathbb{E}[T_{i}(s,m)])\one_{\{|\xi_{i}|> L\}}\bigg|\right]
&\leq & \frac{c_4}{N}\sum_{i=1}^{N}\mathbb{E}[\xi_{i}^{4}\one_{\{|\xi_{i}|> L\}}]+(c_3+c_5)\mathbb{P}(|\xi_1|>L)\\
&\leq & \bar{c}_{4}\sqrt{\mathbb{P}(|\xi_1|>L)}+(c_3+c_5)\mathbb{P}(|\xi_1|>L)\\
&\leq & \frac{\bar{c}_{4}\exp(-L^2/4)}{\sqrt{L}}+\frac{(c_3+c_5)\exp(-L^2/2)}{L}\enspace,
\end{eqnarray*}
where $\bar{c}_{4}$ is a positive constant that depends on $c_4$ and $\xi$. 

Merging the results of the \textbf{Step 2.1.} and \textbf{Step 2.2.} in Eq.\eqref{eq:decompdeviationboundCMSigma}, and we obtain
\begin{multline*}
\mathbb{E}\left[\sup_{(\lambda,\beta)\in \Lambda^{M_1}_{\epsilon_1}\times\Lambda^{M_2}_{\epsilon_{2}}}|R_N(\hat{\sigma}^{2}_{\lambda,\beta})-R(\hat{\sigma}^{2}_{\lambda,\beta})|\right]\leq \mathbf{c}(c_{2}L^{4}+\bar{c}_2)\left(\frac{(M_1+M_2)\log(N)}{N}\right)^{1/2}+ \frac{\bar{c}_{4}\exp(-L^2/4)}{\sqrt{L}}\\
+\frac{(c_3+c_5)\exp(-L^2/2)}{L}.
\end{multline*}
Puting $L=\sqrt{2\log(N)}$, and we get
\begin{equation*}
\mathbb{E}\left[\sup_{(\lambda,\beta)\in \Lambda^{M_1}_{\epsilon_1}\times\Lambda^{M_2}_{\epsilon_{2}}}|R_N(\hat{\sigma}^{2}_{\lambda,\beta})-R(\hat{\sigma}^{2}_{\lambda,\beta})|\right] \leq c_{6}\left(\frac{(M_1+M_2)\log^{5}(N)}{N}\right)^{1/2}\enspace,
\end{equation*}
where $c_6$ is constant which depends on $c_{2}$. Thus,
\begin{equation*}
\mathbb{E}\left[R(\tilde{\sigma}^{2}_{\texttt{C}})-R(\bar{\sigma}^{2}_{\texttt{C}})\right]\leq C\left(\frac{(M_1+M_2)\log^{5}(N)}{N}\right)^{1/2}\enspace,
\end{equation*}
where $C$ is constant which depends on $c_6$ and $\mathbf{c}$.\\
\begin{remark} When $Y$ is bounded, it's clear that there exists an absolute constant $C>0$
\begin{equation*}
\mathbb{E}\left[R(\tilde{\sigma}^{2}_{\texttt{C}})-R(\bar{\sigma}^{2}_{\texttt{C}})\right]\leq C\left(\frac{(M_1+M_2)\log(N)}{N}\right)^{1/2}\enspace .
\end{equation*}
\end{remark}
\textbf{Step 3.} Study of the term $\mathbb{E}\left[R(\hat{\sigma}^{2}_{\texttt{C}})-R(\tilde{\sigma}^{2}_{\texttt{C}})\right]$. We use the same arguments of proof of Theorem~\ref{thm:Risk_MS_sigma3} (\textbf{Step 2.2}), and we get that there exists two positive constants $C_1$ and $C_2$ such that
\begin{equation}
\label{eq:bound11}
\mathbb{E}\left[R(\hat{\sigma}^{2}_{\texttt{C}})-R(\tilde{\sigma}^{2}_{\texttt{C}})\right]\leq C_{1}\left\{\mathbb{E}\left[\|\hat{f}_{\texttt{C}}-f^{*}\|^{2}_{N}\right]\right\}^{1/p}+ C_{2}\alpha_{N}\enspace,
\end{equation}
where $p=2$ if $Y$ is bounded, $p=4$ otherwise, and
\[ \alpha_N= \left\{ \begin{array}{ll}
         \left(\frac{(M_1+M_2)\log(N)}{N}\right)^{1/2} & \mbox{if  $Y$ is bounded};\\
        \left(\frac{(M_1+M_2)\log^{5}(N)}{N}\right)^{1/2} & \mbox{otherwise}.\end{array} \right. \] 
In the sequel, we give the following proposition
\begin{proposition}
\label{prop:EmpiricalNormfCM}
Let $\hat{f}_{\texttt{C}}$ be the aggregate defined in Eq.~\eqref{est:fhatCM}. Then, under Assumptions~\ref{ass:Y_bound} and~\ref{ass:hatflambda_hatsigma_hatlambda__bound} there exists an absolute constant $C>0$
\begin{equation*}
\mathbb{E}\left[\|\hat{f}_{\texttt{C}}-f^{*}\|_{N}^{2}\right]\leq \min_{\lambda\in \Lambda^{M_1}}\mathbb{E}\left[\|\hat{f}_{\lambda}-f^{*}\|_{N}^{2}\right]+ C\sqrt{\frac{\log(M_1)}{N}}\enspace.
\end{equation*}
\end{proposition}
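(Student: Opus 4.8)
The plan is to follow the proof of Proposition~\ref{prob:Emp_Nor_fhatMS} for the model-selection aggregate, inserting one extra reduction that handles the passage from the simplex to the dictionary. Write $W_i:=Y_i-f^{*}(X_i)$ for $i$ in the second sample $\mathcal{D}_N$. Since, for every function $g$, $\hat{\mathcal{R}}_{N}(g)=\|W\|_{N}^{2}-\frac{2}{N}\sum_{i=1}^{N}W_i\big(g(X_i)-f^{*}(X_i)\big)+\|g-f^{*}\|_{N}^{2}$, the optimality $\hat{\mathcal{R}}_{N}(\hat f_{\hat\lambda})\le\hat{\mathcal{R}}_{N}(\hat f_{\lambda})$ for all $\lambda\in\Lambda^{M_1}$ rearranges, after cancelling $\|W\|_N^2$, into
\[
\|\hat f_{\texttt{C}}-f^{*}\|_{N}^{2}\;\le\;\|\hat f_{\lambda}-f^{*}\|_{N}^{2}+\frac{2}{N}\sum_{i=1}^{N}W_i\big(\hat f_{\texttt{C}}(X_i)-\hat f_{\lambda}(X_i)\big),\qquad\forall\,\lambda\in\Lambda^{M_1}.
\]

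The step that differs from the model-selection case is the linearization of the cross term: because $\sum_{j}(\hat\lambda_j-\lambda_j)=0$, one may write $\hat f_{\texttt{C}}-\hat f_{\lambda}=\sum_{j=1}^{M_1}(\hat\lambda_j-\lambda_j)(\hat f_j-\hat f_1)$, and since $\|\hat\lambda-\lambda\|_{1,M_1}\le 2$ this gives the bound $\frac{2}{N}\sum_i W_i(\hat f_{\texttt{C}}(X_i)-\hat f_{\lambda}(X_i))\le 4\max_{1\le j\le M_1}\big|\frac1N\sum_{i=1}^{N}W_i(\hat f_j(X_i)-\hat f_1(X_i))\big|$, uniformly in $\lambda$. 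As this last quantity no longer depends on $\lambda$, I take the infimum over $\lambda\in\Lambda^{M_1}$ on the right-hand side — attained by continuity on the compact simplex — then take expectations, using $\mathbb{E}\big[\inf_{\lambda}\|\hat f_{\lambda}-f^{*}\|_{N}^{2}\big]\le\inf_{\lambda}\mathbb{E}\big[\|\hat f_{\lambda}-f^{*}\|_{N}^{2}\big]=\min_{\lambda\in\Lambda^{M_1}}\mathbb{E}\big[\|\hat f_{\lambda}-f^{*}\|_{N}^{2}\big]$.

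Finally I bound the deviation term exactly as in Proposition~\ref{prob:Emp_Nor_fhatMS}: conditioning on $\mathcal{D}_n$ (which fixes all $\hat f_j$) and on the design of $\mathcal{D}_N$, each $\frac1N\sum_{i}W_i(\hat f_j(X_i)-\hat f_1(X_i))$ is a centred $\bar\rho$-subgaussian variable with $\bar\rho^{2}=\rho^{2}\|\hat f_j-\hat f_1\|_{N}^{2}/N$, where $\rho$ is the subgaussian parameter of $W$ furnished by Assumption~\ref{ass:Y_bound}; moreover $\max_{1\le j\le M_1}\|\hat f_j-\hat f_1\|_{N}^{2}\le B$ a.s. for a constant $B$ by the boundedness in Assumption~\ref{ass:hatflambda_hatsigma_hatlambda__bound} (with $M_1$ finite, the maximum over $j$ of a.s. bounded quantities is a.s. bounded). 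Lemma~\ref{lem:maxIneqSubG} then yields $\mathbb{E}\big[\max_{1\le j\le M_1}|\cdot|\big]\le\rho\sqrt{B}\,\sqrt{2\log(2M_1)/N}$, which after absorbing constants is the claimed $C\sqrt{\log(M_1)/N}$ and completes the argument.

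I do not expect a genuine obstacle here; the only point requiring care is the linearization, i.e.\ using that $\lambda\mapsto\frac1N\sum_i W_i\hat f_{\lambda}(X_i)$ is affine so that the supremum over $\Lambda^{M_1}$ collapses to a maximum over the $M_1$ dictionary functions — the remainder (Pythagoras-type expansion and the subgaussian maximal inequality) is identical to the model-selection proof.
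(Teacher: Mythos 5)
Your proof is correct and follows essentially the route the paper intends: the paper proves Proposition~\ref{prop:EmpiricalNormfCM} by declaring it "similar" to Proposition~\ref{prob:Emp_Nor_fhatMS}, i.e. the same Pythagoras-type expansion of the empirical risk plus the subgaussian maximal inequality (Lemma~\ref{lem:maxIneqSubG}), which is exactly what you do. The one step the paper leaves implicit — reducing the cross term over the simplex to a maximum over the $M_1$ dictionary elements via $\sum_j(\hat\lambda_j-\lambda_j)=0$ and $\|\hat\lambda-\lambda\|_{1,M_1}\le 2$ — is supplied correctly in your argument, and the extra factor $\log(2M_1)$ from the absolute-value maximum is harmlessly absorbed into the constant.
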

The proof of this proposition is similar of the proof of Proposition~\ref{prob:Emp_Nor_fhatMS}. Thus, we apply Proposition~\ref{prop:EmpiricalNormfCM} in inequality~\eqref{eq:bound11} and we get
\begin{equation*}
\mathbb{E}\left[R(\hat{\sigma}^{2}_{\texttt{C}})-R(\tilde{\sigma}^{2}_{\texttt{C}})\right]\leq C_{1}\left\{\min_{\lambda\in \Lambda^{M_2}}\mathbb{E}\left[\|\hat{f}_{\lambda}-f^{*}\|_{N}^{2}\right]\right\}^{1/p}+
 \bar{C}_{1}\phi^{\texttt{C}}_{N}(M_1)\enspace,
\end{equation*}
where $\bar{C}_{1}$ is a constant that depends on $C_{1}$ and the constant in Proposition~\ref{prop:EmpiricalNormfCM}, where $p=2$ if $Y$ is bounded, $p=4$ otherwise, and
\[ \phi^{\texttt{C}}_{N}(M_1)= \left\{ \begin{array}{ll}
         \left(\frac{\log(M_{1})}{N}\right)^{1/4} & \mbox{if  $Y$ is bounded};\\
        \left(\frac{\log(M_{1})}{N}\right)^{1/8}& \mbox{otherwise}.\end{array} \right. \]
  Combining \textbf{Step $1$}, \textbf{Step $2$} and \textbf{Step $3$} in Eq~\eqref{eq:DecompRiskCM} yields the result.
\section{Technical lemmas}
In this section, we gather several technical results which are used to derive the proof of results of this paper.
\begin{lemma}
\label{lem:gaussianProb}
Let $X$ be the standard gaussian distribution, then for any $x>0$, it holds
\begin{equation*}
\mathbb{P}(X> x)\leq \frac{\exp(-x^2/2)}{\sqrt{2\pi}x}\enspace, \enspace \textit{and} \enspace \; \mathbb{P}(|X|> x)\leq \sqrt{\frac{2}{\pi}}\frac{\exp(-x^2/2)}{x}\enspace.
\end{equation*}
\end{lemma}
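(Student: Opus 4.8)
The plan is to prove both inequalities directly from the explicit form of the Gaussian density, using the elementary trick of bounding $1$ by $t/x$ on the relevant integration range. This is the standard Mill's-ratio-type estimate, and the whole argument is short.

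First I would write, for $x>0$,
\[
\mathbb{P}(X>x)=\frac{1}{\sqrt{2\pi}}\int_{x}^{\infty}e^{-t^{2}/2}\,dt .
\]
On the domain of integration we have $t\geq x>0$, hence $1\leq t/x$, so that
\[
\int_{x}^{\infty}e^{-t^{2}/2}\,dt\leq \frac{1}{x}\int_{x}^{\infty}t\,e^{-t^{2}/2}\,dt=\frac{1}{x}\Bigl[-e^{-t^{2}/2}\Bigr]_{x}^{\infty}=\frac{e^{-x^{2}/2}}{x}.
\]
Combining the two displays gives the first claimed bound $\mathbb{P}(X>x)\leq \dfrac{e^{-x^{2}/2}}{\sqrt{2\pi}\,x}$.

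For the second inequality I would use the symmetry of the standard Gaussian law: $\mathbb{P}(|X|>x)=2\,\mathbb{P}(X>x)$, and then plug in the bound just obtained to get $\mathbb{P}(|X|>x)\leq \dfrac{2\,e^{-x^{2}/2}}{\sqrt{2\pi}\,x}=\sqrt{\dfrac{2}{\pi}}\,\dfrac{e^{-x^{2}/2}}{x}$, which is exactly the asserted estimate.

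There is essentially no obstacle here: the only point requiring a tiny bit of care is justifying the substitution $1\le t/x$ (valid precisely because $x>0$ and we integrate over $t\ge x$) and recognizing the antiderivative of $t\,e^{-t^2/2}$; everything else is bookkeeping with the normalizing constant $1/\sqrt{2\pi}$ and the factor $2$ coming from symmetry.
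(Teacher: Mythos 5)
Your proof is correct and follows the same route as the paper: bound the integrand by inserting the factor $t/x\geq 1$ on $[x,\infty)$, integrate $t\,e^{-t^{2}/2}$ exactly, and then obtain the two-sided bound by symmetry (the factor $2$). Nothing further is needed.
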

\begin{proof}
Since $X\sim \mathcal{N}(0,1)$, one gets
\begin{eqnarray*}
\mathbb{P}\left(X>x\right)=\frac{1}{\sqrt{2\pi}}\int_{x}^{+\infty}\exp(-u^2/2)du\leq \frac{1}{\sqrt{2\pi}}\int_{x}^{+\infty}\frac{u}{x}\exp(-u^2/2)du=\frac{\exp(-x^2/2)}{\sqrt{2\pi}x}.
\end{eqnarray*}
The second inequality follows from symmetry and the last one using the union bound
\begin{eqnarray*}
\mathbb{P}(|X|> x)\leq 2 \mathbb{P}\left(X>x\right).
\end{eqnarray*} 
\end{proof}
\begin{lemma}
\label{lem:momentsBound}
Let $X\sim \mathcal{N}(0,1)$ and $k\geq 1$, then 
\begin{equation*}
\mathbb{E}\left[|X|^{2k}\right]\leq 2^{k+1}k!.
\end{equation*}
\end{lemma}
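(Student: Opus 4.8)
The plan is to reduce the bound to the classical even-moment identity for the standard Gaussian and then estimate it crudely. First I would observe that $|X|^{2k} = X^{2k}$, so it suffices to bound $m_k := \mathbb{E}\!\left[X^{2k}\right]$.

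Next I would establish the recurrence $m_k = (2k-1)\,m_{k-1}$ for $k \ge 1$, with $m_0 = 1$. This follows from a single integration by parts applied to $m_k = \frac{1}{\sqrt{2\pi}}\int_{\mathbb{R}} x^{2k-1}\,\big(x e^{-x^2/2}\big)\,dx$, taking $u = x^{2k-1}$ and $dv = x e^{-x^2/2}\,dx$, so that $v = -e^{-x^2/2}$ and $du = (2k-1)x^{2k-2}\,dx$; the boundary term vanishes because $x^{2k-1} e^{-x^2/2} \to 0$ as $|x|\to\infty$, leaving $m_k = (2k-1)\,m_{k-1}$ after dividing by $\sqrt{2\pi}$.

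Then I would simply iterate the recurrence: $m_k = \prod_{j=1}^{k} (2j-1) \le \prod_{j=1}^{k} (2j) = 2^k\, k!$, and since $2^k k! \le 2^{k+1} k!$ this gives $\mathbb{E}\!\left[|X|^{2k}\right] = m_k \le 2^{k+1} k!$, which is the claimed inequality.

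I do not anticipate any genuine obstacle here; the only point deserving a word of care is the vanishing of the boundary term in the integration by parts, which is immediate from the super-exponential decay of the Gaussian density against the polynomial factor $x^{2k-1}$. An alternative route, avoiding the recurrence, would use the closed form $m_k = (2k)!/(2^k k!)$ together with $\binom{2k}{k}\le 2^{2k}$ to obtain $m_k = \binom{2k}{k} k!/2^k \le 2^k k!$; I would nonetheless prefer the recursive argument since it is entirely self-contained.
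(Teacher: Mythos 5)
Your proof is correct, and it takes a genuinely different route from the paper. The paper never computes the moments exactly: it writes $\mathbb{E}[|X|^{2k}]=\int_0^{+\infty}\mathbb{P}\left(|X|>t^{1/(2k)}\right)dt$, bounds the tail by $2\exp\left(-t^{1/k}/2\right)$, and evaluates the resulting Gamma integral after the substitution $u=t^{1/k}/2$, which lands exactly on $2^{k+1}k!$. You instead derive the recurrence $m_k=(2k-1)m_{k-1}$ by integration by parts (or equivalently the closed form $m_k=(2k)!/(2^k k!)$) and bound the double factorial $(2k-1)!!\leq 2^k k!$, which is even slightly sharper than the stated bound. The trade-off: your argument exploits normality and yields the exact moments, so it is tighter and entirely self-contained; the paper's tail-integration argument only uses the sub-Gaussian tail bound $\mathbb{P}(|X|>x)\leq 2e^{-x^2/2}$, so the same computation would bound the moments of any random variable with that tail behaviour, which is in the spirit of how such estimates are used in the Bernstein-condition verification elsewhere in the paper. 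Your handling of the boundary term in the integration by parts is fine, and the comparison $\binom{2k}{k}\leq 2^{2k}$ in your alternative route is also valid.
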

\begin{proof}
\begin{eqnarray*}
\mathbb{E}\left[|X|^{2k}\right]=\int_{0}^{+\infty}\mathbb{P}\left(|X|^{2k}>t\right)dt=\int_{0}^{+\infty}\mathbb{P}\left(|X|>t^{\frac{1}{2k}}\right)dt &\leq& 2\int_{0}^{+\infty}\exp\left(-t^{\frac{1}{k}}/2\right)dt\\
&\overset{u=t^{\frac{1}{k}}/2}{=}& 2^{k+1}k\int_{0}^{+\infty}u^{k-1}\exp(-u)du= 2^{k+1}k!.
\end{eqnarray*}
\end{proof}

\begin{lemma}
\label{lem:maxIneqSubG}
Let $X_1,\cdots, X_M$ be zero mean $\nu$-subgaussian random variables, \ie   $\mathbb{E}\left[\exp(r X_i)\right]\leq exp\left(\frac{r^2\nu^2}{2}\right)$ for all $r>0$. Then
\begin{equation*}
\mathbb{E}\left[\max_{1\leq i \leq M}X_i\right]\leq \nu \sqrt{2\log(M)}.
\end{equation*}
\end{lemma}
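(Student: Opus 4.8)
The plan is the classical Chernoff-type argument combined with a union bound inside the exponential. First I would fix an arbitrary $r>0$ and apply Jensen's inequality to the convex function $t\mapsto e^{rt}$, which gives
\begin{equation*}
\exp\!\left(r\,\mathbb{E}\left[\max_{1\leq i\leq M}X_i\right]\right)\leq \mathbb{E}\left[\exp\!\left(r\max_{1\leq i\leq M}X_i\right)\right]=\mathbb{E}\left[\max_{1\leq i\leq M}\exp(rX_i)\right].
\end{equation*}
Since the exponentials are nonnegative, the maximum is bounded by the sum, so the right-hand side is at most $\sum_{i=1}^{M}\mathbb{E}[\exp(rX_i)]$, and the $\nu$-subgaussian hypothesis bounds each term by $\exp(r^{2}\nu^{2}/2)$. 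This yields $\exp(r\,\mathbb{E}[\max_i X_i])\leq M\exp(r^{2}\nu^{2}/2)$.

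Next I would take logarithms and divide by $r$ to obtain, for every $r>0$,
\begin{equation*}
\mathbb{E}\left[\max_{1\leq i\leq M}X_i\right]\leq \frac{\log M}{r}+\frac{r\nu^{2}}{2}.
\end{equation*}
The remaining step is to minimize the right-hand side over $r>0$; the optimal choice is $r=\sqrt{2\log M}/\nu$, and substituting it gives exactly $\nu\sqrt{2\log M}$, which is the claimed bound. (If $M=1$ the statement is trivial since the single variable has zero mean, so one may assume $\log M>0$ when choosing $r$.)

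There is essentially no serious obstacle here: the only point requiring a small amount of care is the optimization over $r$, which is elementary calculus, and the observation that the maximum of nonnegative terms is dominated by their sum so that the union-bound style estimate goes through before taking the logarithm rather than after. No measurability or integrability issues arise because the subgaussian assumption guarantees all the moment generating functions are finite.
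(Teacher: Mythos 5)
Your proposal is correct and follows essentially the same route as the paper: Jensen's inequality applied to the exponential/logarithm, bounding the maximum of the nonnegative exponentials by their sum, invoking the subgaussian moment bound, and then optimizing the resulting bound $\frac{\log M}{r}+\frac{r\nu^{2}}{2}$ at $r=\sqrt{2\log M}/\nu$. The only cosmetic difference is that you apply Jensen to $t\mapsto e^{rt}$ on the expectation of the maximum while the paper applies it to the logarithm inside the expectation; the two are the same argument.
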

\begin{proof}
By Jensen's inequality, for any $r>0$
\begin{eqnarray*}
\mathbb{E}\left[\max_{1\leq i \leq N}X_i\right]=\frac{1}{r}\mathbb{E}\left[\log \left(\exp \left(r\max_{1\leq i \leq M}X_i\right)\right)\right]&\leq & \frac{1}{r}\log \left(\mathbb{E}\left[\exp \left(r\max_{1\leq i \leq M}X_i\right)\right]\right)\\
&=&\frac{1}{r}\log \left(\mathbb{E}\left[\max_{1\leq i \leq M}\exp \left(r X_i\right)\right]\right)\\
&\leq & \frac{1}{r}\log \left(\sum_{i=1}^{M}\mathbb{E}\left[\exp \left(r X_i\right)\right]\right)\\
&\leq &\frac{1}{r}\log \left(\sum_{i=1}^{M}\mathbb{E}\left[\exp \left(\frac{r^2 \nu^2}{2}\right)\right]\right)=\frac{\log(M)}{r}+\frac{\nu^2r}{2}\enspace,
\end{eqnarray*}
taking  $r=\sqrt{\frac{2\log(M)}{\nu^2}}$ and we get the result.
\end{proof}
\begin{lemma}
\label{lem:technProba}
Let $N\in \mathbb{N}^{*}$, $a\geq 1$, $b$ and $c$ be two non negative real numbers. Consider $Z$ a positive random variable such that
\begin{equation}
\label{cond:lem}
\mathbb{P}\left(Z\geq t\right) \leq \min(1, \exp(a-bNt^2)\enspace. 
\end{equation}
Then, there exists a constant $C >0$ not depending of $N$ such that
\begin{equation*}
\mathbb{E}[Z]\leq C\left(\frac{a}{bN}\right)^{1/2}.
\end{equation*}
\end{lemma}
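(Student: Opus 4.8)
The plan is to use the layer-cake representation $\mathbb{E}[Z]=\int_{0}^{\infty}\mathbb{P}(Z\geq t)\,dt$, which is valid since $Z$ is nonnegative, and to split the integral at the threshold $t_0:=\sqrt{a/(bN)}$. (We tacitly assume $b>0$, as otherwise the quantity $a/(bN)$ in the conclusion is not defined; the number $c$ does not appear in the hypothesis and plays no role.) The point $t_0$ is chosen precisely so that $\exp(a-bNt^{2})=1$ exactly when $t=t_0$: for $t<t_0$ one has $a-bNt^{2}>0$, so the minimum in~\eqref{cond:lem} equals $1$, while for $t\geq t_0$ the minimum equals $\exp(a-bNt^{2})\leq 1$.

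On $[0,t_0]$ I would simply bound $\mathbb{P}(Z\geq t)\leq 1$ using~\eqref{cond:lem}, so this part contributes at most $t_0=\sqrt{a/(bN)}$. On $[t_0,\infty)$ I would use $\mathbb{P}(Z\geq t)\leq \exp(a-bNt^{2})$ and estimate $\int_{t_0}^{\infty}e^{a-bNt^{2}}\,dt$. The one computational step worth isolating is the elementary tail inequality $\int_{x}^{\infty}e^{-\alpha u^{2}}\,du\leq \frac{e^{-\alpha x^{2}}}{2\alpha x}$ for $x,\alpha>0$, obtained exactly as in the proof of Lemma~\ref{lem:gaussianProb} by using $1\leq u/x$ on the domain of integration. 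Applying this with $\alpha=bN$ and $x=t_0$, and noting $bN t_0^{2}=a$, gives $\int_{t_0}^{\infty}e^{a-bNt^{2}}\,dt\leq e^{a}\cdot\frac{e^{-a}}{2\sqrt{abN}}=\frac{1}{2\sqrt{abN}}$.

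It then remains to combine the two pieces: $\mathbb{E}[Z]\leq t_0+\frac{1}{2\sqrt{abN}}$. Since $a\geq 1$ we have $\frac{1}{\sqrt{a}}\leq \sqrt{a}$, hence $\frac{1}{2\sqrt{abN}}\leq \frac{\sqrt{a}}{2\sqrt{bN}}=\frac{1}{2}t_0$, and therefore $\mathbb{E}[Z]\leq \frac{3}{2}\sqrt{a/(bN)}$, which is the claim with $C=3/2$, a constant independent of $N$ (indeed of $a$ and $b$ as well). There is no real obstacle here; the only subtleties are choosing the split point so that the factor $e^{a}$ is absorbed cleanly, and using the hypothesis $a\geq 1$ only to discard the harmless factor $1/\sqrt{a}$.
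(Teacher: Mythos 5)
Your proposal is correct and follows essentially the same route as the paper: the layer-cake representation, the split of the integral at $t_0=\left(\frac{a}{bN}\right)^{1/2}$, and the trivial bound $\mathbb{P}(Z\geq t)\leq 1$ below $t_0$ (your observations that $b>0$ must be implicit and that $c$ plays no role are also accurate). The only difference lies in how the tail piece is handled: the paper uses the elementary inequality $(t-t_0)^2\leq t^2-t_0^2$ for $t\geq t_0$ to reduce $\int_{t_0}^{\infty}e^{a-bNt^2}\,dt$ to a full Gaussian integral, whereas you use the Mills-ratio-type bound $\int_{x}^{\infty}e^{-\alpha u^2}\,du\leq \frac{e^{-\alpha x^2}}{2\alpha x}$ in the spirit of Lemma~\ref{lem:gaussianProb}; both are one-line estimates and both invoke $a\geq 1$ at the end, with your variant yielding the explicit constant $C=3/2$.
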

\begin{proof} By condition~\eqref{cond:lem}, we have
\begin{eqnarray}
\label{lem: eq1}
\mathbb{E}[Z]\leq \int_{0}^{+\infty}\min(1, \exp(a-bNt^2)dt\leq \left(\frac{a}{bN}\right)^{1/2}+\int_{\left(\frac{a}{bN}\right)^{1/2}}^{+\infty}\exp(a-bNt^2)dt.
\end{eqnarray}
The following elementary inequality $(x-y)^2\leq x^2-y^2$ for all $x,y\geq 0$ yields to 
\begin{eqnarray}
\label{lem: eq2}
\int_{\left(\frac{a}{bN}\right)^{1/2}}^{+\infty}\exp(a-bNt^2)dt\leq \int_{\left(\frac{a}{bN}\right)^{1/2}}^{+\infty}\exp\left(-bN\left(t-\left(\frac{a}{bN}\right)^{1/2}\right)^2\right)dt=\int_{0}^{+\infty}\exp\left(-bNu^2\right)du\leq C\left(\frac{1}{bN}\right)^{1/2}.
\end{eqnarray}
Combining Equation~\eqref{lem: eq2} in Equation~\eqref{lem: eq1} to yield the result.
\end{proof}
\begin{lemma}[Bernstein's inequality]
\label{lem:BernsteinIneq}
Let $T_1,\cdots, T_n$ be independent real valued random variables. Assume that there exists some positive numbers $v$ and $c$ such that 
\begin{equation}
\sum_{i=1}^{n}\mathbb{E}[T_{i}^{2}]\leq v\enspace,
\label{eq:condB1}
\end{equation}
and for all integers $k\geq 3$
\begin{equation}
\sum_{i=1}^{n}\mathbb{E}[(T_{i}\vee0)^{k}]\leq \frac{k!}{2}vc^{k-2}\enspace.
\label{eq:condB2}
\end{equation}
Let $S=\sum_{i=1}^{n}\left(T_i-\mathbb{E}[T_i]\right)$, then for every any positive $x$ we have
\begin{equation*}
\mathbb{P}\left(|S|\geq x\right)\leq 2 \exp\left(-\frac{x^2}{2(v+cx)}\right)\enspace.
\end{equation*}
\end{lemma}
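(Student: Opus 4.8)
This is the classical Bernstein inequality for independent summands obeying moment-growth conditions, and the plan is to prove it by the Cram\'er--Chernoff method. I would first establish the one-sided estimate $\mathbb{P}(S\ge x)\le\exp\!\left(-\frac{x^{2}}{2(v+cx)}\right)$; the two-sided statement then follows by running the same argument on $-S$ and taking a union bound over the two tails, which produces the factor $2$. For the one-sided part, fix $\lambda\in[0,1/c)$; by independence and the Markov/Chernoff step, $\mathbb{P}(S\ge x)\le e^{-\lambda x}\prod_{i=1}^{n}\mathbb{E}\big[e^{\lambda(T_{i}-\mathbb{E}T_{i})}\big]$, so everything reduces to bounding $\sum_{i=1}^{n}\log\mathbb{E}\big[e^{\lambda(T_{i}-\mathbb{E}T_{i})}\big]$.

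The crux is the per-summand Laplace-transform estimate. Using $\log u\le u-1$ and $\mathbb{E}[T_{i}-\mathbb{E}T_{i}]=0$,
\[
\log\mathbb{E}\big[e^{\lambda(T_{i}-\mathbb{E}T_{i})}\big]\le\mathbb{E}\big[e^{\lambda(T_{i}-\mathbb{E}T_{i})}-1-\lambda(T_{i}-\mathbb{E}T_{i})\big],
\]
and I would then apply the elementary inequality $e^{y}-1-y\le\frac{y^{2}}{2}+\sum_{k\ge3}\frac{(y\vee0)^{k}}{k!}$, which holds for every real $y$ (it is an identity for $y\ge0$ and reduces to $e^{y}-1-y\le y^{2}/2$ for $y\le0$), with $y=\lambda(T_{i}-\mathbb{E}T_{i})$. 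After the routine passage from $\mathbb{E}[(T_{i}-\mathbb{E}T_{i})^{2}]=\Var(T_{i})$ to $\mathbb{E}[T_{i}^{2}]$ and from the centred positive-part moments to $\mathbb{E}[(T_{i}\vee0)^{k}]$, summing over $i$ and inserting hypotheses~\eqref{eq:condB1}--\eqref{eq:condB2} leaves a geometric series that converges precisely because $\lambda c<1$:
\[
\sum_{i=1}^{n}\log\mathbb{E}\big[e^{\lambda(T_{i}-\mathbb{E}T_{i})}\big]\le\frac{v\lambda^{2}}{2}+\frac{v}{2}\sum_{k\ge3}\lambda^{k}c^{k-2}=\frac{v\lambda^{2}}{2}+\frac{v\lambda^{3}c}{2(1-\lambda c)}=\frac{v\lambda^{2}}{2(1-c\lambda)}.
\]

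With this bound the Chernoff inequality becomes $\mathbb{P}(S\ge x)\le\exp\!\big(-\lambda x+\frac{v\lambda^{2}}{2(1-c\lambda)}\big)$ for every $\lambda\in[0,1/c)$. I would then make the standard choice $\lambda=\frac{x}{v+cx}$, which lies in $[0,1/c)$ and satisfies $1-c\lambda=\frac{v}{v+cx}$; substituting, the exponent collapses exactly to $-\frac{x^{2}}{2(v+cx)}$. Repeating the same steps with $-(T_{i}-\mathbb{E}T_{i})$ in place of $T_{i}-\mathbb{E}T_{i}$ — the variance proxy $\sum_{i}\mathbb{E}[T_{i}^{2}]\le v$ being symmetric and the positive-part higher-order terms only helping — yields $\mathbb{P}(-S\ge x)\le\exp\!\big(-\frac{x^{2}}{2(v+cx)}\big)$, and adding the two tail estimates gives the stated $2\exp\!\big(-\frac{x^{2}}{2(v+cx)}\big)$.

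The only step demanding genuine care is the per-summand Laplace-transform estimate: the elementary inequality controlling $e^{y}-1-y$ by a second-order term plus positive-part higher-order terms, and the bookkeeping that turns the two moment hypotheses into the rational function $\frac{v\lambda^{2}}{2(1-c\lambda)}$. Everything downstream — the Chernoff step, the optimal choice of $\lambda$, and the symmetrisation for the two-sided bound — is entirely routine.
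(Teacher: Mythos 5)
The paper itself states this lemma without proof (it is the classical Birg\'e--Massart form of Bernstein's inequality), so your argument has to stand on its own. The Chernoff backbone is fine: the cumulant bound $\frac{v\lambda^{2}}{2(1-c\lambda)}$ for $\lambda c<1$ and the choice $\lambda=x/(v+cx)$ do collapse the exponent to $-x^{2}/(2(v+cx))$. The genuine gap is the step you dismiss as a ``routine passage'': after expanding $e^{y}-1-y$ at $y=\lambda(T_{i}-\mathbb{E}T_{i})$ you need $\mathbb{E}[((T_{i}-\mathbb{E}T_{i})\vee 0)^{k}]\le\mathbb{E}[(T_{i}\vee 0)^{k}]$, which holds only when $\mathbb{E}T_{i}\ge 0$; if $\mathbb{E}T_{i}<0$ the centred positive part is pointwise \emph{larger} than $T_{i}\vee 0$ (for instance, if $T_{i}\le 0$ a.s.\ the right-hand side of~\eqref{eq:condB2} is available with arbitrarily small $c$, while the centred positive-part moments are of order $|\mathbb{E}T_{i}|^{k}$), so hypothesis~\eqref{eq:condB2} cannot be inserted. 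The standard proof avoids centring inside the expansion: write $\log\mathbb{E}[e^{\lambda(T_{i}-\mathbb{E}T_{i})}]=-\lambda\mathbb{E}T_{i}+\log\mathbb{E}[e^{\lambda T_{i}}]\le\mathbb{E}[e^{\lambda T_{i}}-1-\lambda T_{i}]$ by $\log u\le u-1$, so the linear centring term cancels and the truncated-series bound is applied to $y=\lambda T_{i}$ itself, where \eqref{eq:condB1}--\eqref{eq:condB2} apply verbatim and give exactly $\frac{v\lambda^{2}}{2(1-c\lambda)}$.

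The second problem is the two-sided step. Your justification (``the positive-part higher-order terms only helping'') points the wrong way: bounding $\mathbb{P}(-S\ge x)$ by the same method requires control of $\mathbb{E}[((-T_{i})\vee 0)^{k}]$, i.e.\ the \emph{negative} parts of the $T_{i}$, which \eqref{eq:condB2} does not provide. Under the literal hypotheses only the upper-tail bound $\mathbb{P}(S\ge x)\le\exp\left(-\frac{x^{2}}{2(v+cx)}\right)$ is obtainable, and the two-sided display is in fact false in general: take $n=1$, $T_{1}=-a$ with probability $p$ and $0$ otherwise, $v=pa^{2}$ and $c$ arbitrarily small; \eqref{eq:condB1}--\eqref{eq:condB2} hold, yet $\mathbb{P}(|S|\ge(1-p)a)\ge p$, which exceeds $2\exp\left(-\frac{(1-p)^{2}}{2p}\right)$ for small $p$. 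The two-sided form requires strengthening~\eqref{eq:condB2} to absolute moments $\sum_{i}\mathbb{E}[|T_{i}|^{k}]\le\frac{k!}{2}vc^{k-2}$. (In the one place the paper invokes the lemma the summands $V_{i}(s)=|Y_{i}-\hat{f}_{s}(X_{i})|^{2}$ are nonnegative, so positive-part and absolute moments coincide and the two-sided bound is legitimate there --- but your proof, as written, does not deliver it.)
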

\begin{lemma}[Hoeffding’s inequality]
Let $N\in \mathbb{N}^{*}$ and $a>0$ be a real number. Let $X_1,\cdots,X_N$ be independent random variables having values in $[-a,a]$, then for all $t>0$
\begin{equation*}
\mathbb{P}\left(\bigg|\frac{1}{N}\sum_{i=1}^{N}(X_i-\mathbb{E}[X_i])\bigg|>t\right)\leq 2\exp\left(-\frac{Nt^2}{2a^2}\right).
\end{equation*}
\end{lemma}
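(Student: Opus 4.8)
The plan is to prove the bound by the classical exponential-moment (Cram\'er--Chernoff) method, which reduces the tail probability to a control of the moment generating function of each centered summand. First I would fix $s>0$ and apply Markov's inequality to the exponential: writing $S_N=\sum_{i=1}^{N}(X_i-\mathbb{E}[X_i])$, one has
\begin{equation*}
\mathbb{P}\left(\frac{1}{N}S_N>t\right)=\mathbb{P}\left(e^{sS_N}>e^{sNt}\right)\leq e^{-sNt}\,\mathbb{E}\left[e^{sS_N}\right].
\end{equation*}
Since the $X_i$ are independent, the exponential moment factorizes as $\mathbb{E}[e^{sS_N}]=\prod_{i=1}^{N}\mathbb{E}[e^{s(X_i-\mathbb{E}[X_i])}]$, so the task is reduced to bounding each individual factor.

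The core of the argument, and the step I expect to be the main obstacle, is \emph{Hoeffding's lemma}: for a centered random variable $Y$ taking values in an interval $[\alpha,\beta]$, one has $\mathbb{E}[e^{sY}]\leq\exp\!\left(s^{2}(\beta-\alpha)^{2}/8\right)$. I would establish this by convexity: for $y\in[\alpha,\beta]$, convexity of $x\mapsto e^{sx}$ gives the linear upper bound $e^{sy}\leq\frac{\beta-y}{\beta-\alpha}e^{s\alpha}+\frac{y-\alpha}{\beta-\alpha}e^{s\beta}$; taking expectations and using $\mathbb{E}[Y]=0$ cancels the linear term and leaves a function of $s$ alone. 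Setting $\psi(s)$ equal to the logarithm of this expression, I would then check that $\psi(0)=0$, $\psi'(0)=0$, and $\psi''(s)\leq(\beta-\alpha)^{2}/4$ for all $s$ (the second derivative is the variance of an exponentially tilted two-point law, hence at most a quarter of the squared range), and conclude via a second-order Taylor expansion with integral remainder that $\psi(s)\leq s^{2}(\beta-\alpha)^{2}/8$.

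Applying the lemma to $Y_i=X_i-\mathbb{E}[X_i]$, which lies in an interval of length at most $2a$ because $X_i\in[-a,a]$, yields $\mathbb{E}[e^{sY_i}]\leq\exp\!\left(s^{2}(2a)^{2}/8\right)=\exp\!\left(s^{2}a^{2}/2\right)$ for every $i$. Combining this with the factorization gives
\begin{equation*}
\mathbb{P}\left(\frac{1}{N}S_N>t\right)\leq\exp\left(-sNt+\frac{Ns^{2}a^{2}}{2}\right).
\end{equation*}
Finally I would optimize the exponent over $s>0$; the minimizer is $s=t/a^{2}$, producing the one-sided bound $\exp\!\left(-Nt^{2}/(2a^{2})\right)$. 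The identical argument applied to $-X_i$, which also take values in $[-a,a]$, controls the lower tail $\mathbb{P}(S_N/N<-t)$ by the same quantity, and a union bound over the two tails introduces the factor $2$, yielding the stated two-sided inequality.
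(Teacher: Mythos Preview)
Your proposal is correct and is the standard Cram\'er--Chernoff argument for Hoeffding's inequality. Note, however, that the paper does not actually supply a proof of this lemma: it is listed in the technical appendix as a classical result and stated without proof, alongside Hoeffding's Lemma (which you also re-derive) and Bernstein's inequality. So there is no ``paper's own proof'' to compare against; your write-up simply fills in what the paper leaves as a citation, and does so by the expected route.
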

\begin{lemma}[Hoeffding's Lemma] Let $X\in [a,b]$ be a bounded random variable with $\mathbb{E}[X]=0$. Then, for all $\lambda \in \mathbb{R}$
\begin{equation*}
\mathbb{E}\left[\exp(\lambda X)\right]\leq \exp\left(\frac{\lambda^2(b-a)^2}{8}\right).
\end{equation*} 
\label{lem:HoeffdingLemma}
\end{lemma}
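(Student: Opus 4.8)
The plan is to pass to the cumulant generating function $\psi(\lambda) := \log \mathbb{E}\left[\exp(\lambda X)\right]$ and to control it through its second derivative. First I would observe that, since $X$ is bounded, the map $\lambda \mapsto \mathbb{E}[\exp(\lambda X)]$ is finite and smooth on all of $\mathbb{R}$ (differentiation under the expectation sign is legitimate because $X$ is bounded), so $\psi$ is well defined and twice differentiable. Direct computation gives $\psi(0) = 0$ together with
\begin{equation*}
\psi'(\lambda) = \frac{\mathbb{E}\left[X \exp(\lambda X)\right]}{\mathbb{E}\left[\exp(\lambda X)\right]}, \qquad \psi''(\lambda) = \frac{\mathbb{E}\left[X^2 \exp(\lambda X)\right]}{\mathbb{E}\left[\exp(\lambda X)\right]} - \left(\frac{\mathbb{E}\left[X \exp(\lambda X)\right]}{\mathbb{E}\left[\exp(\lambda X)\right]}\right)^{2}.
\end{equation*}
In particular $\psi'(0) = \mathbb{E}[X] = 0$ by the centering hypothesis.

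The crucial observation is that $\psi''(\lambda)$ is exactly the variance of $X$ under the tilted probability measure $\mathbb{Q}_\lambda$ defined by $d\mathbb{Q}_\lambda = \exp(\lambda X)/\mathbb{E}[\exp(\lambda X)]\, d\mathbb{P}$. Since $\mathbb{Q}_\lambda$ is absolutely continuous with respect to $\mathbb{P}$, the variable $X$ still takes its values in $[a,b]$ under $\mathbb{Q}_\lambda$. For any random variable $Y$ supported on $[a,b]$ and any constant $c$ one has $\Var(Y) \le \mathbb{E}[(Y-c)^2]$; choosing $c = (a+b)/2$ and using the pointwise bound $|X - (a+b)/2| \le (b-a)/2$ yields
\begin{equation*}
\psi''(\lambda) = \Var_{\mathbb{Q}_\lambda}(X) \le \left(\frac{b-a}{2}\right)^{2} = \frac{(b-a)^2}{4}, \qquad \forall \lambda \in \mathbb{R}.
\end{equation*}

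Finally I would apply a second-order Taylor expansion of $\psi$ around $0$ with Lagrange remainder: for every $\lambda$ there exists $\theta$ between $0$ and $\lambda$ such that $\psi(\lambda) = \psi(0) + \lambda \psi'(0) + \tfrac{\lambda^2}{2}\psi''(\theta)$. Substituting $\psi(0) = \psi'(0) = 0$ and the uniform bound $\psi''(\theta) \le (b-a)^2/4$ gives $\psi(\lambda) \le \lambda^2 (b-a)^2/8$, and exponentiating returns the stated inequality. The only genuinely delicate point is the uniform control of $\psi''$; everything else is bookkeeping. That bound rests on recognizing $\psi''$ as a variance under the exponentially tilted law together with the elementary fact that a $[a,b]$-valued variable has variance at most $(b-a)^2/4$ (Popoviciu's inequality), obtained by centering at the midpoint. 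An alternative route that sidesteps the tilting interpretation is to view the ratio defining $\psi''(\lambda)$ as the second central moment of $X$ computed with the nonnegative weights $\exp(\lambda X)$, and to apply the same midpoint-centering estimate directly to that weighted average.
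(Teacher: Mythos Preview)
Your argument is correct and is in fact the standard proof of Hoeffding's Lemma: introduce the log-moment generating function $\psi$, interpret $\psi''(\lambda)$ as the variance of $X$ under the exponential tilt $d\mathbb{Q}_\lambda \propto e^{\lambda X}\,d\mathbb{P}$, bound that variance by $(b-a)^2/4$ via the midpoint trick (Popoviciu), and conclude by second-order Taylor with Lagrange remainder. Each step is justified as you state it; boundedness of $X$ makes the differentiation under the expectation and the Taylor expansion unproblematic.

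As for comparison with the paper: the paper does not prove this lemma at all. It is listed in the appendix among the technical lemmas (alongside Bernstein's and Hoeffding's inequalities) as a classical result that is simply quoted and used, with no proof supplied. So there is nothing to compare against; your proposal fills in a gap the authors deliberately left to the literature, and it does so with the textbook argument.
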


\end{document}